\newcommand{\citeayp}[1]{%
  [%
    \def\citeaypseparator{}%
    \def\do##1{%
      \citeaypseparator%
      \citeauthor{##1},~\citeyear{##1}%
      \def\citeaypseparator{; }%
    }%
    \docsvlist{#1}%
  ]%
}
\declaretheorem[name=Theorem]{theorem}
\declaretheorem[name=Lemma, numberlike=theorem]{lemma}
\declaretheorem[name=Proposition, numberlike=theorem]{proposition}
\declaretheorem[name=Remark, numberlike=theorem]{remark}
\declaretheorem[name=Definition, numberlike=theorem]{definition}
\declaretheorem[name=Example]{example}
\declaretheorem[name=Definition, numbered=no]{definition*}
\declaretheorem[name=Theorem, numbered=no]{theorem*}
\declaretheorem[name=Corollary, numbered=no]{corollary*}
\declaretheorem[name=Lemma, numbered=no]{lemma*}
\declaretheorem[name=Proposition, numbered=no]{proposition*}
\crefname{equation}{}{Equations}
\newcommand{\notion}{\ensuremath{\Gamma}{\sc lr}\xspace}
\definecolor{darkgreen}{rgb}{0.0, 0.5, 0.0} 
\newcommand{\LG}[1]{}
\newcommand{\luigi}[1]{}
\newcommand{\seb}[1]{}
\newcommand{\EM}[1]{}
\newcommand{\SW}[1]{}
\newcommand{\changed}[1]{%
{#1}}
\newcommand{\cchanged}[1]{%
{#1}}
\newcommand{\HL}[1]{\textcolor{black}{#1}}
\definecolor{BrightTeal}{rgb}{0.70, 0.85, 1.00} 
\def\@fnsymbol#1{\ensuremath{\ifcase#1\or \spadesuit\or \diamondsuit\or
   \mathsection\or \mathparagraph\or \|\or **\or \dagger\dagger
   \or \ddagger\ddagger \else\@ctrerr\fi}}
\newcommand{\SIM}[1]{\ensuremath{\mathrm{SIm} (#1)} \xspace}
\newcommand{\SeqA}{\ensuremath{\mathsf{Seq}(\calA)}\xspace}
\newcommand{\cat}{\mathbin{\smallfrown}}
\begin{document}

\runningtitle{%
Identifiable Linear Properties of Next-token Predictors in Language Modeling}

\runningauthor{E. Marconato, S. Lachapelle, S. Weichwald, L. Gresele}

\twocolumn[

\aistatstitle{All or None: Identifiable Linear Properties of \\Next-Token Predictors in Language Modeling}

\aistatsauthor{Emanuele Marconato$^\clubsuit$
\And Sébastien Lachapelle \And  Sebastian Weichwald$^\spadesuit$ \And Luigi Gresele$^\spadesuit$
}

\aistatsaddress{University of Trento %
\And   SAIT AI Lab, Montreal \And  University of Copenhagen \And  University of Copenhagen }  ]

\begin{abstract}

\looseness-1 We analyze %
identifiability as a possible explanation for the ubiquity of linear properties across language models,
such as the vector difference between the representations of “easy” and “easiest” being parallel to that between “lucky” and “luckiest”.
For this, we ask 
whether finding a linear property in one model implies that any model that induces the same distribution has that property, too.
To answer that, we first prove an identifiability result to characterize distribution-equivalent next-token predictors, 
lifting
a diversity requirement
of previous results.
Second, based on a refinement of
relational linearity
\citeayp{paccanaro2001learning, hernandez2023linearity},
we show how many 
notions of linearity 
are amenable to our analysis.
Finally, we show that under suitable conditions, these linear properties either hold in 
all or none
distribution-equivalent 
next-token predictors.
\end{abstract}

\section{Introduction}

\begin{figure*}[!t]
    \centering
    \includegraphics[width=0.85\textwidth]{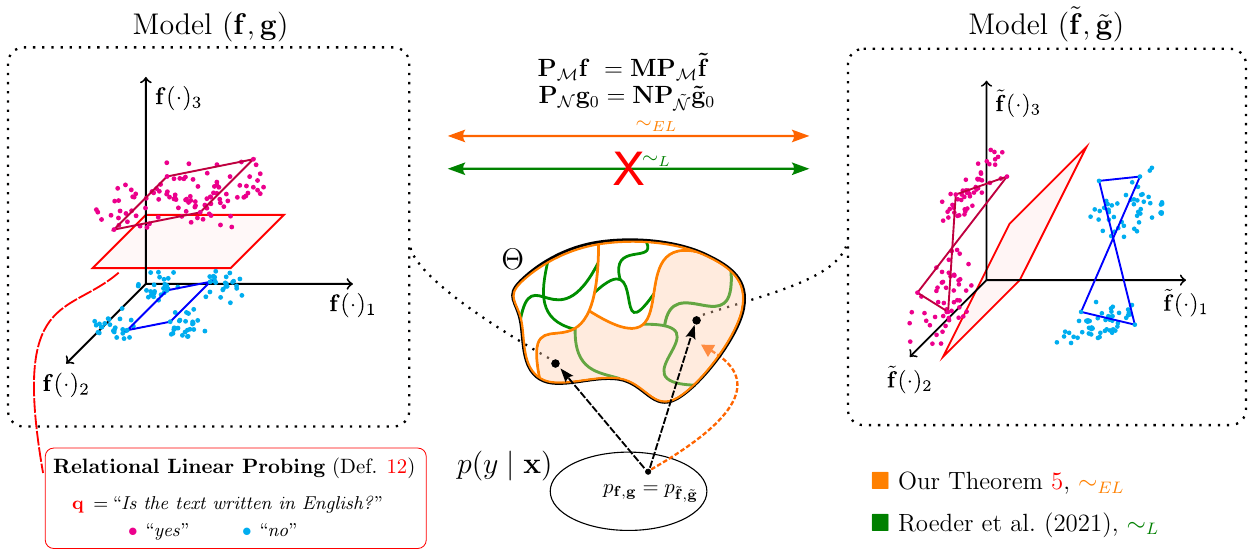}
    \caption{
    \textbf{Identifiability of linear properties.}
    Plots in the left and right dotted squares show the embeddings of two next-token predictors $(\vf, \vg), (\tilde \vf, \tilde \vg) \in \Theta$ that generate the same distribution $p_{\vf, \vg} = p_{\tilde \vf, \tilde \vg}$
    within a set of conditionals distributions
    $p(y \mid \vx)$. %
    \cref{thm:partial-identifiability} %
    proves a one-to-one correspondence (the {dashed} \textcolor{orange}{\textbf{orange}} arrow) between conditional %
    distributions and \textcolor{orange}{$\sim_{EL}$}-equivalent models (the \textcolor{orange}{\textbf{orange}} partitions of $\Theta$).
    This extends a %
    result by \citet{roeder2021linear} characterizing
    \textcolor{darkgreen}{$\sim_L$}-equivalent models 
    (\textcolor{darkgreen}{\textbf{green}} partitions of $\Theta$).
    Here $(\vf, \vg) \textcolor{orange}{\sim_{EL}} (\tilde \vf, \tilde \vg)$ 
    {while} the embedding representations  are not equal up to a linear transformation (thus $(\vf, \vg)  \textcolor{darkgreen}{\not \sim_{L}} (\tilde \vf, \tilde \vg)$), as shown by how the \textcolor{purple}{\textbf{purple}} and \textcolor{blue}{\textbf{blue}} {parallelograms} in the embeddings of the left model $(\vf, \vg)$ get distorted in those of the right model $(\tilde \vf, \tilde \vg)$%
    .
    Both models display {relational linear probing} %
    for the query
    \textcolor{red}{$\vq$}$=$``\textit{Is the text written in English?}'': one can linearly separate the embeddings of textual inputs 
    which, when concatenated with \textcolor{red}{$\vq$}, %
    have ``\textcolor{magenta}{\textit{yes}}'' as the likeliest next token, from those that yield ``\textcolor{cyan}{\textit{no}}''. 
    In \cref{prop:part-id-lin-rep-tentative}, we provide conditions under which all or none of the models in the $\textcolor{orange}{\sim_{EL}}$ equivalence class share the same linear property.}
    \label{fig:second-page}
\end{figure*}

In natural language processing, it is well-established that linear relationships between high-dimensional, real-valued vector representations of textual inputs reflect semantic and syntactic patterns.
This was motivated in seminal works~\citep{rumelhart1973model, hinton1986distributed, hinton1986learning, rumelhart1986learning,  bengio2000neural} and extensively validated in word embedding models~\citep{mikolov2013distributed, mikolov2013linguistic, pennington2014glove} as well as modern large language models trained for next-token prediction~\citep{%
burns2022discovering,
merullo2023language,
tigges2023linear,
pal2023future,
gurnee2023language,
bricken2023monosemanticity}. %

This ubiquity is puzzling, as different internal representations can produce identical next-token distributions, resulting in distribution-equivalent but internally distinct models. This raises a key question: {\bf Are the observed linear properties shared across all models with the same next-token distribution?}
Our \textbf{main result} is a mathematical proof that, under suitable conditions, %
certain linear properties hold for either all or none of the equivalent models generating a given next-token distribution. %
We demonstrate this through three main contributions.

The \textbf{first main contribution} (\cref{sec:identifiability})
is %
an identifiability result characterizing distribution-equivalent next-token predictors.
Our result is a generalization of the main theorems by~\citet{roeder2021linear} and~\citet{khemakhem2020ice}, relaxing the assumptions of diversity and equal representation dimensionality.
This result is
of independent interest for research on identifiable representation learning since our analysis is applicable to several discriminative models beyond next-token prediction%
~\citep{roeder2021linear}.

\footnotetext{\hspace{-0.89em} \textsuperscript{$\clubsuit$} Work done while at the University of Copenhagen.}  
\footnotetext{\hspace{-0.89em} \textsuperscript{$\spadesuit$} Shared last author.}

Our \textbf{second main contribution} (\cref{sec:linearity}) is to subsume several linear properties in a common framework.
We start by defining an analogue to {\em relational linearity}~\citep{paccanaro2001learning}, where the definition only relies on terms appearing in our identifiability result.
The %
key idea is to represent entities as vectors, binary relations as matrices, and to model the operation of applying a relation to an entity through matrix-vector multiplication, which yields the vector corresponding to the related entity. 
For example, in the sentence \textit{``Jimi Hendrix plays the guitar''}, the relation between the entities \textit{``Jimi Hendrix''} and \textit{``the guitar''} is signified by the word \textit{``plays''} and encoded as a matrix-vector multiplication in representation space. %
We then define relational counterparts to  linearity properties 
described and analyzed in previous %
works~\citep{
arora2016latent,
allen2019analogies,
park2023linear,
heinzerling2024monotonic}, %
thus making them amenable to our analysis.

Our \textbf{third main contribution}  (\cref{sec:implications}) is to show that under suitable conditions, these linear properties either hold in all or none of the %
models generating a given distribution.
For this, we combine the definitions in~\cref{sec:linearity} and our characterization of distribution-equivalent next-token prediction models in~\cref{sec:identifiability}. 
Identifiability theory thus
enables us to explain what linearity properties are shared across language models which are equivalent next-token predictors.
We illustrate 
this result in~\cref{fig:second-page}.

Lastly, in~\cref{sec:discussion}
we discuss
implications of our findings
and 
in~\cref{sec:related_work}
we discuss connections to related works
and future research directions%
. %

\section{Preliminaries}

\textbf{Notation}. Italic font letters denote scalars, \eg $a$; bold font lower-case letters denote vectors and sequences, \eg $\vx$; and bold font upper-case letters denote matrices, \eg $\vM$. %
{We use $\vM^+$ to denote the pseudo-inverse of $\vM$.}
We use the short-hand $[k] = \{1, \ldots, k\}$.
Given a {finite} dictionary of tokens $\calA$, the space of all possible finite sequences  (or sentences) is denoted by $\SeqA$\changed{, which is the power set of $\calA$}. 
With $\vx_{1:t}$, we denote the sub-sequence $(\vx_1, ..., \vx_t)$ of a sequence $\vx$, \ie $\vx = (\vx_1, \vx_2, ..., \vx_t, ..., \vx_T) \in \SeqA$. %
We use the symbol $\cat$ to denote the concatenation of two elements of $\SeqA$, \eg 
$\vx_1 \cat \vx_2 = (\vx_{1,1}, ..., \vx_{1,k}) \cat (\vx_{2,1}, ..., \vx_{2,l}) = (\vx_{1,1}, ..., \vx_{1,k},\vx_{2,1}, ..., \vx_{2,l}) \in \SeqA$.
For a function $\vh$, we denote its image by $\text{Im}(\vh)$.
For a $k$-dimensional subspace $\calH \subseteq \bbR^d$ spanned by an orthonormal basis $\{ \vs_1,...,\vs_k\}$, we use ${\vP_\calH= \sum_{l=1}^k \vs_l \vs_l^\top \in\bbR^{d\times d}}$ to denote the orthogonal projector onto that space.

\textbf{Next-token predictors}.
Here, we introduce the general form of next-token predictors used in our analysis.\footnote{In \cref{sec:app-transformers} we show that decoder-only transformer models {can be expressed in this form}~\citep{roeder2021linear}. %
} 
We consider models which take text sequences $\vx \in \SeqA$ of tokens $\calA$ as input. 
A next-token predictor $(\vf, \vg)$ consists of two functions: $\vf: \SeqA \to \bbR^d$ maps sequences to their representations, called \textit{embeddings}, and $\vg: \calA \to \bbR^d $ maps tokens to their representations, called \textit{unembeddings}. Let $\Theta_d$ be the set of all tuples $(\vf, \vg)$ with representation dimensionality $d$ and let $\Theta := \bigcup_{d=1}^\infty \Theta_d$ be the set of all tuples with any dimensionality $d$. 
A next-token predictor models the conditional distribution of the next-token $x_{t+1}$ given the context $\vx_{1:t}$ as
\[  \label{eq:next-token-predictor}
    p_{\vf, \vg}(x_{t+1} \mid  \vx_{1:t}) := \frac{ 
    \exp( \vf(\vx_{1:t})^\top \vg(x_{t+1}) ) 
    }{Z(\vx_{1:t})}, 
\]
where $ Z( \vx_{1:t}) := \sum_{y \in \calA} \exp[\vf(\vx_{1:t})^\top \vg(y)]$ is a normalizing constant. 
Models of the form in Equation~\eqref{eq:next-token-predictor} are trained to maximize the {conditional} log-likelihood of the data. For a data distribution $p_\calD$ over sequences $\vx \in \SeqA$, its log-likelihood is given by:
\[  \label{eq:log-likelihood}
    \textstyle
    \calL (\vf, \vg) := \bbE_{{\vx \sim}p_\calD}\left[  \sum_{t=1}^{T(\vx)-1} \log p_{\vf, \vg} (x_{t+1} | \vx_{1:t}) \right],
\]
where $T(\vx)$ indicates the length of the sequence $\vx$. 
For a fixed representation dimensionality $d$, the next-token prediction objective  can be written as
\[  \textstyle
    \label{eq:learning-objective}
    \max_{(\vf, \vg) \in \Theta_d} \;  \calL(\vf,\vg) \, . %
\]
\changed{We consider a setting where both 
$\vf$ and $\vg$ are nonparametric functions, so the model’s expressivity is determined solely by the parameter
$d$, corresponding to the dimensionality of the representation space.}

\textbf{Representation dimensionality and approximation capacity.}
In theory and for real-valued inputs, models of the form in Equation~\eqref{eq:next-token-predictor} have been proven to be universal approximators,
\ie they can approximate any conditional distribution $p(x_{t+1} \mid \vx_{1:t})$ to arbitrary precision, given a sufficiently large representation dimensionality $d$
\citep{khemakhem2020ice}; similar results may apply to next-token predictors.
In practice, even if a representation
dimensionality of $d$
may be sufficient to represent 
a given distribution well, 
different practitioners may choose models with representations of different dimensionality. The linear identifiability results by \citet{khemakhem2020ice, roeder2021linear} cannot be applied in this setting, since they consider models with equal representation dimensionality. 
Our \cref{thm:partial-identifiability} alleviates this tension between theory and practice: It characterizes identifiability of next-token predictors modeling the same conditional distribution irrespective of their representation dimensionality.

\section{Identifiability of next-token predictors}
\label{sec:identifiability}

We introduce a novel identifiability analysis for the model in Equation~\eqref{eq:next-token-predictor}. In general, a statistical model $p_\vtheta(\vx)$ parameterized by $\vtheta$ is said to be \textit{identifiable up to an equivalence relation} $\sim$ %
\changed{in the model class $\Theta$}  if 
$p_\vtheta = p_{\tilde\vtheta} \implies \vtheta \sim \tilde\vtheta$. In other words, if two parametrizations $\vtheta, \tilde\vtheta$ yield the same distribution, then they coincide under the equivalence relation $\sim$. The precise notion of equivalence depends on the problem setting. Although it is less commonly discussed, this implication can often be shown to hold also in the other direction so that $\sim$ is effectively a characterization of distribution-equivalent models, \ie $p_\vtheta = p_{\tilde\vtheta} \iff \vtheta \sim \tilde\vtheta$. 
In this section, we define an equivalence relation over tuples $(\vf, \vg) \in \Theta$ that characterizes models that entail the same next-token distribution. In other words, we want to define an equivalence relation $\sim$ over $\Theta$ such that $p_{\vf, \vg} = p_{\tilde\vf, \tilde\vg} \iff (\vf, \vg) \sim (\tilde\vf, \tilde\vg)$; we then say $(\vf, \vg)$, $(\tilde\vf, \tilde\vg)$ are $\sim$-equivalent. %
Our characterization applies to pairs of models having different dimensionalities, i.e., $d \not=\tilde d$, as opposed to previous works only considering $d =\tilde d$ \citep{khemakhem2020variational, khemakhem2020ice, roeder2021linear, lachapelle2023synergies}. 

Previous works %
have shown that, \changed{under an assumption known as {\em variability}
\citep{khemakhem2020variational, khemakhem2020ice, lachapelle2023synergies} or {\em diversity condition} \citep{roeder2021linear}}, %
the representations extracted by distributionally-equivalent models
are equal up to a linear invertible transformation.
Intuitively, %
requires that at least one model $(\vf, \vg)$ ``spans'' the whole representation space. To formally state the condition, we define 
the linear space spanned by the 
image $\text{Im}(\vh) \subseteq \bbR^d$
of a function
$\vh$ as
$\SIM \vh :=  \mathrm{span}\{\vv \mid \vv \in\mathrm{Im}(\vh)\}$.
Additionally, for the unembeddings, we choose an arbitrary token $y_0 \in \calA$ as a {\em pivot} for the remainder of the paper and define:
\[
    \vg_0(y) := \vg(y) - \vg(y_0)
\]
for all tokens $y\in \calA$. 
The diversity condition %
can then be defined as follows:
\begin{definition}[Diversity condition]
\label{def:diversity-condition}
    We say that a model $(\vf, \vg)$ with representation dimensionality $d$ satisfies the diversity condition if $\SIM{\vf} = \SIM{\vg_0} = \bbR^d$.
\end{definition}

Intuitively, the diversity condition states that the dimension of the spaces spanned by the output of $\vf$ and $\vg_0$ match the dimension of the representation space. When both the diversity condition and $d=\tilde{d}$ hold, existing identifiability results for models of the form in \changed{Equation~\eqref{eq:next-token-predictor}} (presented in \cref{cor:linear-id}) guarantee equivalence of representations up to an invertible linear transformation \citep{khemakhem2020ice, roeder2021linear}. 
Next, we show how to relax these two conditions via a more permissive equivalence relation.

\begin{figure*}[!t]
    \centering

    \includegraphics[width=\textwidth]{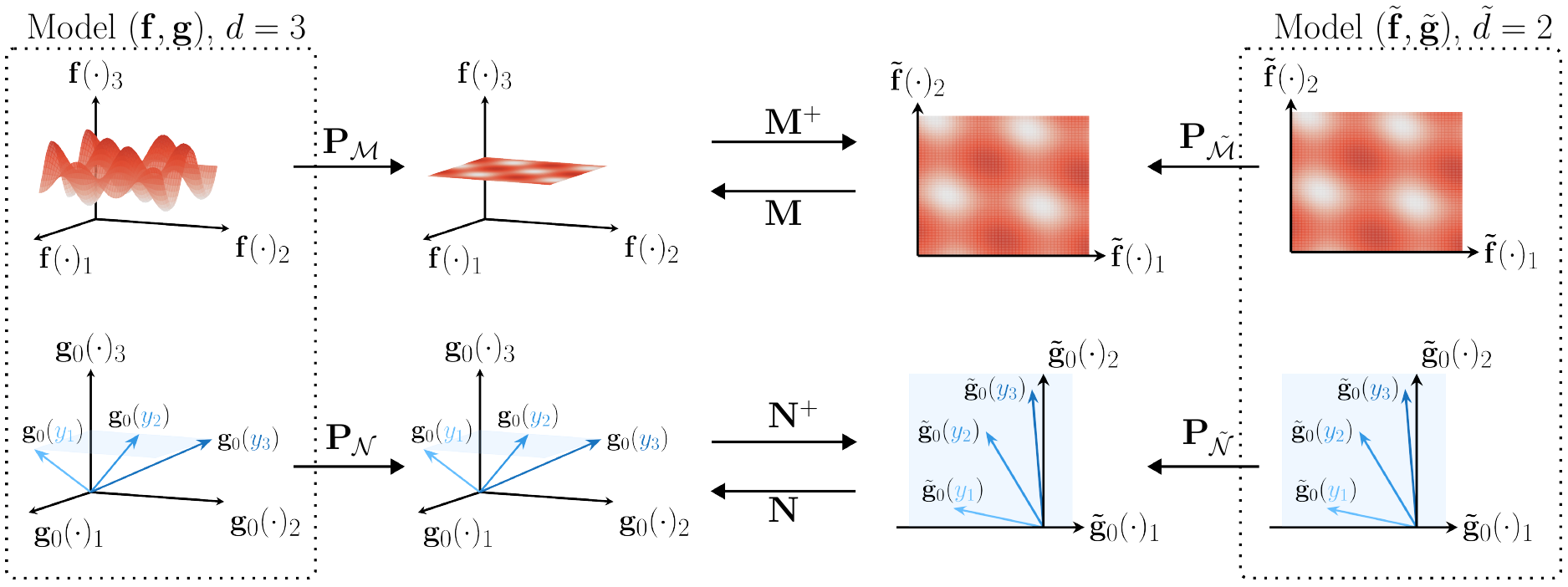}
    
    \caption{\textbf{Illustration of the $\sim_{EL}$ equivalence relation.} \looseness-1
    (\textbf{Left}) In the leftmost model, $(\vf, \vg)$, the embeddings lie on a manifold $\mathrm{Im}(\vf)\subsetneq \bbR^3$, yielding $\SIM{\vf} = \bbR^3$. 
    To ease visualization, $\mathrm{Im}(\vf)$ is plotted as a \changed{continuous} manifold in the figure, although in practice textual inputs are discrete.
    The unembeddings 
    lie on a two-dimensional space, $\SIM{\vg_0} \cong \bbR^2$, drawn in \textcolor{BrightTeal}{\textbf{light blue}}. 
    Consequently the projectors $\vP_\calM$ and $\vP_\calN$ map onto a two-dimensional subspace, \ie $\vP_\calM = \vP_\calN = \vP_\calG$. 
    (\textbf{Right}) The rightmost model, $(\tilde{\vf}, \tilde{\vg})$, represents both the embeddings and the unembeddings in a two-dimensional space.  We therefore have $\SIM{\tilde{\vf}} = \SIM{\tilde{\vg_0}} = \bbR^2$, which implies that $\vP_{\tilde{\calM}} = \vP_{\tilde{\calN}} = \vI$. Thus applying these projection matrices to embeddings and unembeddings leaves them unchaged (top-right and bottom-right grids). 
    \textbf{(Center)}
    The equivalence relation $\sim_{EL}$ specifies that both $\vP_\calM \vf$ and $\vP_{\tilde{\calM}} \tilde{\vf}$, as well as $\vP_\calN \vg$ and $\vP_{\tilde{\calN}} \tilde{\vg}$, are related by linear invertible transformations defined by the matrices $\vM, \vN \in \bbR^{3 \times 2}$.
    }
    \label{fig:illustration-kidf}
\end{figure*}

\subsection{Effective complexity of the model}
\label{sec:eff-comp}
To generalize previous results to settings where the diversity condition may not hold, we introduce the notion of {\em effective complexity} of a model. %
Starting from the conditional distribution captured by the model $(\vf, \vg)$, we have that for every $y_0 \in \calA$,%
\begin{align*}
    p_{\vf,\vg}(y \mid \vx) &\propto \exp(\vf(\vx)^\top\vg(y)) \\
    &\propto \exp(\vf(\vx)^\top\vg(y))\exp(-\vf(\vx)^\top\vg(y_0)) \\ 
    & = \exp(\vf(\vx)^\top\vg_0(y))) \, ,
\end{align*}
indicating that the conditional distribution ${p_{\vf,\vg}(y \mid \vx)}$ is fully determined by the dot product $\vf(\vx)^\top\vg_0(y)$.  Denote by $\vP_\calF$ and $\vP_\calG$ the orthogonal projectors onto $\calF:=\SIM\vf$ and  $\calG:=\SIM {\vg_0}$, respectively. 
Since both $\vf(\vx) = \vP_{\calF} \vf(\vx)$ and $\vg_{0}(y) =\vP_{\calG} \vg_{0}(y)$, the dot product between the two can be evaluated as:
\begin{align}
    \vf(\vx)^\top \vg_{0}(y) &= (\vP_\calF \vf(\vx))^\top \vP_{\calG} \vg_{0}(y) \\
    &= \vf(\vx)^\top \vP_{\calF} \vP_{\calG} \vg_{0}(y) \, ,
\end{align}
where we used the fact that $\vP_\calF^\top = \vP_\calF$, as a property of orthogonal projectors.
In general, $\vP_{\calF}$ and $\vP_{\calG}$ may not commute \citep{rehder1980projections}. We consider the subspaces: %
\[  \textstyle  \label{eq:left-right-projectors}
    \calM := \mathrm{Im}(\vP_{\calF} \vP_{\calG}), \quad \calN := \mathrm{ker} (\vP_{\calF} \vP_{\calG})^\perp \, ,
\]
which will be central to our characterization. 
Only when $\vP_\calF$ and $\vP_\calG$ commute, we have that $\calM = \calN = \SIM{\vf} \cap \SIM{\vg_0}$ \citep{rehder1980projections}.
In general, $\vP_\calM \vf \neq \vf$ and  $\vP_\calN \vg_0 \neq \vg_0$, as shown in the following example. 

\begin{example}
    Let $\{\ve_1, \ve_2, \ve_3\}$ be an orthogonal basis of $\bbR^3$. Take $\calF = \mathrm{span}(\ve_1, \ve_2)$ and $\calG =\mathrm{span}(\ve_1, \ve_3)$. 
    \changed{Then it follows that
    $\vP_\calF = \ve_1 \ve_1^\top + \ve_2 \ve_2^\top$ and $\vP_\calG = \ve_2 \ve_2^\top + \ve_3 \ve_3^\top$.}
    In this case, $\vP_\calF$ and $\vP_\calG$ commute, so $\calM = \calN = \calF \cap \calG = \mathrm{span} (\ve_1)$. %
\end{example}

Also, we have the following properties: 
\begin{restatable}{lemma}{lemmaprojectors}
\label{lemma:projectors}
    Given the orthogonal projectors $\vP_\calF$ and $\vP_\calG$, and the orthogonal projectors $\vP_\calM$ and $\vP_\calN$ onto, respectively, the spaces $\calM$ and $\calN$, defined as in  Equation~\eqref{eq:left-right-projectors}, the following holds: (i) 
    $\mathrm{dim}(\calM) =\mathrm{dim} (\calN) = \mathrm{dim}(\SIM{\vf}) - \mathrm{dim}(\SIM{\vf} - \SIM{\vg_0}^\perp)
    $;
    (ii) $\calM \subseteq \SIM{\vf}$ and $\calN \subseteq \SIM{\vg_0}$;
    and (iii)
    \[  \textstyle  \label{eq:equiv-dot-prod}
    \begin{aligned}
        \vf(\vx)^\top \vg_{0}(y) &= \big(\vP_\calM \vf(\vx) \big)^\top  \vP_\calN \vg_{0}(y) \, . %
    \end{aligned}
    \]
\end{restatable}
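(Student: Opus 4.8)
The plan is to treat the three parts in the order (ii), (i), (iii): part (ii) is immediate, part (i) is rank--nullity plus a transpose symmetry, and part (iii) reduces to a short chain of projector identities that rests on (ii). For (ii), the first inclusion is trivial, $\calM = \mathrm{Im}(\vP_\calF\vP_\calG) \subseteq \mathrm{Im}(\vP_\calF) = \SIM{\vf}$. For the second, I use the general fact that the orthogonal complement of the kernel of a matrix equals the image of its transpose: since $\vP_\calF^\top = \vP_\calF$ and $\vP_\calG^\top = \vP_\calG$, we have $(\vP_\calF\vP_\calG)^\top = \vP_\calG\vP_\calF$, hence $\calN = \mathrm{ker}(\vP_\calF\vP_\calG)^\perp = \mathrm{Im}(\vP_\calG\vP_\calF) \subseteq \mathrm{Im}(\vP_\calG) = \SIM{\vg_0}$.

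For (i), note first that $\vP_\calG$ maps $\bbR^d$ onto $\SIM{\vg_0}$, so $\mathrm{Im}(\vP_\calF\vP_\calG) = \vP_\calF(\SIM{\vg_0})$. Applying rank--nullity to the restriction of $\vP_\calF$ to the subspace $\SIM{\vg_0}$, whose kernel is $\SIM{\vg_0}\cap\mathrm{ker}(\vP_\calF) = \SIM{\vg_0}\cap\SIM{\vf}^\perp$, gives $\dim\calM = \dim\SIM{\vg_0} - \dim(\SIM{\vg_0}\cap\SIM{\vf}^\perp)$. Running the identical argument on $(\vP_\calF\vP_\calG)^\top = \vP_\calG\vP_\calF$, which has the same rank, produces the $\SIM{\vf}$-symmetric form $\dim\calM = \dim\SIM{\vf} - \dim(\SIM{\vf}\cap\SIM{\vg_0}^\perp)$ appearing in the statement. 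Finally, $\dim\calN = \dim\big(\mathrm{ker}(\vP_\calF\vP_\calG)^\perp\big) = d - \dim\mathrm{ker}(\vP_\calF\vP_\calG) = \dim\mathrm{Im}(\vP_\calF\vP_\calG) = \dim\calM$.

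For (iii), recall from the computation preceding the lemma that $\vf(\vx)^\top\vg_0(y) = \vf(\vx)^\top\vP_\calF\vP_\calG\,\vg_0(y)$, so it suffices to prove the matrix identity $\vP_\calF\vP_\calG = \vP_\calM\vP_\calN$; the claim then follows because $\vP_\calM^\top = \vP_\calM$ lets one move $\vP_\calM$ onto the $\vf(\vx)$ factor. I would establish four elementary identities: (a) $\vP_\calM\vP_\calF\vP_\calG = \vP_\calF\vP_\calG$, since $\mathrm{Im}(\vP_\calF\vP_\calG) = \calM$ and $\vP_\calM$ fixes $\calM$ pointwise; (b) $\vP_\calF\vP_\calG\vP_\calN = \vP_\calF\vP_\calG$, since $\vI - \vP_\calN$ projects onto $\calN^\perp = \mathrm{ker}(\vP_\calF\vP_\calG)$, which $\vP_\calF\vP_\calG$ annihilates; (c) $\vP_\calM\vP_\calF = \vP_\calM$, obtained by transposing $\vP_\calF\vP_\calM = \vP_\calM$, valid because $\calM\subseteq\SIM{\vf}$ by (ii); and (d) $\vP_\calG\vP_\calN = \vP_\calN$, since $\calN\subseteq\SIM{\vg_0}$ by (ii). Chaining them, $\vP_\calF\vP_\calG = \vP_\calM\vP_\calF\vP_\calG\vP_\calN = \vP_\calM\vP_\calG\vP_\calN = \vP_\calM\vP_\calN$, where the first equality uses (a) and (b), the second uses (c), and the third uses (d).

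The main obstacle is not depth but a standard trap: since $\vP_\calF$ and $\vP_\calG$ need not commute, one cannot replace $\vP_\calF\vP_\calG$ by the orthogonal projector onto $\SIM{\vf}\cap\SIM{\vg_0}$, and the operator $\vP_\calF\vP_\calG$ is itself not symmetric and not idempotent. The real content is recognizing that $\calM$ (its image) and $\calN$ (its coimage) are exactly the subspaces for which $\vP_\calM\vP_\calN$ reconstructs $\vP_\calF\vP_\calG$ exactly, and that this reconstruction hinges on identities (c)--(d), i.e., on part (ii). A secondary subtlety in (i) is that rank--nullity naturally yields the $\SIM{\vg_0}$-symmetric dimension formula, so matching the $\SIM{\vf}$-symmetric form in the statement requires the extra observation that a matrix and its transpose have equal rank.
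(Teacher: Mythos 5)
Your proposal is correct and follows essentially the same route as the paper: the dimension count via rank--nullity applied to $\vP_\calF$ restricted to $\SIM{\vg_0}$ (plus equality of rank under transposition), the inclusions $\calM\subseteq\SIM{\vf}$, $\calN\subseteq\SIM{\vg_0}$, and the key identity $\vP_\calF\vP_\calG=\vP_\calM\vP_\calF\vP_\calG\vP_\calN=\vP_\calM\vP_\calN$ are exactly the steps in the paper's argument (which phrases (a)--(b) via pseudo-inverse identities rather than directly from image/kernel, a cosmetic difference). No gaps.
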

All proofs can be found in \cref{sec:proofs-sec4}.
As a consequence of this lemma, we can view the projections $\vP_\calM \vf$ and $\vP_\calN \vg$ as the parts of $\vf$ and $\vg$ that are effectively retained when evaluating the dot product on the left-hand side of Equation \eqref{eq:equiv-dot-prod}. 
\cchanged{This also means the dot product depends solely on the projection of the embeddings onto $\calM$ and the unembeddings onto $\calN$; components of the embeddings and unembeddings which are orthogonal to these subspaces do not contribute.}
The dimensionality of $\calM$ (and $\calN$) can be viewed as a measure of model complexity since, intuitively, the larger $\dim(\calM)$ is, the more expressive the resulting model, and thus the more complex the relationship between $y$ and $\vx$, captured by $p(y \mid \vx)$, can be. For this reason, we call $\dim(\calM)$ the \textit{effective complexity} of the model $(\vf, \vg)$. %
Note that
the effective complexity of %
$(\vf, \vg)$ 
is less than or equal
to
the representation dimensionality $d$.

\subsection{Extended linear equivalence relation}

We now
introduce an equivalence relation among models with potentially different representation dimensionality, building on our notion of effective complexity.
We consider next-token prediction models
$(\vf, \vg)$ and $(\tilde{\vf}, \tilde{\vg})$, and spaces $\tilde{\calF} := \SIM{\tilde \vf}$, $\tilde{\calG} := \SIM {\tilde \vg_0}$, $\tilde{\calM} := \mathrm{Im} (\vP_{\tilde \calF} \vP_{\tilde \calG}) $ and $\tilde{\calN} := \mathrm{ker} (\vP_{\tilde \calF} \vP_{\tilde \calG})^\perp$, as introduced in~\cref{sec:eff-comp}.

\begin{restatable}[Extended linear equivalence]{definition}{klinequiv}
\label{def:klinear-equiv}

    Two models $(\vf, \vg)$ and $(\tilde{\vf}, \tilde{\vg})$ are extended-linearly equivalent, %
    if both (i) $\mathrm{dim}(\calM) = \mathrm{dim}(\tilde{\calM})$ and
    (ii) there exist 
    two full-rank matrices $\vM, \vN \in \bbR^{d \times \tilde d}$ defining, respectively, invertible transformations from
    $\calM$ to  $\tilde \calM$, and from $\calN$ to $\tilde \calN$, such that $\vM^\top \vN = \vP_{\tilde \calM} \vP_{\tilde \calN}$ and 
    \begin{align}
            \vP_{\calM} {\vf}(\vx)_{\phantom{0}} &= \vM  \vP_{\tilde \calM} \tilde{\vf}(\vx)_{\phantom{0}} \label{eq:linear-transf-f} \\
            \vP_{\calN} {\vg_0}(y) &= \vN \vP_{\tilde \calN} \tilde{\vg}_0(y) \, , \label{eq:linear-transf-g} 
    \end{align}
    for all $y \in \calA, \vx \in \SeqA$. We denote this relation by $(\vf, \vg) \sim_{EL} (\tilde{\vf}, \tilde{\vg})$.
\end{restatable}

The above equivalence relation generalizes the linear equivalence already known in the literature \citep{roeder2021linear, khemakhem2020ice} to that of two models which can be related to each other on subspaces of dimension ${ \mathrm{dim}(\calM) \leq \min\{d, \tilde d\}}$. 
It shows that, after projecting the representations to suitable equal-dimensional subspaces, namely $\calM, \calN, \tilde\calM$, and $\tilde\calN$, we can find an invertible linear transformation relating them. 
Here, the dimensions of $\calM$ and $\tilde \calM$ are equal, requiring that two equivalent models share the same \textit{effective complexity}. %
Furthermore, 
\cchanged{models that are $\sim_{EL}$-equivalent encode the same dot-product:}

\begin{restatable}{proposition}{elpreserves}
\label{prop:elpreserves}
    If $(\vf, \vg) \sim_{EL} (\tilde\vf, \tilde\vg)$, then
    \[
    \cchanged{
    \vf(\vx)^\top \vg_0(y) = \tilde \vf(\vx)^\top \tilde \vg_0(y)  \, .
    }
    \]
\end{restatable}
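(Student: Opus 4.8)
The statement to prove is that $\sim_{EL}$-equivalence preserves the dot product $\vf(\vx)^\top \vg_0(y)$. The plan is to start from the right-hand side $\tilde\vf(\vx)^\top \tilde\vg_0(y)$ and rewrite it using part (iii) of \cref{lemma:projectors} applied to the tilde model, which gives $\tilde\vf(\vx)^\top \tilde\vg_0(y) = (\vP_{\tilde\calM}\tilde\vf(\vx))^\top \vP_{\tilde\calN}\tilde\vg_0(y)$. Then I would introduce the projectors onto $\tilde\calM$ and $\tilde\calN$ once more: since $\vP_{\tilde\calM}\tilde\vf(\vx)$ already lies in $\tilde\calM$ and $\vP_{\tilde\calN}\tilde\vg_0(y)$ already lies in $\tilde\calN$, we can freely write this as $(\vP_{\tilde\calM}\tilde\vf(\vx))^\top \vP_{\tilde\calM}\vP_{\tilde\calN} (\vP_{\tilde\calN}\tilde\vg_0(y))$, inserting the product $\vP_{\tilde\calM}\vP_{\tilde\calN}$ in the middle without changing anything (because $\vP_{\tilde\calM}$ is idempotent and symmetric, and likewise $\vP_{\tilde\calN}$).

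Next I would invoke the defining constraint $\vM^\top\vN = \vP_{\tilde\calM}\vP_{\tilde\calN}$ from \cref{def:klinear-equiv}, substituting it for the middle factor to get $(\vP_{\tilde\calM}\tilde\vf(\vx))^\top \vM^\top\vN\, (\vP_{\tilde\calN}\tilde\vg_0(y)) = (\vM\vP_{\tilde\calM}\tilde\vf(\vx))^\top (\vN\vP_{\tilde\calN}\tilde\vg_0(y))$. Finally, I would apply the two transfer identities \eqref{eq:linear-transf-f} and \eqref{eq:linear-transf-g}, namely $\vP_\calM\vf(\vx) = \vM\vP_{\tilde\calM}\tilde\vf(\vx)$ and $\vP_\calN\vg_0(y) = \vN\vP_{\tilde\calN}\tilde\vg_0(y)$, to rewrite this as $(\vP_\calM\vf(\vx))^\top(\vP_\calN\vg_0(y))$, which by part (iii) of \cref{lemma:projectors} applied to the untilded model equals $\vf(\vx)^\top\vg_0(y)$. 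Chaining these equalities closes the argument.

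The calculation is essentially a bookkeeping exercise once the three ingredients are lined up: \cref{lemma:projectors}(iii) for both models, the algebraic compatibility $\vM^\top\vN = \vP_{\tilde\calM}\vP_{\tilde\calN}$, and the two linear transfer equations. The one step that needs a little care is the insertion of $\vP_{\tilde\calM}\vP_{\tilde\calN}$ in the middle of the bilinear form: this is legitimate precisely because $\vP_{\tilde\calM}\tilde\vf(\vx)\in\tilde\calM$ so $\vP_{\tilde\calM}$ fixes it, and symmetrically $\vP_{\tilde\calN}$ fixes $\vP_{\tilde\calN}\tilde\vg_0(y)$ — so this is where one uses that $\calM$ and $\calN$ (resp. their tilde counterparts) are genuinely the images/coimages appearing in the definition, not arbitrary subspaces. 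I do not anticipate a serious obstacle; the main thing to get right is the direction of the substitutions and matching the $\vM,\vN$ to the correct factors (embedding side versus unembedding side).
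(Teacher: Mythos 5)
Your proposal is correct and follows essentially the same route as the paper's proof: both rely on the dot-product identity of \cref{lemma:projectors}, the compatibility constraint $\vM^\top\vN = \vP_{\tilde\calM}\vP_{\tilde\calN}$, idempotency/symmetry of the projectors, and the two transfer equations \eqref{eq:linear-transf-f}--\eqref{eq:linear-transf-g}; the only difference is that you chain the equalities starting from the tilde side while the paper starts from the untilded side. No gaps.
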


As a consequence, models in the $\sim_{EL}$ equivalence class also satisfy $p_{\vf, \vg} = p_{\tilde{\vf}, \tilde{\vg}}$. %
In \cref{sec:app-equivalence-relation},
we prove that~\Cref{def:klinear-equiv} is an equivalence relation and we provide the explicit form of the matrix $\vN$.
The extended linear equivalence relation is illustrated in \cref{fig:illustration-kidf}.

\subsection{Identifiability of next-token predictors}

The following theorem provides a characterization of models generating the same conditional probability distribution (i.e., distributionally-equivalent next-token predictors):

\begin{restatable}{theorem}{partialidf}
\label{thm:partial-identifiability}
    For all $(\vf, \vg), (\tilde\vf, \tilde\vg) \in \Theta$, with representation dimensions $d$ and $\tilde d$ (not necessarily equal), %
    \[  \textstyle
        p_{\vf, \vg} = p_{\tilde{\vf}, \tilde{\vg}} \iff
        (\vf, \vg) \sim_{EL}(\tilde \vf, \tilde \vg) \, .
    \]
\end{restatable}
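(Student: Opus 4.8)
The statement is a biconditional, which I would prove one direction at a time; the reverse implication is short. For ``$\Leftarrow$'', if $(\vf,\vg)\sim_{EL}(\tilde\vf,\tilde\vg)$ then \cref{prop:elpreserves} gives $\vf(\vx)^\top\vg_0(y)=\tilde\vf(\vx)^\top\tilde\vg_0(y)$ for every $\vx,y$, and since the computation in \cref{sec:eff-comp} shows $p_{\vf,\vg}(y\mid\vx)\propto\exp(\vf(\vx)^\top\vg_0(y))$ depends on nothing but this dot product, the two next-token distributions coincide. The rest of the plan concerns the forward implication.

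First I would turn the distributional equality into an equality of bilinear forms: evaluating $p_{\vf,\vg}(\cdot\mid\vx)$ at $y$ and at the pivot $y_0$ (where $\vg_0(y_0)=0$) and taking the log of the ratio gives $\vf(\vx)^\top\vg_0(y)=\log\!\big(p_{\vf,\vg}(y\mid\vx)/p_{\vf,\vg}(y_0\mid\vx)\big)$, with strictly positive denominator, so $p_{\vf,\vg}=p_{\tilde\vf,\tilde\vg}$ forces $\vf(\vx)^\top\vg_0(y)=\tilde\vf(\vx)^\top\tilde\vg_0(y)$ for all $\vx\in\SeqA,y\in\calA$. By \cref{lemma:projectors}(iii) I may replace each side by the reduced maps $\vu:=\vP_\calM\vf$, $\vw:=\vP_\calN\vg_0$, $\tilde\vu:=\vP_{\tilde\calM}\tilde\vf$, $\tilde\vw:=\vP_{\tilde\calN}\tilde\vg_0$, so that $\vu(\vx)^\top\vw(y)=\tilde\vu(\vx)^\top\tilde\vw(y)=:K(\vx,y)$. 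I would then record three facts: $\mathrm{span}\{\vu(\vx)\}=\calM$ and $\mathrm{span}\{\vw(y)\}=\calN$ (any $\vv\in\calM$ orthogonal to all $\vu(\vx)$ is orthogonal to all $\vf(\vx)$, hence to $\SIM{\vf}\supseteq\calM\ni\vv$, so $\vv=0$), and symmetrically for the tilde model; $\dim\calM=\dim\calN=:r$ and $\dim\tilde\calM=\dim\tilde\calN=:\tilde r$ by \cref{lemma:projectors}(i); and the pairing $(\va,\vb)\mapsto\va^\top\vb$ is nondegenerate on $\calM\times\calN$ and on $\tilde\calM\times\tilde\calN$, since $\mathrm{Im}(\vP_\calF\vP_\calG)\cap\mathrm{ker}(\vP_\calF\vP_\calG)=\{0\}$ for a product of two orthogonal projectors (applied to both $\vP_\calF\vP_\calG$ and $\vP_\calG\vP_\calF$).

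Viewing $K$ as a matrix indexed by $\SeqA\times\calA$, nondegeneracy makes the column map $\vw(y)\mapsto K(\cdot,y)$ extend to a linear isomorphism from $\calN$ onto the column space of $K$, so $\mathrm{rank}(K)=\dim\calN=r$; the same argument on the tilde factorization gives $\mathrm{rank}(K)=\tilde r$, hence $r=\tilde r$, which is condition (i) of \cref{def:klinear-equiv}. For condition (ii): the column map likewise gives an isomorphism from $\tilde\calN$ onto the column space of $K$, and composing the two yields an isomorphism $\vN_0:\tilde\calN\to\calN$ with $\vw(y)=\vN_0\tilde\vw(y)$ for all $y$; extending $\vN_0$ by $0$ on $\tilde\calN^\perp$ produces $\vN\in\bbR^{d\times\tilde d}$ realizing \eqref{eq:linear-transf-g}. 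Symmetrically, the row map $\vu(\vx)\mapsto K(\vx,\cdot)$ yields $\vM\in\bbR^{d\times\tilde d}$, vanishing on $\tilde\calM^\perp$, realizing \eqref{eq:linear-transf-f}. It remains to check $\vM^\top\vN=\vP_{\tilde\calM}\vP_{\tilde\calN}$: from $\vu(\vx)^\top\vw(y)=\tilde\vu(\vx)^\top(\vM^\top\vN)\tilde\vw(y)$ together with $\vu(\vx)^\top\vw(y)=\tilde\vu(\vx)^\top\tilde\vw(y)$, and because the $\tilde\vu(\vx)$ span $\tilde\calM$ and the $\tilde\vw(y)$ span $\tilde\calN$, we get $\vP_{\tilde\calM}(\vM^\top\vN)\vP_{\tilde\calN}=\vP_{\tilde\calM}\vP_{\tilde\calN}$; since $\vM^\top=\vP_{\tilde\calM}\vM^\top$ and $\vN=\vN\vP_{\tilde\calN}$ by construction, this collapses to $\vM^\top\vN=\vP_{\tilde\calM}\vP_{\tilde\calN}$, so $(\vf,\vg)\sim_{EL}(\tilde\vf,\tilde\vg)$.

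The step I expect to be the main obstacle is the rank bookkeeping: showing that $\mathrm{rank}(K)$ equals $\dim\calM$ viewed from the $(\vf,\vg)$ side and $\dim\tilde\calM$ viewed from the $(\tilde\vf,\tilde\vg)$ side — which is exactly what forces the two effective complexities to agree — and this hinges on the nondegeneracy of the pairing on $\calM\times\calN$, equivalently $\mathrm{Im}(\vP_\calF\vP_\calG)\cap\mathrm{ker}(\vP_\calF\vP_\calG)=\{0\}$. Once that and the kernel/rank-factorization viewpoint are in place, constructing $\vM$ and $\vN$ and verifying $\vM^\top\vN=\vP_{\tilde\calM}\vP_{\tilde\calN}$, as well as the reverse implication, are routine.
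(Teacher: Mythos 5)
Your proposal is correct, and while it follows the same skeleton as the paper's proof (pivoting to cancel the normalizer, reducing via \cref{lemma:projectors} to the bilinear identity $(\vP_\calM\vf(\vx))^\top\vP_\calN\vg_0(y)=(\vP_{\tilde\calM}\tilde\vf(\vx))^\top\vP_{\tilde\calN}\tilde\vg_0(y)$, then establishing the dimension equality and the linear maps), the mechanics are genuinely different. The paper constructs $\vM$ and $\vN$ in closed form as products of pseudo-inverses of spanning matrices $\vG,\tilde\vG,\vF,\tilde\vF$ with the projector products $(\vP_\calG\vP_\calF)^+$ etc., and obtains $\dim(\calM)=\dim(\tilde\calM)=\mathrm{rank}(\vM)=\mathrm{rank}(\vN)$ by repeatedly sandwiching ranks with $\mathrm{rank}(\vA\vB)\le\min(\mathrm{rank}\,\vA,\mathrm{rank}\,\vB)$. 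You instead identify both $\calN$ and $\tilde\calN$ with the column space of the kernel $K(\vx,y)$ through a nondegenerate-pairing argument, which delivers the dimension equality and the isomorphisms $\vM_0,\vN_0$ in one stroke; this is cleaner and more coordinate-free, and the consistency check $\vM^\top\vN=\vP_{\tilde\calM}\vP_{\tilde\calN}$ then follows exactly as you say from $\vM^\top=\vP_{\tilde\calM}\vM^\top$ and $\vN=\vN\vP_{\tilde\calN}$. The one load-bearing fact you assert without proof, $\mathrm{Im}(\vP_\calF\vP_\calG)\cap\mathrm{ker}(\vP_\calF\vP_\calG)=\{0\}$, is true but deserves the two-line verification (for $\vy=\vP_\calF\vP_\calG\vz$ with $\vP_\calF\vP_\calG\vy=0$, first $\|\vP_\calG\vy\|^2=\vy^\top\vP_\calF\vP_\calG\vy=0$, then $\|\vy\|^2=\vz^\top\vP_\calG\vP_\calF\vP_\calG\vz=\vz^\top\vP_\calG\vy\cdot 0=0$, suitably rewritten); the paper encodes the same information through \cref{lemma:projectors}~(ii). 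What the paper's heavier route buys is the explicit formula for $\vN$, which is reused later (e.g., $\tilde\Gamma=\mathrm{Im}(\vN^+\vP_\Gamma)$ in \cref{prop:part-id-lin-rep-tentative}); your existence argument would need to be supplemented to recover those formulas, but it suffices for the theorem as stated.
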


In words, %
there is a one-to-one correspondence between the set of conditional probability distributions expressed in Equation~\eqref{eq:next-token-predictor} and the set of equivalence classes entailed by $\sim_{EL}$ (cf.\@ \cref{fig:second-page}): models $(\vf, \vg)$, $(\tilde\vf, \tilde\vg)$ which are {$\sim_{EL}$}-equivalent can be mapped to a single conditional probability distribution {$p_{\vf, \vg}$}. 

Interestingly,~\cref{thm:partial-identifiability} highlights the fact that our notion of effective complexity, %
defined in~\cref{sec:eff-comp} 
for next-token prediction models, is a well-defined complexity measure for a distribution $p_{\vf, \vg}$, in the sense that it does not depend on the specific choice of embedding and unembedding functions $(\vf, \vg)$. Indeed, the result implies that 
\[p_{\vf, \vg} = p_{\tilde\vf, \tilde\vg} \implies \dim(\calM) = \dim(\tilde\calM) \, .\]

Furthermore, as a special case of~\cref{thm:partial-identifiability}, when the diversity condition holds and $d=\tilde{d}$ we recover known results on linear identifiability: %

\begin{restatable}[Adapted from~\citep{roeder2021linear}]{corollary}{roedeo}
    \label{cor:linear-id}
    For all $(\vf, \vg), (\tilde\vf, \tilde\vg) \in \Theta_d$ such that $(\vf, \vg)$ satisfies the diversity condition (\cref{def:diversity-condition}), we have
    \[
        p_{\vf, \vg} = p_{\tilde{\vf}, \tilde{\vg}} \implies
        (\vf, \vg) \sim_{L} (\tilde{\vf}, \tilde{\vg}) \, ,
    \]
    where, by definition, $(\vf, \vg) \sim_L (\tilde\vf, \tilde\vg)$ if and only if there exists an invertible matrix $\vM \in \bbR^{d\times d}$ such that for all $y \in \calA, \vx \in \SeqA$ we have
    \[\vf(\vx) = \vM\tilde\vf(\vx)\ \text{and}\ \vg_0(y) = \vM^{-\top}\tilde\vg_0(y) \, .\]
\end{restatable}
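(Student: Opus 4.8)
\textbf{Proof proposal for Corollary~\ref{cor:linear-id}.}
The plan is to derive the corollary as a direct specialization of \cref{thm:partial-identifiability}, checking that the diversity condition collapses all the projectors involved in $\sim_{EL}$ to the identity so that the extended equivalence reduces to the classical linear equivalence $\sim_L$. First I would observe that, since $(\vf,\vg)$ satisfies the diversity condition, $\calF = \SIM{\vf} = \bbR^d$ and $\calG = \SIM{\vg_0} = \bbR^d$, hence $\vP_\calF = \vP_\calG = \vI_d$. Consequently $\vP_\calF \vP_\calG = \vI_d$, so $\calM = \mathrm{Im}(\vI_d) = \bbR^d$ and $\calN = \mathrm{ker}(\vI_d)^\perp = \bbR^d$, giving $\vP_\calM = \vP_\calN = \vI_d$ and $\dim(\calM) = d$.

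Next I would apply the forward direction of \cref{thm:partial-identifiability}: from $p_{\vf,\vg} = p_{\tilde\vf,\tilde\vg}$ we get $(\vf,\vg) \sim_{EL} (\tilde\vf,\tilde\vg)$, which in particular forces $\dim(\tilde\calM) = \dim(\calM) = d$. But $\tilde\calM \subseteq \SIM{\tilde\vf} \subseteq \bbR^{\tilde d}$ and $\tilde\calN \subseteq \SIM{\tilde\vg_0} \subseteq \bbR^{\tilde d}$ by \cref{lemma:projectors}(ii), and here $\tilde d = d$; a $d$-dimensional subspace of $\bbR^d$ is all of $\bbR^d$, so $\tilde\calM = \tilde\calN = \bbR^d$ and hence $\vP_{\tilde\calM} = \vP_{\tilde\calN} = \vI_d$. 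Plugging these projectors into Equations~\eqref{eq:linear-transf-f} and~\eqref{eq:linear-transf-g} of \cref{def:klinear-equiv}, the equalities become $\vf(\vx) = \vM \tilde\vf(\vx)$ and $\vg_0(y) = \vN \tilde\vg_0(y)$ for all $\vx \in \SeqA$, $y \in \calA$, with $\vM, \vN \in \bbR^{d\times d}$ full-rank (hence invertible) square matrices satisfying $\vM^\top \vN = \vP_{\tilde\calM}\vP_{\tilde\calN} = \vI_d$.

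Finally, from $\vM^\top \vN = \vI_d$ with both matrices invertible I conclude $\vN = (\vM^\top)^{-1} = \vM^{-\top}$, so the relations read $\vf(\vx) = \vM\tilde\vf(\vx)$ and $\vg_0(y) = \vM^{-\top}\tilde\vg_0(y)$, which is exactly the statement $(\vf,\vg) \sim_L (\tilde\vf,\tilde\vg)$. I do not expect any serious obstacle here; the only point requiring a little care is making sure the dimension-matching clause (i) of $\sim_{EL}$ is what propagates the diversity of $(\vf,\vg)$ to force $\tilde\calM = \tilde\calN = \bbR^d$ on the tilded side — without clause (i) one could not rule out $\tilde\vf$ or $\tilde\vg_0$ spanning a proper subspace — and, relatedly, noting that $\tilde d = d$ is needed for a $d$-dimensional subspace to exhaust $\bbR^{\tilde d}$. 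Everything else is bookkeeping with orthogonal projectors onto the full space.
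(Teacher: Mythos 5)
Your proposal is correct, and it is exactly the route the paper itself flags but deliberately does not write out: the appendix states that the corollary "can be easily proven by setting $d=\tilde d$ and requiring that $\calM=\calN=\bbR^d$," and then instead gives an alternative, self-contained proof in the style of \citet{roeder2021linear}. That direct proof uses pivoting to get $\vg_0(y)^\top\vf(\vx)=\tilde\vg_0(y)^\top\tilde\vf(\vx)$, builds an invertible matrix $\vG$ of unembedding differences (possible by diversity of $\vg_0$) to obtain $\vf(\vx)=\vM\tilde\vf(\vx)$ with $\vM=\vG^{-\top}\tilde\vG^\top$, uses an invertible matrix $\vF$ of embeddings (possible since $\SIM{\vf}=\bbR^d$) to certify that $\vM$ is invertible, and then transposes the argument to get $\vg_0(y)=\vM^{-\top}\tilde\vg_0(y)$. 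Your specialization of \cref{thm:partial-identifiability} is shorter and cleanly isolates where each hypothesis enters — in particular your observation that clause (i) of $\sim_{EL}$ is what forces $\tilde\calM=\tilde\calN=\bbR^d$ on the tilded side without assuming diversity of $(\tilde\vf,\tilde\vg)$ is precisely the reason the corollary can drop Roeder et al.'s second diversity assumption. What the paper's direct proof buys in exchange is independence from the heavier machinery of \cref{thm:partial-identifiability} and an explicit side-by-side comparison with the original argument, making transparent which of Roeder et al.'s assumptions are relaxed and how. One tiny bookkeeping point in your write-up: to conclude $\tilde\calN=\bbR^d$ you implicitly also need $\dim(\tilde\calN)=\dim(\tilde\calM)$, which is \cref{lemma:projectors}(i) applied to the tilded model; worth stating explicitly, but not a gap.
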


\cchanged{In \cref{sec:counter-example-diversity}, we provide an example about non-linear distortions that can arise in models that are $\sim_{EL}$-equivalent but not diverse (\cref{def:diversity-condition}).}

\subsection{Implications for empirical practice} %
\label{sec:implications-empirical}
\paragraph{Implications for trained models.} %
Suppose that $(\vf, \vg) \in \Theta_d$ and $(\tilde\vf, \tilde\vg) \in \Theta_{\tilde d}$ are both global maximizers (in their respective model classes $\Theta_d$ and $\Theta_{\tilde d}$) of the objective in Equation~\eqref{eq:learning-objective}. If both models have enough capacity to represent the ground-truth data distribution $p_\calD$, then they necessarily represent the same distribution, i.e., $p_{\vf, \vg} = p_\calD = p_{\tilde{\vf}, \tilde{\vg}}$.
By \cref{thm:partial-identifiability}, we can thus conclude that $(\vf, \vg) \sim_{EL} (\tilde\vf, \tilde{\vg})$. If these assumptions held in practice, this would imply that all models trained with sufficient capacity on a given data distribution $p_\calD$ will be $\sim_{EL}$-equivalent.\footnote{We neglect optimiziation issues such as local minima for ease of exposition.}  %
This analysis relies crucially on the assumption that $\Theta$ has enough capacity to represent $p_\calD$. In practice, this might not hold for at least two reasons. First, we typically train with a fixed representation dimension $d$ which limits the expressivity of the model; moreover, despite the universal approximation guarantee, there might not even exist a sufficiently large $d$ such that the model can express $p_\calD$ \textit{exactly}. %
\changed{Secondly, all distributions that can be represented by $\Theta$ put nonzero probability mass on all text sequences (because of the exponential on the RHS of Equation~\eqref{eq:next-token-predictor}), whereas %
under the ground-truth distribution $p_\calD$ describing, e.g., text on the internet, several sequences will have zero probability. %
Models of the form in Equation~\eqref{eq:next-token-predictor} are thus inherently misspecified in such cases.}
Modeling these settings might thus require an extension of the current theoretical framework~\citep{nielsen2024challenges}.

\paragraph{Different token vocabularies.}
\cchanged{Our analysis is also restricted to next-token predictors that share the same token vocabulary $\calA$. We hypothesize that, for two models with different token vocabularies $\calA$ and $\tilde \calA$, 
our results may be extended to prove a $\sim_{EL}$-equivalence relation restricted to the shared tokens $\calA \cap \tilde \calA$, under suitable conditions on the next-token probabilities.}

\section{Linear properties}
\label{sec:linearity}

In the previous section, we established identifiability results for next-token predictors. Here, we turn to precisely defining the linear properties we will focus on.
These 
characterize how a given model $(\vf,\vg)$ represents different inputs---as in our opening example, describing a geometric relationship (parallelism) among the vector differences between the embeddings of two different inputs (\textit{``lucky''} and \textit{``luckiest''}) and that between two further inputs (\textit{``easy''} and \textit{``easiest''}). 
Importantly, these linear properties are not to be confused with the linear equivalence class $\sim_L$ in \cref{def:klinear-equiv}, which instead describes how {\em different} models represent the {\em same} data distribution.
In \cref{sec:implications}, we will combine the linear properties defined here with the identifiability results of~\cref{sec:identifiability} to determine which linear properties hold for all models in a given equivalence class.
See also \cref{fig:second-page} for an illustration.

Our analysis focuses on embeddings $\vf(\vs) \in \SIM{\vf}$ and unembeddings $\vg_0(y) \in \SIM{\vg_0}$, 
allowing us to define relational linear properties solely in terms of the quantities described in our identifiability result:
our analysis is thus agnostic to assumptions on the data-generating process underlying natural language and it does not require positing unobserved variables.
\changed{In principle,  the linear properties we will define can apply to any collection of strings---for example, the difference between the unembeddings of {\em ``1fv0sywi''} and {\em ``eg2op3te''} could be parallel to the difference between those of {\em ``tgsqil2h''} and {\em ``khdo5zof''}. 
As we will argue, such parallel structures imply certain symmetries in a model’s conditional next-token probabilities.
Our work is motivated by the commonly observed instances of linear properties involving collections of semantically meaningful strings, where symmetries in next-token probabilities likely reflect regularities in human-produced text.}

\subsection{Parallel vectors}
\label{sec:parallel-vecs}
\changed{We begin with a definition of vector parallelism. This is motivated by recent empirical findings that differences in semantically or syntactically related token unembeddings often exhibit parallelism, }
such as $\vg(\textit{``easy''}) -\vg (\textit{``easiest''})$ being parallel to $\vg(\textit{``lucky''}) - \vg(``\textit{luckiest''})$.\footnote{Similar properties had previously been observed in \changed{word embedding models}~\citep{mikolov2013efficient,park2023linear}.}
Central to our theory will be the following definition of parallelism in a subspace $\Gamma \subseteq \bbR^d$:

\begin{definition}[Parallelism in $\Gamma$]
\label{def:s-parallelism}
    We say that two vectors $\vgamma, \vgamma' \in \bbR^d$ are parallel in $\Gamma$ if there exists $\beta \neq 0$ (see \cref{remark:parallelism}) such that
    $\vP_\Gamma \vgamma = \beta \cdot \vP_\Gamma \vgamma' $.
\end{definition}

We next show that parallel vectors induce similar log ratios of conditional probabilities, as noted in \citep{park2023linear, jiang2024origins}:

\begin{restatable}{lemma}{lemparallelism}
    \label{lemma:parallelism}
    Consider a model $(\vf, \vg) \in \Theta$. For $y_0, y_1, y_2, y_3 \in \calA$, 
    the difference vectors $\vg (y_1) - \vg (y_0)$ and $\vg (y_3) - \vg (y_2)$ are parallel in $\calN$ if and only if
    there exists $\beta \neq 0, \text{ s.t. } \forall \vs \in \SeqA$ %
    \[  \label{eq:parallelism}
        \log \frac{p_{\vf, \vg}(y_0 \mid \vs)}{p_{\vf, \vg}(y_1 \mid \vs)} = \beta \cdot \log \frac{p_{\vf, \vg}(y_2 \mid \vs)}{p_{\vf, \vg}(y_3 \mid \vs)} \, .
    \]
\end{restatable}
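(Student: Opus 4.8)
The plan is to unfold the definition of the conditional probabilities in Equation~\eqref{eq:next-token-predictor}, express the log-ratios in terms of dot products, and then show that the parallelism condition on the difference vectors is equivalent to a proportionality relation that holds for all embeddings $\vf(\vs)$.

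\textbf{Reducing the log-ratio to a dot product.} First I would observe that, because the normalizing constant $Z(\vs)$ cancels in the ratio, for any tokens $y, y'$ and any sequence $\vs$ we have
\[
\log \frac{p_{\vf, \vg}(y \mid \vs)}{p_{\vf, \vg}(y' \mid \vs)} = \vf(\vs)^\top \vg(y) - \vf(\vs)^\top \vg(y') = \vf(\vs)^\top \big( \vg(y) - \vg(y') \big).
\]
Since $\vg(y_1) - \vg(y_0) = \vg_0(y_1) - \vg_0(y_0)$ and likewise for the other pair (the pivot $\vg(y_0)$ cancels), applying \cref{lemma:projectors}(iii) — or rather the identity $\vf(\vs)^\top \vg_0(y) = (\vP_\calM \vf(\vs))^\top \vP_\calN \vg_0(y)$ and its bilinearity — gives
\[
\log \frac{p_{\vf, \vg}(y_0 \mid \vs)}{p_{\vf, \vg}(y_1 \mid \vs)} = \big(\vP_\calM \vf(\vs)\big)^\top \vP_\calN \big( \vg(y_0) - \vg(y_1) \big),
\]
and analogously for the $(y_2, y_3)$ ratio. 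So Equation~\eqref{eq:parallelism} says that, for all $\vs$, the vector $\vP_\calM \vf(\vs)$ is orthogonal to $\vP_\calN\big[(\vg(y_0) - \vg(y_1)) - \beta(\vg(y_2) - \vg(y_3))\big]$ — wait, more precisely that the two linear functionals of $\vP_\calM\vf(\vs)$ agree up to the factor $\beta$.

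\textbf{From functionals on embeddings to a vector identity in $\calN$.} The key structural fact I would invoke is that $\mathrm{span}\{\vP_\calM \vf(\vs) : \vs \in \SeqA\} = \calM$: by definition $\calM = \mathrm{Im}(\vP_\calF \vP_\calG)$ and $\calF = \SIM{\vf}$, and one checks that projecting $\mathrm{Im}(\vf)$ onto $\calM$ still spans $\calM$ (this is essentially the content of \cref{lemma:projectors}(i)–(ii) together with the fact that $\vP_\calM \vf = \vP_\calM \vP_\calF \vf$ and $\calM \subseteq \calF$). Granting this, Equation~\eqref{eq:parallelism} holds for all $\vs$ if and only if the functional $\vv \mapsto \vv^\top \vP_\calN(\vg(y_0) - \vg(y_1))$ equals $\beta$ times $\vv \mapsto \vv^\top \vP_\calN(\vg(y_2) - \vg(y_3))$ on all of $\calM$, i.e.
\[
\vP_\calM \vP_\calN \big( \vg(y_0) - \vg(y_1) \big) = \beta \cdot \vP_\calM \vP_\calN \big( \vg(y_2) - \vg(y_3) \big).
\]
Then I would need the final bridge: that $\vP_\calM \vP_\calN \vw = \mathbf{0}$ for $\vw \in \calN = \SIM{\vg_0}$ (more precisely $\vw$ already in the range of $\vP_\calN$) implies $\vP_\calN \vw = \mathbf{0}$, which amounts to showing $\vP_\calM$ restricted to $\calN$ is injective. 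This should follow from the relationship $\calM = \vP_\calF(\calN)$ and a dimension count ($\dim \calM = \dim \calN$ by \cref{lemma:projectors}(i)); equivalently $\vN^\top \vN$ or the relevant Gram matrix being invertible. Once injectivity on $\calN$ is established, the displayed identity is equivalent to $\vP_\calN(\vg(y_0) - \vg(y_1)) = \beta\, \vP_\calN(\vg(y_2) - \vg(y_3))$, which is exactly parallelism in $\calN$ of the two difference vectors by \cref{def:s-parallelism}, after noting $\vg(y_i) - \vg(y_j) = \vg_0(y_i) - \vg_0(y_j)$.

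\textbf{Main obstacle.} The routine direction is turning the log-ratios into dot products; the delicate step is the injectivity of $\vP_\calM$ on $\calN$ (equivalently, that no nonzero vector of $\calN$ is killed by the projection onto $\calM$). Intuitively $\calM$ and $\calN$ are "two sides" of the same bilinear form $\vf(\vx)^\top \vg_0(y)$ of rank $\dim\calM = \dim\calN$, so the map $\vP_\calM \vP_\calN : \calN \to \calM$ ought to be a bijection — but I would want to verify this carefully from the definitions $\calM = \mathrm{Im}(\vP_\calF \vP_\calG)$, $\calN = \ker(\vP_\calF \vP_\calG)^\perp$, perhaps via the singular value decomposition of $\vP_\calF \vP_\calG$ (whose nonzero singular values pair up an orthonormal basis of $\calN$ with one of $\calM$, with strictly positive singular values, giving injectivity directly). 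The "only if" direction then reverses each equivalence, and the scalar $\beta$ is the same throughout; the sign/nonzero caveat about $\beta$ is handled by \cref{remark:parallelism} as flagged in \cref{def:s-parallelism}.
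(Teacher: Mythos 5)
Your proposal is correct, and it reaches the conclusion by a genuinely different route than the paper. Both arguments begin identically, cancelling $Z(\vs)$ to turn each log-ratio into $\vf(\vs)^\top(\vg(y_0)-\vg(y_1))$. From there the paper stays on the unembedding side: it stacks finitely many embeddings into a matrix $\vF$ whose image is $\SIM{\vf}$, multiplies by the pseudo-inverse $(\vF^\top)^+$ to obtain $\vP_\calF \vg_1(y_0) = \beta\, \vP_\calF \vg_3(y_2)$, inserts $\vP_\calG$, and then applies $(\vP_\calF\vP_\calG)^+$, using the identity $(\vP_\calF\vP_\calG)^+(\vP_\calF\vP_\calG) = \vP_\calN$ from \cref{lemma:projectors}(ii) to land directly on $\vP_\calN \vg_1(y_0) = \beta\,\vP_\calN\vg_3(y_2)$. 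You instead push the embeddings through \cref{lemma:projectors}, observe that $\{\vP_\calM\vf(\vs)\}$ spans $\calM$ (correct: $\vP_\calM(\calF)=\calM$ since $\calM\subseteq\calF$), reduce to the vector identity $\vP_\calM\vP_\calN\vw_1 = \beta\,\vP_\calM\vP_\calN\vw_2$, and then need injectivity of $\vP_\calM$ restricted to $\calN$. That is the one step you flag as delicate, and it does hold; the cleanest closure is not the SVD but a dimension count you already have the ingredients for: by \cref{lemma:projectors}(vi), $\vP_\calM\vP_\calN = \vP_\calF\vP_\calG$, so $\vP_\calM(\calN) = \mathrm{Im}(\vP_\calM\vP_\calN) = \mathrm{Im}(\vP_\calF\vP_\calG) = \calM$, and since $\dim\calM = \dim\calN$ by \cref{lemma:projectors}(i), the map $\vP_\calM|_\calN : \calN \to \calM$ is a surjection between equal-dimensional spaces, hence injective. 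This injectivity is exactly the content that the paper packages into the pseudo-inverse identity $(\vP_\calF\vP_\calG)^+(\vP_\calF\vP_\calG)=\vP_\calN$ (the pseudo-inverse inverts $\vP_\calF\vP_\calG$ on $\ker(\vP_\calF\vP_\calG)^\perp=\calN$), so the two proofs rest on the same nondegeneracy of the bilinear form on $\calM\times\calN$; yours makes it explicit as a separate mini-lemma, while the paper's pseudo-inverse bookkeeping avoids stating it. Each equivalence in your chain reverses, so both directions of the lemma follow, and the sign flip between $\vg(y_1)-\vg(y_0)$ and $\vg(y_0)-\vg(y_1)$ is absorbed into the nonzero $\beta$ as in \cref{remark:parallelism}.
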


That is, parallel difference vectors for token pairs
correspond to
proportional likelihood ratios between the tokens in each token pair. \changed{Notice that, as in \cref{def:s-parallelism}, the difference vectors are parallel only in the space $\calN$. 
This implies that the components outside $\calN$ for two $\calN$-parallel vectors are allowed to not be parallel.}
\changed{All proofs for the results presented in this section are provided in \cref{sec:proofs-sec3}.}

\begin{figure}[!t]
    \centering
    \resizebox{0.5\textwidth}{!}{

    \begin{tikzpicture}

    \pgfdeclarehorizontalshading{diagonal}{100bp}{color(18bp)=(cyan); color(49bp)=(magenta)}

    \shade[top color=magenta,  rounded corners, bottom color=magenta,  opacity=0.5, transform canvas={rotate around={15:(0,0)}}] (0.05,-0.2) rectangle (0.35,0.9);
    
    \shade[top color=magenta!80,  rounded corners, bottom color=magenta!80,  opacity=0.5, transform canvas={rotate around={15:(0,0)}}] (0.36,-0.3) rectangle (0.71,1.7);
    
    \shade[top color=magenta!50,  rounded corners, bottom color=magenta!50,  opacity=0.5, transform canvas={rotate around={15:(0,0)}}] (0.72,-0.5) rectangle (1.07,2.8);
    
    \shade[top color=magenta!20,  rounded corners, bottom color=magenta!20,  opacity=0.5, transform canvas={rotate around={15:(0,0)}}] (1.08,-0.6) rectangle (1.43,3.5);
    
    \shade[top color=cyan!20,  rounded corners, bottom color=cyan!20,  opacity=0.5, transform canvas={rotate around={15:(0,0)}}] (1.44,-0.65) rectangle (1.79,3.9);
    
    \shade[top color=cyan!50,  rounded corners, bottom color=cyan!50,  opacity=0.5, transform canvas={rotate around={15:(0,0)}}] (1.80,-0.7) rectangle (2.15,3.9);
    
    \shade[top color=cyan!80,   rounded corners, bottom color=cyan!80,  opacity=0.5, transform canvas={rotate around={15:(0,0)}}] (2.16,-0.75) rectangle (2.51,3.8);
    
    \shade[top color=cyan, rounded corners, bottom color=cyan,  opacity=0.5, transform canvas={rotate around={15:(0,0)}}] (2.52,-0.8) rectangle (2.87,3.7);

    \shade[top color=cyan!90!blue!80, rounded corners, bottom color=cyan!90!blue!80,  opacity=0.5, transform canvas={rotate around={15:(0,0)}}] (2.88,-0.9) rectangle (3.23,3.6);

    \shade[top color=cyan!80!blue!70, rounded corners, bottom color=cyan!80!blue!70,  opacity=0.5, transform canvas={rotate around={15:(0,0)}}] (3.24,-1) rectangle (3.59,3.5);

    \draw[thick,->] (0,0) -- (4,0) node[anchor=north west] {$\vf(\cdot \cat \vq)_1$};
    \draw[thick,->] (0,0) -- (0,4) node[anchor=south east] {$\vf(\cdot \cat \vq)_2$ };
    \draw[->, thick,blue] (0,0) -- (2.87*0.967,0.259*2.87) node[above right] {\large $\vg_{n}(\textit{``yes''})$};

    \fill[black] (0.5,1.5) circle (1.5pt) node[above] {$\vf(\vs_1 \cat \vq)$};
    \draw[thick, dashed] (0.5,1.5) -- (0.8415,0.2255)
    node[above right] {\footnotesize $0.3$}
    ;
    
    \fill[black] (1.25,3) circle (1.5pt) node[above] {$\vf(\vs_2\cat \vq)$};
    \draw[thick, dashed] (1.25,3) -- (1.9167,0.5134) node[above right] {\footnotesize $0.7$};

    \draw[->,thick,decorate,decoration={snake,amplitude=.4mm,segment length=2mm,post length=1mm}, red] 
    (4.3,1.5) -- (5.8,1.5) node[midway, above] {\color{black} \large (Def. \ref{def:relational-linear-subspaces})};

    \begin{scope}[shift={(6.5,0)}]

        \shade[top color=cyan!80, rounded corners, bottom color=cyan!80,  opacity=0.5, transform canvas={rotate around={260:(8,1.8)}}] (-0.21,-0) rectangle (0.31,4.1);
        
        \shade[top color=cyan!50,  rounded corners,
        bottom color=cyan!50,  opacity=0.5, transform canvas={rotate around={260:(8,1.8)}}] (0.32,0.05) rectangle (0.82,4.2);
        
        \shade[top color=cyan!20,  rounded corners, bottom color=cyan!20,  opacity=0.5, transform canvas={rotate around={260:(8,1.8)}}] (0.83,0.1) rectangle (1.33,4.3);
        
        \shade[top color=magenta!20,  rounded corners, bottom color=magenta!20,  opacity=0.5, transform canvas={rotate around={260:(8,1.8)}}] (1.34,0.2) rectangle (1.84,4.4);
        
        \shade[top color=magenta!50, bottom color=magenta!50, rounded corners,  opacity=0.5, transform canvas={rotate around={260:(8,1.8)}}] (1.85,0.3) rectangle (2.35,4.5);
        
        \shade[top color=magenta!80, bottom color=magenta!80,  rounded corners,  opacity=0.5, transform canvas={rotate around={260:(8,1.8)}}] (2.36,0.4) rectangle (2.86,4.55);
        
        \shade[top color=magenta, bottom color=magenta,    rounded corners, opacity=0.5, transform canvas={rotate around={260:(8,1.8)}}] (2.87,0.5) rectangle (3.37,3);
    
        \draw[->, thick,blue]  (0,0) -- (0.174*4.2, 0.985*4.2)  node[right] {\large $ \vg_{o}(\textit{``English''})
        $};
    
        \draw[thick,->] (0,0) -- (4,0) node[anchor=north west] {$\vf(\cdot)_1$ };
        \draw[thick,->] (0,0) -- (0,4) node[anchor=south east] {$\vf(\cdot)_2$};

        \fill[black] (2.5,2.5) circle (1.5pt) node[above] {$\vf(\vs_2)$};
        \draw[thick, dashed] (2.5, 2.5) -- (0.5042, 2.853) node[above right] {\footnotesize $0.7$} ;

        \fill[black] (3,0.9) circle (1.5pt) node[above] {$\vf(\vs_1)$};
        \draw[thick, dashed] (3,0.9) -- (0.245, 1.386) node[above right] {\footnotesize $0.3$} ;
    \end{scope}

    \node at (2.25,5) {\Large $\vf (\cdot \cat \vq)$};
    \node at (8.5,5) {\Large $\vf (\cdot)$};
    \node at (13,5) {\large $\vg_o(\textit{``English''})^\top \vf (\vq)$};

    \node at (5.25,-1) {\color{red} \Large $=$};
    \node at (2.25,-1) {\Large $\vg_n(\textit{``yes''})^\top \mathbf{f} (\mathbf s \cat \mathbf q) 
    $};
    \node at (8.25,-1) {\Large $\vg_o(\textit{``English''})^\top \mathbf{f} (\mathbf s)  $};

    \node at (5.25,-2) {\Large $\vq=$``\textit{Is the text written in English''}};
    
    \node at (2.225,-2.7) {\Large $\vs_1=$``\textit{Wow New Orleans \includegraphics[width=0.8em]{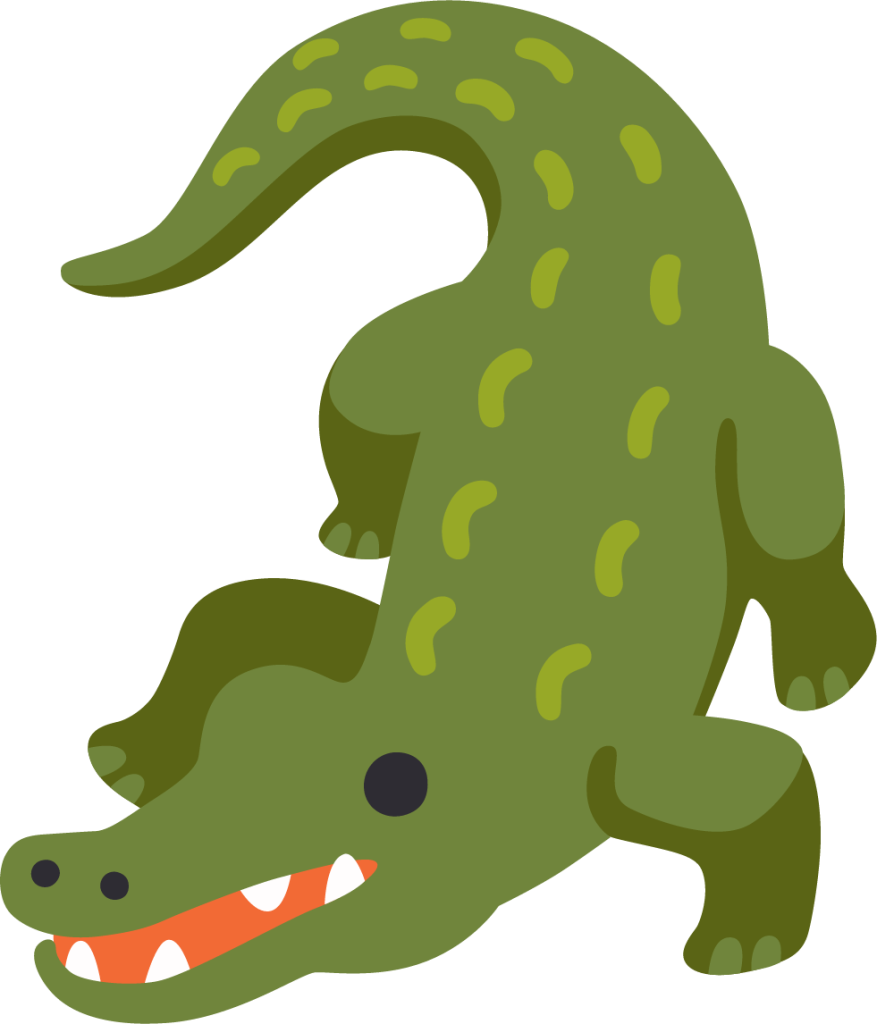}''}};

    \node at (8.275,-2.7) {\Large $\vs_2=$``\textit{Today pizza \includegraphics[width=0.8em]{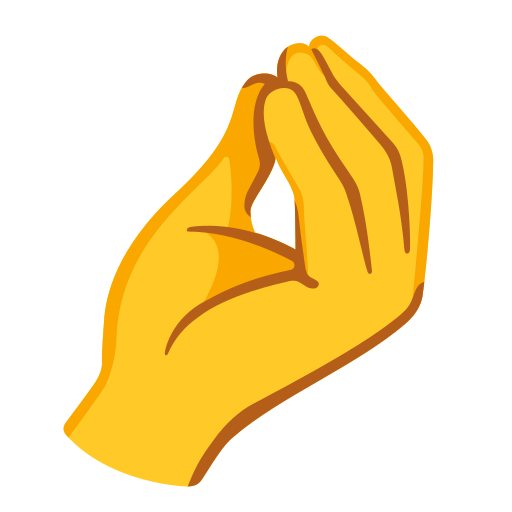}''}};

     \begin{scope}[shift={(12.5,-1)}]
        \draw[thick] (0, 0) rectangle (0.4, 5);
        \shade[top color=magenta, bottom color=magenta, opacity=0.5] (0, 0) rectangle (0.4, 0.5);
        
        \shade[top color=magenta!80, bottom color=magenta!80, opacity=0.5] (0, 0.5) rectangle (0.4, 1);
        
        \shade[top color=magenta!50, bottom color=magenta!50, opacity=0.5] (0, 1) rectangle (0.4, 1.5);
        
        \shade[top color=magenta!20, bottom color=magenta!20, opacity=0.5] (0, 1.5) rectangle (0.4, 2);
        
        \shade[top color=cyan!20, bottom color=cyan!20, opacity=0.5] (0, 2) rectangle (0.4, 2.5);
        
        \shade[top color=cyan!50, bottom color=cyan!50, opacity=0.5] (0, 2.5) rectangle (0.4, 3);
        
        \shade[top color=cyan!80, bottom color=cyan!80, opacity=0.5] (0, 3) rectangle (0.4, 3.5);

        \shade[top color=cyan, bottom color=cyan, opacity=0.5] (0, 3.5) rectangle (0.4, 4);

        \shade[top color=cyan!90!blue!80, bottom color=cyan!90!blue!80, opacity=0.5] (0,4) rectangle (0.4, 4.5);

        \shade[top color=cyan!80!blue!70, bottom color=cyan!80!blue!70, opacity=0.5] (0, 4.5) rectangle (0.4, 5);
        \draw[thick] (0.4,5) -- (0.6, 5) node[right] {$1.25$};
        
        \draw[thick] (0.4,4) -- (0.6, 4) node[right] {$1.0$};

        \draw[thick] (0.4,3) -- (0.6, 3) node[right] {$0.75$};
        
        \draw[thick] (0.4,2) -- (0.6,2) node[right] {$0.5$};
        
        \draw[thick] (0.4,1) -- (0.6,1) node[right] {$0.25$};

        \draw[thick] (0.4,0) -- (0.6,0) node[right] {$0.0$};
        
    \end{scope}
\end{tikzpicture}
    }

    \vspace{-0.5em}
    
    \caption{\textbf{Relational linear subspaces}. 
    The figure depicts the embedding function $\vf$ of a model $(\vf, \vg) \in \Theta$ with representation dimension $d=2$.
    Let $\vg_o(\textit{``English''}) := \vg(\textit{``English''}) - \vg(\textit{``other language''})$ and
    $\vg_{n}(\textit{``yes''}) := \vg(\textit{``yes''}) - \vg(\textit{``no''})$. %
    Here, $(\vf, \vg)$ linearly represents (\cref{def:relational-linear-subspaces})
    the subspace spanned by $\vg_o(\textit{``English''})$ for the query $\vq=$``\textit{Is the text written in English?''}.
    Accordingly, there exists a vector, here $\vg_n(\textit{``yes''})$, such that
    the dot product $\vg_o(\textit{``English''})^\top \vf(\vs)$, whose magnitude is represented through the color map on the right, matches the dot product
    ${\vg_n(\textit{``yes''})^\top\vf(\vs \cat \vq)}$, on the left.
    For ease of visualization, we set $\vg_n(\textit{``yes''})^\top \va_\vq = 0$ %
    \changed{and we display the values of the dot products for two input contexts $\vs_1, \vs_2$. %
    Intuitively, the dot product of a context's embedding $\vf(\vs)$ with 
    $\vg_{o}(\textit{``English''})$ %
    captures the log-probability ratio of ``\textit{yes}'' vs. ``\textit{no}'' as next tokens for the same context $\vs$ concatenated with the query $\vq$.}
    }
    \label{fig:relational-linearity}
\end{figure}

\subsection{Relational linear property}

\looseness-1
\changed{Beyond parallelism, the first property we define is relational linearity, introduced by \citet{paccanaro2001learning} and recently studied by \citet{hernandez2023linearity},
who found empirical evidence that hidden transformer layers in language models display this property.
}

\paragraph{Context-query-reply sequences.}
We
consider sequences $\vx \in \SeqA$ that can be decomposed as ${\vx = \vs \cat \vq \cat y}$, where $\vs \in \SeqA$ is termed \textit{context} (or \textit{subject)}, $\vq \in \SeqA$ is termed \textit{query} (or \textit{relation}), and $y \in \calA$ is termed \textit{reply} (or \textit{object}). The following example illustrates a semantically meaningful context-query-reply sequence.

\begin{example}
    \changed{Consider a sequence ${\vx = \vs \cat \vq \cat y}$ where $\vs=$``All roads lead to Rome'', $\vq=$``What is the written language?'', and $y=$``English''. We deliberately pick $y=$``English'' as the most likely next-token prediction following $\vs \cat \vq$  made by English speakers. 
    The string $\vs \cat \vq$ could also be provided as input to a language
    model to test whether it can recognize English language. Another example of context-query-reply sequence is $\vs=$``Rome'', $\vq=$``is the capital of'' and $y=$``Italy''.}      
\end{example}

\changed{As shown in the examples above, it is often possible to parse natural language expressions into strings $\vx = \vs \cat \vq \cat y$ which capture relational aspects encoded in substrings $\vs$ and $y$ through a substring $\vq$. 
In principle, one could also consider strings
$\vs \cat \vq$
involving queries whose expected reply is independent of the context, such as $\vq=$``\textit{Whatever follows reply with $42$}'', or paraphrases of the query, \eg $\vq'=$``\textit{Now reply with $42$}''.
{In \cref{sec:properties}, we disscuss how our analysis can capture these corner cases.}}
\looseness-1 %
\changed{Intuitively, relational linearity entails the property that all the information relevant for next-token prediction carried by the embeddings of the joint string $\vs \cat \vq$ (\ie $\vf(\vs \cat \vq)$) can be retrieved by considering the embeddings of $\vs$ (\ie $\vf(\vs)$) via an affine transformation.
To formalize this, we focus on the embeddings $\vf (\vs)$ of the model and on subspaces $\Gamma \subseteq \SIM{\vg_0}$ of the unembeddings, which contain the relevant tokens for $\vq$.%
\footnote{\looseness=-1 \changed{E.g., for a query $\vq=\textit{``What is the written language?''}$, next-tokens corresponding to different languages may be more probable and interesting to look at, thought it's ultimately a modeler's choice what subspace $\Gamma$ to focus on.}}
}

\begin{definition}[\notion: Relational linearity of $\vq$ in $\Gamma$] \label{def:linearity}
    For a model $(\vf, \vg) \in \Theta$,
    let {$\Gamma \subseteq \SIM{\vg_0}$} be a subspace.
    We say that $(\vf, \vg)$ linearly represents the query $\vq \in \SeqA$ on $\Gamma$, if there exist a matrix $\vA_{\vq} \in \bbR^{d\times d}$ and a vector $\va_{\vq} \in \bbR^d$ such that, for all $\vs \in \SeqA$,
    \[  \label{eq:linearity-proj}
        \vP_\Gamma \vf(\vs \cat \vq) =  \vP_\Gamma  \big( \vA_{\vq} \vf(\vs) + \va_{\vq} \big).
    \]
    When this holds, we define $\Gamma_\vq := \mathrm{Im}(\vA_\vq^\top \vP_\Gamma)$.
\end{definition}

Intuitively, all the information within $\vf(\vs \cat \vq)$ which is relevant to compute the probability of next-tokens in $\Gamma$ is captured, up to an affine transformation, by $\vf(\vs)$ in the subspace $\Gamma_\vq$. Indeed, one can show that, if $\vg_0(\vy) \in \Gamma$, then necessarily $\vf(\vs\cat\vq)^\top\vg_0(y) = ( \vA_{\vq} \vf(\vs) + \va_{\vq} )^\top \vg_0(y)$.
The spaces $\Gamma$ and $\Gamma_\vq$ are central to proving whether relational linearity holds for all $\sim_{EL}$-equivalent models, as we will show in \cref{sec:implications}.

\textbf{Connection to other linear properties}.
\changed{In the following, we show how to capture three additional linear properties %
building on the definition of relational linearity.}
We follow the taxonomy by \citet{park2023linear}.

\subsubsection{Linear subspaces ({\sc ls})}

\cchanged{%
Parallel vectors naturally define a one-dimensional subspace  $\Gamma \in \bbR^d$ that contains all of those vectors.}
 \cchanged{In language model representations, several such subspaces have been identified that appear to encode semantic and syntactic properties, %
for example translation across languages or the transformation of an adjective into its comparative or superlative form%
~\citep{mikolov2013distributed, park2023linear}.}
\cchanged{
Our relational formulation of this linear property focuses on whether these subspaces contain the information in the embeddings $\vf(\vs)$ which is relevant to predict the reply token to a query $\vq$ when appended to the context $\vs$.
E.g., this could happen if the embeddings projected in the subspace $\Gamma_{eng,ita}$, containing the vector $\vg(\textit{``Rome''}) - \vg(\textit{``Roma''})$,
encode whether the replies to the query $\vq=\textit{``Is written in English or Italian?''}$ are more likely to be $y=\textit{``English''}$ or $y'=\textit{``Italian''}$. %
}
We can capture this %
through the following definition:

\begin{definition}[{\sc ls}: Relational Linear Subspaces]
\label{def:relational-linear-subspaces}
    A model $(\vf, \vg) \in \Theta$ linearly represents a subspace ${\Gamma \subseteq \SIM{\vg_0}}$ relative to $\vq \in \SeqA$ 
    if for all \changed{pairs} of tokens $y_i, y_j \in \calA$ such that
    $\vg_i(y_j) := \vg(y_j) -\vg(y_i) \in \Gamma$, there exists a vector $\vgamma \in \SIM{\vg_0}$ such that
    ${\forall \vs \in \SeqA}$
    \[  \label{eq:lin-subspace}
        \vg_i(y_j)^\top \vf(\vs) = \vgamma^\top (\vf(\vs \cat \vq) - \va_\vq).
    \]
\end{definition}

We provide one example of this property in \cref{fig:relational-linearity}.
The \textsc{ls} property is implied by relational linearity (\cref{def:linearity}) in the following sense:

\begin{restatable}[\notion$\implies${\sc ls}]{proposition}{connectionls}
\label{prop:connection-lin-sub}
    Suppose that a model $(\vf, \vg) \in \Theta$
    (i) linearly represents $\vq$ on $\Gamma \subseteq \SIM{\vg_0}$, and 
    (ii) ${\Gamma_\vq \subseteq \SIM{\vg_0}}$, then 
    the model $(\vf, \vg)$ linearly represents %
    $\Gamma_\vq$ relative to $\vq$ (\cref{def:relational-linear-subspaces}).
\end{restatable}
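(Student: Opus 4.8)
The plan is to unwind \cref{def:linearity} and \cref{def:relational-linear-subspaces} and, for every admissible token pair, to produce an explicit witness vector $\vgamma$. By hypothesis~(i), relational linearity of $\vq$ on $\Gamma$ supplies a matrix $\vA_\vq \in \bbR^{d\times d}$ and a vector $\va_\vq \in \bbR^d$ with $\vP_\Gamma \vf(\vs \cat \vq) = \vP_\Gamma(\vA_\vq \vf(\vs) + \va_\vq)$ for all $\vs \in \SeqA$, together with the definition $\Gamma_\vq = \mathrm{Im}(\vA_\vq^\top \vP_\Gamma)$. The first step I would record is the structural identity $\mathrm{Im}(\vA_\vq^\top \vP_\Gamma) = \{ \vA_\vq^\top \vv : \vv \in \Gamma \}$, which holds because $\mathrm{Im}(\vP_\Gamma) = \Gamma$; in particular every vector of $\Gamma_\vq$ is the image under $\vA_\vq^\top$ of some vector already lying in $\Gamma \subseteq \SIM{\vg_0}$.

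Next, I would fix tokens $y_i, y_j \in \calA$ with $\vg_i(y_j) := \vg(y_j) - \vg(y_i) \in \Gamma_\vq$ (noting $\vg_i(y_j) = \vg_0(y_j) - \vg_0(y_i) \in \SIM{\vg_0}$, consistent with (ii)). By the structural identity there is $\vv \in \Gamma$ with $\vA_\vq^\top \vv = \vg_i(y_j)$; since $\Gamma \subseteq \SIM{\vg_0}$, this $\vv$ lies in $\SIM{\vg_0}$, so I take $\vgamma := \vv$ as the witness. It then remains to verify Equation~\eqref{eq:lin-subspace} with $\Gamma$ replaced by $\Gamma_\vq$, which I would do for arbitrary $\vs$ by chaining three facts: (a) $\vg_i(y_j)^\top \vf(\vs) = (\vA_\vq^\top \vv)^\top \vf(\vs) = \vv^\top \vA_\vq \vf(\vs)$; (b) since $\vv = \vP_\Gamma \vv$ and $\vP_\Gamma^\top = \vP_\Gamma$, this equals $\vv^\top \vP_\Gamma \vA_\vq \vf(\vs)$; (c) the relational-linearity identity rewrites $\vP_\Gamma \vA_\vq \vf(\vs)$ as $\vP_\Gamma(\vf(\vs \cat \vq) - \va_\vq)$, and absorbing $\vP_\Gamma$ back into $\vv$ yields $\vv^\top(\vf(\vs \cat \vq) - \va_\vq) = \vgamma^\top(\vf(\vs \cat \vq) - \va_\vq)$, which is exactly what \cref{def:relational-linear-subspaces} requires.

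Finally, I would observe that hypothesis~(ii), $\Gamma_\vq \subseteq \SIM{\vg_0}$, is precisely the typing condition making the conclusion ``$(\vf, \vg)$ linearly represents $\Gamma_\vq$ relative to $\vq$'' a well-formed instance of \cref{def:relational-linear-subspaces}; it enters there rather than in the computation. I do not anticipate a genuine obstacle, since the argument is linear-algebraic bookkeeping; the two points that need care are (1) pulling $\vg_i(y_j)$ back through $\vA_\vq^\top$ to a vector \emph{inside} $\Gamma$ (not merely in $\bbR^d$), which is what both licenses the symmetry/idempotence manipulations in steps~(b)--(c) and forces $\vgamma \in \SIM{\vg_0}$, and (2) identifying the $\va_\vq$ named in \cref{def:relational-linear-subspaces} with the offset furnished by relational linearity, so that the two occurrences of $\va_\vq$ are consistent.
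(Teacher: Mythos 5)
Your proof is correct and follows essentially the same route as the paper's: both pull $\vg_i(y_j) \in \Gamma_\vq$ back through $\vA_\vq^\top$ to a witness $\vgamma \in \Gamma \subseteq \SIM{\vg_0}$, insert $\vP_\Gamma$ using $\vP_\Gamma \vgamma = \vgamma$ and symmetry of the projector, and then apply the relational-linearity identity to rewrite $\vP_\Gamma \vA_\vq \vf(\vs)$ as $\vP_\Gamma(\vf(\vs \cat \vq) - \va_\vq)$. The only content in the paper's proof you omit is a short remark (via the pseudo-inverse of $\vA_\vq^\top \vP_\Gamma$) that the witness is nonzero whenever $\vg_i(y_j) \neq \mathbf{0}$, which is not required by \cref{def:relational-linear-subspaces} as stated.
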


\subsubsection{\textbf{Linear probing} ({\sc lp})}

\changed{There is empirical evidence that, in language models, sentence embeddings can be linearly separated with good accuracy based on the language of the corresponding sentences \citep{park2023linear,chang2022geometry}.}%
This property is also termed {\em linear probing}
\citep{alain2016understanding, kim2018interpretability}.
Below, we redefine linear probing as a relational property, based on~\cref{def:linearity}:

\begin{definition}[{\sc lp}: Relational Linear Probing] \label{def:linear-probing}
    We say that a model $(\vf, \vg) \in \Theta$ can be linearly probed for a query $\vq \in \SeqA$ and a collection $\calY_P \subseteq \calA$ of $\ell$ elements if %
    there exist $\vW \in \bbR^{\ell \times d}$ and $\vb \in \bbR^\ell$ such that for all $\vs \in \SeqA$ and $\forall i \in [\ell]$ 
    \[
        \mathrm{softmax}\big(\vW \vf(\vs) + \vb \big)_i = p_{\vf, \vg}(y_i \mid \vs \cat \vq; \calY_P),
    \]
    where $p(y \mid \cdot\,; \calY_P) = p(y \mid \cdot\,) / (\sum_{y'\in\calY_P}p(y'\mid \cdot\,))$ is
    \changed{the conditional %
    distribution} restricted to the set $\calY_P$.
    
\end{definition}

To illustrate why this is termed {\textit{linear probing}},
suppose a model given the query $\vq=$``\textit{Is the text written in English?}''
discriminates input sequences $\vs \in \SeqA$ between positive $y_0=$``\textit{yes}'' and negative examples $y_1=$``\textit{no}''---\changed{that is, it assigns high probability ${p_{\vf, \vg} (y_0 \mid \vs \cat \vq)}$ to sequences $\vs$ corresponding to English sentences, and high probability $p_{\vf, \vg} (y_1 \mid \vs \cat \vq)$ to non-English sentences.} %
Then,
these conditional distributions %
can be evaluated directly from $\vf(\vs)$ via a linear probe.
\cref{fig:second-page} includes an illustration of {\sc lp}. Below, 
we relate {\sc lp} (\cref{def:linear-probing}) to \notion (\cref{def:linearity}):  

\begin{restatable}[\notion$\implies${\sc lp}]{proposition}{connectionlp}
    \label{prop:connection-probing}
    If a model $(\vf, \vg) \in \Theta$ (i) linearly represents $\vq$ on $\Gamma$, and (ii) $\vg(y_i) -\vg(y_j) \in  \Gamma$ for all $y_i \in \calY_P$, then the model can be linear probed (\cref{def:linear-probing}) for $\vq$ and $\calY_P$, with parameters given by
    {$\vW = (\vw_1, \ldots, \vw_\ell)^\top$
    and $\vb =(b_1, \ldots, b_\ell)^\top$, where
    $ \vw_i := \vA_\vq^\top \vg(y_i) 
    $ and $
        b_i := (\va_\vq)^\top \vg(y_i)
    $}.
\end{restatable}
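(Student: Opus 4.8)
\textbf{Proof proposal for \cref{prop:connection-probing}.}

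The plan is to unwind the definition of linear probing (\cref{def:linear-probing}) and reduce it to an equality of dot products, which will then follow from the relational linearity assumption (\cref{def:linearity}) together with the softmax parametrization of the next-token predictor in Equation~\eqref{eq:next-token-predictor}. First I would recall that for any context $\vs$ and query $\vq$, the restricted conditional $p_{\vf,\vg}(y_i \mid \vs\cat\vq; \calY_P)$ is, by definition, $\exp(\vf(\vs\cat\vq)^\top \vg(y_i)) / \sum_{y_j\in\calY_P}\exp(\vf(\vs\cat\vq)^\top\vg(y_j))$, i.e. a softmax over the logits $\{\vf(\vs\cat\vq)^\top\vg(y_j)\}_{y_j\in\calY_P}$. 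So it suffices to show that the proposed linear-probe logits $\vw_i^\top\vf(\vs)+b_i$ equal these logits $\vf(\vs\cat\vq)^\top\vg(y_i)$, because softmax is invariant to adding a common constant and is applied to the same index set $\calY_P$; any discrepancy that is constant in $i$ washes out.

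The core computation is then: substitute the proposed values $\vw_i = \vA_\vq^\top\vg(y_i)$ and $b_i = \va_\vq^\top\vg(y_i)$ to get $\vw_i^\top\vf(\vs) + b_i = \vg(y_i)^\top(\vA_\vq\vf(\vs)+\va_\vq)$. It remains to compare $\vg(y_i)^\top(\vA_\vq\vf(\vs)+\va_\vq)$ with $\vg(y_i)^\top\vf(\vs\cat\vq)$. The relational linearity hypothesis gives only $\vP_\Gamma\vf(\vs\cat\vq) = \vP_\Gamma(\vA_\vq\vf(\vs)+\va_\vq)$, i.e. equality after projecting onto $\Gamma$, not the full vectors. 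This is where hypothesis (ii) enters: we are told $\vg(y_i)-\vg(y_j)\in\Gamma$ for all $y_i\in\calY_P$. Pick any fixed reference index, say $y_1\in\calY_P$; then $\vg(y_i) = \vg(y_1) + \bm{\delta}_i$ with $\bm\delta_i := \vg(y_i)-\vg(y_1)\in\Gamma$, so $\vg(y_i)^\top\vu = \vg(y_1)^\top\vu + \bm\delta_i^\top\vu = \vg(y_1)^\top\vu + \bm\delta_i^\top\vP_\Gamma\vu$ using $\vP_\Gamma\bm\delta_i=\bm\delta_i$ and self-adjointness of $\vP_\Gamma$. Applying this with $\vu = \vf(\vs\cat\vq)$ and with $\vu = \vA_\vq\vf(\vs)+\va_\vq$, the $\vP_\Gamma\vu$ terms agree by relational linearity, so $\vg(y_i)^\top\vf(\vs\cat\vq)$ and $\vg(y_i)^\top(\vA_\vq\vf(\vs)+\va_\vq)$ differ only by the index-independent term $\vg(y_1)^\top(\vf(\vs\cat\vq) - \vA_\vq\vf(\vs)-\va_\vq)$. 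Since softmax over $\calY_P$ is unchanged by adding this constant to every logit, the two softmax vectors coincide, which is exactly the claimed identity $\mathrm{softmax}(\vW\vf(\vs)+\vb)_i = p_{\vf,\vg}(y_i\mid\vs\cat\vq;\calY_P)$.

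The main subtlety — and the only place care is needed — is precisely that relational linearity is an equality modulo $\vP_\Gamma$, not an equality of raw embeddings; naively one would want $\vf(\vs\cat\vq) = \vA_\vq\vf(\vs)+\va_\vq$, which is false in general. The fix is the observation above that only the differences of the unembeddings $\vg(y_i)$ over $\calY_P$ matter for the softmax, and those differences lie in $\Gamma$ by assumption (ii), so projecting onto $\Gamma$ loses nothing relevant. A small bookkeeping point: the reference token $y_1$ need not itself satisfy any special condition, since (ii) guarantees $\vg(y_i)-\vg(y_j)\in\Gamma$ for \emph{all} pairs in $\calY_P$; alternatively one can phrase the argument symmetrically by noting $\vg(y_i)^\top\vu - \vg(y_j)^\top\vu = (\vg(y_i)-\vg(y_j))^\top\vP_\Gamma\vu$ for all $i,j$, so all pairwise logit gaps are determined by $\vP_\Gamma\vu$, hence by relational linearity. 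Everything else is routine substitution, so I would keep the written proof to roughly these few lines.
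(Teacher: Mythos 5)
Your proposal is correct and follows essentially the same route as the paper's proof: both reduce the claim to showing that the probe logits $\vg(y_i)^\top(\vA_\vq\vf(\vs)+\va_\vq)$ and the true logits $\vg(y_i)^\top\vf(\vs\cat\vq)$ agree up to an $i$-independent constant, using that pairwise differences of unembeddings over $\calY_P$ lie in $\Gamma$ so relational linearity applies after projection. The only difference is presentational — you invoke softmax shift-invariance directly, whereas the paper computes the common normalizing factor $C$ explicitly by summing over $\calY_P$.
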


\subsubsection{\textbf{Linear Steering}}%
\label{sec:linear-steering}

\changed{Another %
property %
that has attracted considerable attention is the {\em linear steering property} \citep{stolfo2024improving}, 
also termed {\em linear intervening property} by \citet{park2023linear}. 
By knowing what queries are linearly represented by the model (as per \cref{def:linearity}), 
this property allows us to \textit{steer} the model embeddings such that the most-likely reply to a given query changes, while the replies to other queries remain unaffected. 
In \cref{sec:app-additivity}, we define a relational version of this property, and show under what conditions it is implied by the relational linear property (\cref{prop:linear-additivity}).}

\section{Linear properties shared by all distribution-equivalent models}
\label{sec:implications}

Based on \cref{thm:partial-identifiability}, we can now analyze which linear properties are shared across models expressing the same next-token distribution. 
\changed{We start from relational linearity (\cref{def:linearity}).}
To this end, pick a model $(\vf, \vg)$ that linearly represents $\vq$ on $\Gamma$, and consider the space $\Gamma_\vq := \mathrm{Im}(\vA_\vq^\top \vP_\Gamma)$. We show that under an additional condition on $\Gamma$ and $\Gamma_\vq$,
two models that are $\sim_{EL}$-equivalent share the same linear properties (results from this section are proved in \cref{sec:proof-sec5}):

\begin{restatable}{theorem}{pidlinrep}
\label{prop:part-id-lin-rep-tentative}
    For two models $(\vf, \vg), (\tilde{\vf}, \tilde{\vg}) \in \Theta$ s.t. $(\vf, \vg) \sim_{EL} (\tilde{\vf}, \tilde{\vg})$,
    if
    $\vf$ linearly represents $\vq$ on $\Gamma \subseteq \calN$, and $\Gamma_\vq \subseteq \calM$, then
    $\tilde \vf$ linearly represents $\vq$ on $\tilde \Gamma \subseteq \tilde \calN$, where $\tilde{\Gamma} = \mathrm{Im}(\vN^+ \vP_\Gamma)$ and%
    { $\vN$ is the matrix relating $\vg_0$ and $\tilde{\vg}_0$ by the equivalence relation in \cref{def:klinear-equiv}.}
\end{restatable}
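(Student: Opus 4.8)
The plan is to start from the relational linearity of $(\vf, \vg)$, namely $\vP_\Gamma \vf(\vs \cat \vq) = \vP_\Gamma(\vA_\vq \vf(\vs) + \va_\vq)$ for all $\vs$, and push it through the $\sim_{EL}$ relation to produce an analogous identity for $\tilde\vf$ on the target subspace $\tilde\Gamma := \mathrm{Im}(\vN^+ \vP_\Gamma)$. The key algebraic facts I would use are: (a) from \cref{def:klinear-equiv}, $\vP_\calN \vg_0(y) = \vN \vP_{\tilde\calN}\tilde\vg_0(y)$ and $\vP_\calM \vf(\vx) = \vM \vP_{\tilde\calM}\tilde\vf(\vx)$, with $\vM,\vN$ full-rank $\bbR^{d\times\tilde d}$ defining invertible maps $\calM\to\tilde\calM$, $\calN\to\tilde\calN$, and $\vM^\top\vN = \vP_{\tilde\calM}\vP_{\tilde\calN}$; (b) the hypotheses $\Gamma\subseteq\calN$ and $\Gamma_\vq\subseteq\calM$, which ensure $\vP_\calN$ acts as identity on $\Gamma$-vectors and $\vP_\calM$ acts as identity on $\Gamma_\vq$-vectors; and (c) \cref{prop:elpreserves}, $\vf(\vx)^\top\vg_0(y) = \tilde\vf(\vx)^\top\tilde\vg_0(y)$, which I expect to be the cleanest bridge.

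First I would restate relational linearity in the dot-product form noted right after \cref{def:linearity}: for every $y$ with $\vg_0(y)\in\Gamma$ and every $\vs$, $\vf(\vs\cat\vq)^\top\vg_0(y) = (\vA_\vq\vf(\vs)+\va_\vq)^\top\vg_0(y)$. (Here I should take $y_0$ as the pivot, or equivalently work with $\vg_0$; since $\Gamma\subseteq\calN\subseteq\SIM{\vg_0}$ this is consistent.) Then I rewrite the left side using \cref{prop:elpreserves} as $\tilde\vf(\vs\cat\vq)^\top\tilde\vg_0(y)$. For the right side, I substitute $\vf(\vs) = $ (its $\calM$-projection plus an orthogonal remainder) and use $\Gamma_\vq\subseteq\calM$ together with the definition $\Gamma_\vq = \mathrm{Im}(\vA_\vq^\top\vP_\Gamma)$: since $\vg_0(y)\in\Gamma$, $\vA_\vq^\top\vg_0(y) = \vA_\vq^\top\vP_\Gamma\vg_0(y)\in\Gamma_\vq\subseteq\calM$, so $(\vA_\vq\vf(\vs))^\top\vg_0(y) = (\vA_\vq\vf(\vs))^\top\vP_\calM\vg_0(y)$... more usefully, $(\vA_\vq\vf(\vs))^\top\vg_0(y) = \vf(\vs)^\top\vA_\vq^\top\vg_0(y) = (\vP_\calM\vf(\vs))^\top\vA_\vq^\top\vg_0(y)$ because $\vA_\vq^\top\vg_0(y)\in\calM$. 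Now replace $\vP_\calM\vf(\vs) = \vM\vP_{\tilde\calM}\tilde\vf(\vs)$ and $\vg_0(y) = \vP_\calN\vg_0(y) = \vN\vP_{\tilde\calN}\tilde\vg_0(y)$ (the first equality because $\vg_0(y)\in\Gamma\subseteq\calN$), and use $\vM^\top\vN = \vP_{\tilde\calM}\vP_{\tilde\calN}$ to collapse the resulting product. This should yield $\tilde\vf(\vs\cat\vq)^\top\tilde\vg_0(y) = (\tilde\vA_\vq\tilde\vf(\vs) + \tilde\va_\vq)^\top\tilde\vg_0(y)$ for an explicit $\tilde\vA_\vq$ and $\tilde\va_\vq$ built from $\vN^+, \vM^+, \vA_\vq, \va_\vq$ and the projectors — the candidate being roughly $\tilde\vA_\vq = \vN^+\vA_\vq\vM$ (up to projector insertions) and $\tilde\va_\vq = \vN^+\va_\vq$-like.

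Finally, to go from "the dot products with all $\tilde\vg_0(y)$ for $\vg_0(y)\in\Gamma$ agree" to the projected vector identity $\vP_{\tilde\Gamma}\tilde\vf(\vs\cat\vq) = \vP_{\tilde\Gamma}(\tilde\vA_\vq\tilde\vf(\vs)+\tilde\va_\vq)$, I would argue that $\tilde\Gamma = \mathrm{Im}(\vN^+\vP_\Gamma)$ is exactly the span of the vectors $\vN^+\vg_0(y)$ over $y$ with $\vg_0(y)\in\Gamma$ — using that $\Gamma = \SIM{\vg_0|_{\{y:\vg_0(y)\in\Gamma\}}}$ is, up to the pivot choice, spanned by such unembeddings, and that $\vN^+$ applied to $\vP_\calN\vg_0(y)=\vg_0(y)$ gives the right image — so that testing against $\{\tilde\vg_0(y)\}$ is equivalent to testing against a spanning set of $\tilde\Gamma$. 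I also need $\tilde\Gamma\subseteq\tilde\calN$, which follows since $\vN^+$ maps into $\tilde\calN$ (it is the pseudo-inverse of a map whose image-side relevant subspace is $\tilde\calN$) — this is where the explicit form of $\vN$ from \cref{sec:app-equivalence-relation} may be needed to nail down $\mathrm{Im}(\vN^+)\subseteq\tilde\calN$ and $\vN^+\vN|_{\tilde\calN} = \vP_{\tilde\calN}$. The main obstacle I anticipate is precisely this pseudo-inverse bookkeeping: $\vM$ and $\vN$ are rectangular and only invertible as maps $\calM\to\tilde\calM$, $\calN\to\tilde\calN$, so I must be careful that every $\vN^+\vN$, $\vM\vM^+$, $\vN^+\vP_\Gamma$ simplification is applied only to vectors living in the correct subspace, and that the spanning argument relating $\Gamma$ to actual unembedding differences is airtight (it may require assuming, or observing from \cref{def:relational-linear-subspaces}-style hypotheses, that $\Gamma$ is spanned by realized $\vg_i(y_j)$'s, not an arbitrary subspace of $\calN$).
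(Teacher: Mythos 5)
Your route is genuinely different from the paper's: the paper manipulates the projected vector identity $\vP_\Gamma \vf(\vs \cat \vq) = \vP_\Gamma \vA_\vq \vf(\vs) + \vP_\Gamma \va_\vq$ directly, inserting $\vP_\calM, \vP_\calN$ and the matrices $\vM, \vN$ via a chain of pseudo-inverse identities (\cref{lemma:projectors}), whereas you transport scalar dot-product identities through \cref{prop:elpreserves} and then reassemble the projected identity on $\tilde\Gamma$ by a duality/spanning argument. The dot-product route can be made to work and is arguably more transparent, but as written it has one genuine gap.

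The gap is exactly the one you flag at the end but do not resolve: you test the linearity identity only against vectors $\vg_0(y)$ with $\vg_0(y) \in \Gamma$, and your identification $\tilde\Gamma = \mathrm{span}\{\vN^+\vg_0(y) : \vg_0(y)\in\Gamma\}$ requires $\Gamma$ to be spanned by realized unembedding differences. The theorem makes no such assumption: $\Gamma$ is an arbitrary subspace of $\calN$, and the set $\{y \in \calA : \vg_0(y)\in\Gamma\}$ may contribute only the zero vector (e.g.\ $\Gamma = \mathrm{span}(\vg_0(y_1)+\vg_0(y_2))$ with neither summand in $\Gamma$), in which case your transported identities are vacuous and determine nothing about $\vP_{\tilde\Gamma}\tilde\vf$. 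The fix does not require an extra hypothesis: test against \emph{every} $\vgamma \in \Gamma$ rather than only realized unembeddings. The relational linearity definition gives $\vgamma^\top\vf(\vs\cat\vq) = \vgamma^\top(\vA_\vq\vf(\vs)+\va_\vq)$ for all $\vgamma\in\Gamma$ (multiply the projected identity by $\vgamma^\top$ and use $\vP_\Gamma\vgamma=\vgamma$), and the bridge to the tilde model generalizes beyond \cref{prop:elpreserves}: for $\vgamma\in\Gamma\subseteq\calN$ and $\tilde\vgamma := \vN^+\vgamma \in\tilde\calN$ one has, using $\vP_\calG\vP_\calF = \vP_\calN\vP_\calM$ from \cref{lemma:projectors}~(vi) and $\vN^\top\vM = \vP_{\tilde\calN}\vP_{\tilde\calM}$,
\begin{align*}
\vgamma^\top\vf(\vx) = \vgamma^\top\vP_\calM\vf(\vx) = (\vN\tilde\vgamma)^\top\vM\vP_{\tilde\calM}\tilde\vf(\vx) = \tilde\vgamma^\top\vP_{\tilde\calN}\vP_{\tilde\calM}\tilde\vf(\vx) = \tilde\vgamma^\top\tilde\vf(\vx) \, ,
\end{align*}
and similarly $\vgamma^\top\vA_\vq\vf(\vs) = (\vA_\vq^\top\vgamma)^\top\vP_\calM\vf(\vs) = \tilde\vgamma^\top\vN^\top\vA_\vq\vM\vP_{\tilde\calM}\tilde\vf(\vs)$ using $\Gamma_\vq\subseteq\calM$. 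Since $\{\vN^+\vgamma:\vgamma\in\Gamma\}$ does span $\tilde\Gamma=\mathrm{Im}(\vN^+\vP_\Gamma)$, this yields the projected identity on $\tilde\Gamma$ with $\tilde\vA_\vq = \vN^\top\vA_\vq\vM\vP_{\tilde\calM}$ and $\tilde\va_\vq = \vN^\top\va_\vq$ (note $\vN^\top$, not $\vN^+$, in front), and $\tilde\Gamma\subseteq\mathrm{Im}(\vN^+)=\ker(\vN)^\perp=\tilde\calN$ as required. With that repair your argument is complete and matches the paper's conclusion.
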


\changed{This shows how, under the 
the condition that $\Gamma \subseteq \calN$ and $\Gamma_\vq \subseteq \calM$,
relational linearity (\notion)
\textit{is a property of all or none next-token predictors modeling the same conditional distribution}.} As a consequence, the same holds for \textsc{ls} (by \cref{prop:connection-lin-sub}) and \textsc{lp} (by \cref{prop:connection-probing}). %
Intuitively, the extra condition %
underlies that relational linearity of $(\vf, \vg)$ is displayed by the components of $\vf$ that contribute to the dot product with $\vg_0$. A $\sim_{EL}$-equivalent model $(\tilde \vf, \tilde \vg)$ would linearly transform these components, thus preserving relational linearity. Vice versa, since all components of $\vf$ outside $\calM$ can be arbitrarily distorted, any property of $(\vf, \vg)$ that depends on those components may not hold for $(\tilde \vf, \tilde \vg)$. The extra condition precisely avoids that.\footnote{{If $\Gamma \not \subseteq \calN$, then relational linearity (\cref{def:linearity}) would be trivially satisfied for $(\vf, \vg)$ and, in turn, also for $(\tilde{\vf}, \tilde{\vg})$, because %
$\vgamma \in \Gamma$ would give $\vgamma^\top \vf(\vx) =0$, and so $\vP_\Gamma \vf(\vx) = \mathbf{0}$.}}
Notice that the special case where the diversity condition (\cref{def:diversity-condition}) holds implies a similar conclusion because the condition that $\Gamma \subseteq \calN$ and $ \Gamma_\vq \subseteq \calM$ is then always satisfied (as $\calM = \calN = \bbR^d$). %
\changed{This testifies that (a special case of) relational linearity is shared among $\sim_{EL}$ models. In contrast,}
vector parallelism may not be preserved: Two parallel vectors in one model $(\vf, \vg)$ may not be parallel in another model $(\tilde \vf, \tilde \vg)$ with the same conditional distribution.
They remain parallel only within the subspaces $\calN$ and $\tilde \calN$, respectively:

\begin{restatable}{theorem}{parallelismnotpreserved}
    \label{prop:no-paral}
    For two models $(\vf, \vg), (\tilde{\vf}, \tilde{\vg}) \in \Theta$, such that $(\vf, \vg) \sim_{EL} (\tilde{\vf}, \tilde{\vg})$, the vectors $\vgamma, \vgamma' \in \SIM{\vg_0}$ are parallel within $\calN$ if and only if the corresponding vectors $\tilde{\vgamma}, \tilde{\vgamma}' \in \SIM{\tilde \vg_0}$ are parallel in $\tilde \calN$.
\end{restatable}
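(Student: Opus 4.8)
The plan is to reduce the statement to a direct computation using the explicit relation between $\vg_0$ and $\tilde\vg_0$ supplied by $\sim_{EL}$, and Lemma~\ref{lemma:parallelism} which characterizes $\calN$-parallelism of difference vectors in terms of proportional log-probability ratios. First I would recall that, by Definition~\ref{def:klinear-equiv}, there is a full-rank $\vN \in \bbR^{d \times \tilde d}$ restricting to an invertible map $\calN \to \tilde\calN$, with $\vP_\calN \vg_0(y) = \vN \vP_{\tilde\calN}\tilde\vg_0(y)$ for all $y$; conversely $\vP_{\tilde\calN}\tilde\vg_0(y) = \vN^+ \vP_\calN \vg_0(y)$, using that $\vN$ restricted to $\calN$ is invertible onto $\tilde\calN$. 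The natural correspondence between vectors $\vgamma \in \SIM{\vg_0}$ and $\tilde\vgamma \in \SIM{\tilde\vg_0}$ is $\tilde\vgamma := \vN^+ \vP_\calN \vgamma$ (equivalently $\vP_\calN\vgamma = \vN \vP_{\tilde\calN}\tilde\vgamma$); I would make this identification precise at the start, noting it is the same correspondence that sends $\vg_0(y)$ to $\tilde\vg_0(y)$ restricted to $\calN,\tilde\calN$.

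The core step is then purely linear-algebraic. Suppose $\vgamma,\vgamma'$ are parallel in $\calN$, so $\vP_\calN\vgamma = \beta\,\vP_\calN\vgamma'$ for some $\beta \neq 0$. Apply $\vN^+$ to both sides: $\vP_{\tilde\calN}\tilde\vgamma = \vN^+\vP_\calN\vgamma = \beta\,\vN^+\vP_\calN\vgamma' = \beta\,\vP_{\tilde\calN}\tilde\vgamma'$, which is exactly parallelism of $\tilde\vgamma,\tilde\vgamma'$ in $\tilde\calN$ with the same constant $\beta$. The reverse direction is symmetric: parallelism in $\tilde\calN$ gives $\vP_{\tilde\calN}\tilde\vgamma = \beta\,\vP_{\tilde\calN}\tilde\vgamma'$, and applying $\vN$ (which is the inverse of $\vN^+$ on these subspaces) yields $\vP_\calN\vgamma = \beta\,\vP_\calN\vgamma'$. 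So the equivalence holds with the \emph{same} proportionality constant in both directions.

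The only genuine subtlety — and the step I would be most careful about — is ensuring the correspondence $\vgamma \leftrightarrow \tilde\vgamma$ is well-defined and that $\vN^+$ acts as a genuine two-sided inverse between $\calN$ and $\tilde\calN$. This needs: (a) $\vN$ has full rank and $\dim\calN = \dim\tilde\calN$ (guaranteed by Definition~\ref{def:klinear-equiv}(i) together with Lemma~\ref{lemma:projectors}(i)), so $\vN|_\calN : \calN \to \tilde\calN$ is a linear isomorphism; (b) consequently $\vN^+ \vN \vP_{\tilde\calN} = \vP_{\tilde\calN}$ and $\vN \vN^+ \vP_\calN = \vP_\calN$ on the relevant subspaces, which is the standard behaviour of the pseudo-inverse restricted to the row/column spaces of $\vN$. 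Once this is pinned down, the argument above is immediate. Alternatively, if one prefers a probabilistic route, one can instead invoke Lemma~\ref{lemma:parallelism} to translate $\calN$-parallelism of $\vgamma = \vg(y_1)-\vg(y_0)$, $\vgamma' = \vg(y_3)-\vg(y_2)$ into the statement that $\log\frac{p_{\vf,\vg}(y_0\mid\vs)}{p_{\vf,\vg}(y_1\mid\vs)} = \beta\log\frac{p_{\vf,\vg}(y_2\mid\vs)}{p_{\vf,\vg}(y_3\mid\vs)}$ for all $\vs$, then use Proposition~\ref{prop:elpreserves} (which gives $p_{\vf,\vg} = p_{\tilde\vf,\tilde\vg}$) to transfer the identity to $(\tilde\vf,\tilde\vg)$, and apply Lemma~\ref{lemma:parallelism} again in the other model to conclude $\tilde\calN$-parallelism; this handles general $\vgamma,\vgamma' \in \SIM{\vg_0}$ by writing them as (or approximating them by) combinations of such difference vectors, but the direct linear-algebra proof is cleaner and I would present that as the main argument.
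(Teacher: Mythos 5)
Your proposal is correct and follows essentially the same route as the paper's proof: both start from $\vP_\calN\vgamma = \beta\,\vP_\calN\vgamma'$, use the $\sim_{EL}$ relation $\vP_\calN\vgamma = \vN\vP_{\tilde\calN}\tilde\vgamma$, and apply the pseudo-inverse $\vN^+$ (with $\vN^+\vN = \vP_{\tilde\calN}$) to transfer the proportionality to $\tilde\calN$, with the reverse direction by symmetry. Your explicit care about the correspondence $\vgamma \leftrightarrow \tilde\vgamma$ being a well-defined bijection between $\calN$ and $\tilde\calN$ is a welcome clarification of a point the paper leaves implicit, but it is not a different argument.
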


\section{Discussion}
\label{sec:discussion}

{%
\cref{prop:part-id-lin-rep-tentative} is an example of a property that all distribution-equivalent next-token predictors,
as characterized by our identifiability result~(\cref{thm:partial-identifiability}),
must share. %
One may then 
ask whether the widely observed linear properties of language models 
are indeed examples of shared properties akin to the one in~\cref{prop:part-id-lin-rep-tentative}. Tautologically, claims about the ubiquity of linear properties cannot solely be based on observations of linearity in individual model instances. One might thus
hypothesize that \textbf{only those properties shared across all equivalent models should be ubiquitously observable}.\footnote{An analogy could be made with the principle of covariance in physics~\citep{einstein1920fundamental, thorne2000gravitation}, which asserts that physical laws should be expressible as coordinate-independent and reference-frame-independent geometric
relationships between objects that represent physical entities~\citep{thorne2017modern}. Recently, \citet{villar2023towards} suggested that this principle could inspire future developments in machine learning.} %
In the following, we critically examine this hypothesis in light of empirical evidence. %
\paragraph{Do we observe properties which 
are not shared across all
{$\sim_{EL}$-equivalent} models?}
\cref{prop:no-paral} shows that vector parallelism is only preserved within a linear subspace $\calN$ of the unembedding space.
Surprisingly, a different kind of parallelism, 
 which according to our theory is not shared by all
{$\sim_{EL}$-equivalent} models,
appears to be consistently observed in language models. %
In fact, several empirical studies
apply dimensionality reduction through PCA to the embeddings and unembeddings to reveal and visualize linear properties including parallelism (\eg \cite[Figure 2]{mikolov2013distributed}; \cite[Figure 1]{marks2023geometry}).
Note that, across $\sim_{EL}$-equivalent models, the embeddings and unembeddings may not be completely contained within $\calM$ and $\calN$, respectively:
that is the case when 
 both $\calM \subsetneq \SIM{\vf}$ and $\calN \subsetneq \SIM{\vg_0}$. 
As a consequence, unbounded distortions
within the orthogonal complements of $\calM$ and of $\calN$ 
are inconsequential for the dot product between the embedding and the unembedding vectors (see, \eg~\cref{fig:illustration-kidf}, top left, for an embedding manifold not contained within any proper linear subspace).
If these distortions were sufficiently large, they would prevent the visualization of vectors parallel in the sense of \cref{prop:no-paral} through PCA, as the distortions would dominate the covariance matrix on which the PCA 
of the representations
is performed%
, and thus the first principal components would mostly reflect those. %
This suggests that, in models where PCA reveals parallelism, these distortions %
are small, and the representations live close to a proper linear subspace. 

\textbf{How can we explain this?} These observations suggest that something other than the assumptions of our~\cref{thm:partial-identifiability} determines what models are learned in practice.
One possible explanation is that some additional
assumptions and constraints are at play %
which imply
that only models in a subset of the $\sim_{EL}$ equivalence class are observed empirically.
For parallelism, this could occur, for example, if the modeler chooses a fixed $d$ for which the diversity condition happens to hold (\cref{def:diversity-condition}): in which case, the resulting equivalence class would be $\sim_{L}$, and parallelism in $\bbR^d$ (\cref{def:s-parallelism}) is a shared property across $\sim_{L}$-equivalent models with representation dimensionality $d$.
An alternative possibility is that other inductive biases, not captured by the identifiability result, are influencing the learned representations. These biases could stem from the training algorithm or architecture, steering the model toward a subset of the $\sim_{EL}$-equivalent models.
Our contribution is to provide a mathematical framework that enables a clear articulation of these questions, guiding future empirical investigation.
}

\section{Related work and future directions} 
\label{sec:related_work}

\textbf{Linear properties of next-token predictors} have attracted widespread attention, also beyond language modeling \citep{li2022emergent, nanda2023emergent, elhage2022toymodelsof}. %
More complex, non-linear properties have also been observed, such as circular token representations \citep{engels2024not}. %
Formalizing these properties and investigating whether all distribution-equivalent next-token predictors share them, in the sense we studied for linear properties%
, %
is an interesting open venue. %

\textbf{Theoretical studies on linear properties.} 
\citet{park2023linear} %
introduce binary latent concepts 
to describe several linear properties (though not relational linearity) in a unified framework.
This was also applied to study categorical and hierarchical concepts \citep{park2024geometry}. %
\citet{jiang2024origins} explain 
linear properties of language models based on assumptions on the data-generating process and latent variables underlying natural text. This allows them to reason about the {\em origins} of linearity; in this work, we instead focus on the {\em ubiquity} of linear properties, with an agnostic stance on latent concept variables.

An exciting direction for future work is to prove, within our framework, why and how linear properties emerge, if they do at all.

\textbf{Identifiability of representations} is a central theme in generative modeling
\citep{moran2022identifiable, xi2023indeterminacy}, particularly in
non-linear ICA~\citep{hyvarinen2019nonlinear,
gresele2020incomplete,
halva2020hidden, 
buchholz2022function,
hyttinen2022binary} and causal representation learning~\citep{
lippe2022citris,
ahuja2023interventional,
liang2023causal, 
von2024nonparametric,
varici2024general,
zhang2024identifiability,
rajendran2024learning, li2024disentangled, 
bortolotti2025shortcuts}. 
\cchanged{\citet{buchholz2024learning} studied when %
token partitions can be identified from their interactions; 
\citet{reizinger2024understanding} discussed what role identifiability may play
in explaining several aspects of large language models %
\citep{zhang2023trained}.}
Our work highlights the role of identifiability in explaining the ubiquity of linear properties in language models. %

\section*{Acknowledgments}

We thank
Beatrix Miranda Nielsen, Antonio Vergari, Frederik Hytting Jørgensen, Filippo Camilloni,
Adrián Javaloy, and Julius von Kügelgen 
for useful discussions. 
We acknowledge positive feedback by Stefano Teso and interesting conversations with the
participants of the 2024 Bellairs Workshop on Causality.
E.M.\@ acknowledges support from TANGO, Grant Agreement No.\@ 101120763. Funded by the European Union. Views and opinions expressed are however those of the author(s) only and do not necessarily reflect those of the European Union or the European Health and Digital Executive Agency (HaDEA). Neither the European Union nor the granting authority can be held responsible for them.
L.G.\@ was supported by Danish Data Science Academy,
 which is funded by the Novo Nordisk Foundation (NNF21SA0069429).

\bibliography{references}

\begin{thebibliography}{67}
\providecommand{\natexlab}[1]{#1}
\providecommand{\url}[1]{\texttt{#1}}
\expandafter\ifx\csname urlstyle\endcsname\relax
  \providecommand{\doi}[1]{doi: #1}\else
  \providecommand{\doi}{doi: \begingroup \urlstyle{rm}\Url}\fi

\bibitem[Paccanaro and Hinton(2001)]{paccanaro2001learning}
Alberto Paccanaro and Geoffrey~E. Hinton.
\newblock Learning distributed representations of concepts using linear
  relational embedding.
\newblock \emph{IEEE Transactions on Knowledge and Data Engineering},
  13\penalty0 (2):\penalty0 232--244, 2001.

\bibitem[Hernandez et~al.(2024)Hernandez, Sharma, Haklay, Meng, Wattenberg,
  Andreas, Belinkov, and Bau]{hernandez2023linearity}
Evan Hernandez, Arnab~Sen Sharma, Tal Haklay, Kevin Meng, Martin Wattenberg,
  Jacob Andreas, Yonatan Belinkov, and David Bau.
\newblock Linearity of relation decoding in transformer language models.
\newblock In \emph{The Twelfth International Conference on Learning
  Representations (ICLR)}, 2024.

\bibitem[Roeder et~al.(2021)Roeder, Metz, and Kingma]{roeder2021linear}
Geoffrey Roeder, Luke Metz, and Durk Kingma.
\newblock On linear identifiability of learned representations.
\newblock In \emph{International Conference on Machine Learning (ICML)}, pages
  9030--9039. PMLR, 2021.

\bibitem[Rumelhart and Abrahamson(1973)]{rumelhart1973model}
David~E. Rumelhart and Adele~A. Abrahamson.
\newblock A model for analogical reasoning.
\newblock \emph{Cognitive Psychology}, 5\penalty0 (1):\penalty0 1--28, 1973.

\bibitem[Hinton et~al.(1986{\natexlab{a}})Hinton, McClelland, and
  Rumelhart]{hinton1986distributed}
Geoffrey~E. Hinton, James~L. McClelland, and David~E. Rumelhart.
\newblock Distributed representations.
\newblock In \emph{Parallel Distributed Processing: Explorations in the
  Microstructure of Cognition, Volume 1: Foundations}, pages 77--109.
  1986{\natexlab{a}}.

\bibitem[Hinton et~al.(1986{\natexlab{b}})]{hinton1986learning}
Geoffrey~E. Hinton et~al.
\newblock Learning distributed representations of concepts.
\newblock In \emph{Proceedings of the eighth Annual Conference of the Cognitive
  Science Society}, volume~1, page~12. Amherst, MA, 1986{\natexlab{b}}.

\bibitem[Rumelhart et~al.(1986)Rumelhart, Hinton, and
  Williams]{rumelhart1986learning}
David~E. Rumelhart, Geoffrey~E. Hinton, and Ronald~J. Williams.
\newblock Learning representations by back-propagating errors.
\newblock \emph{Nature}, 323\penalty0 (6088):\penalty0 533--536, 1986.

\bibitem[Bengio et~al.(2000)Bengio, Ducharme, and Vincent]{bengio2000neural}
Yoshua Bengio, R{\'e}jean Ducharme, and Pascal Vincent.
\newblock A neural probabilistic language model.
\newblock \emph{Advances in Neural Information Processing Systems (NeurIPS)},
  13, 2000.

\bibitem[Mikolov et~al.(2013{\natexlab{a}})Mikolov, Sutskever, Chen, Corrado,
  and Dean]{mikolov2013distributed}
Tomas Mikolov, Ilya Sutskever, Kai Chen, Greg~S. Corrado, and Jeff Dean.
\newblock Distributed representations of words and phrases and their
  compositionality.
\newblock \emph{Advances in Neural Information Processing Systems (NeurIPS)},
  26, 2013{\natexlab{a}}.

\bibitem[Mikolov et~al.(2013{\natexlab{b}})Mikolov, Yih, and
  Zweig]{mikolov2013linguistic}
Tomas Mikolov, Wen-tau Yih, and Geoffrey Zweig.
\newblock Linguistic regularities in continuous space word representations.
\newblock In \emph{Proceedings of the 2013 Conference of the North American
  Chapter of the Association for Computational Linguistics: Human language
  technologies}, pages 746--751, 2013{\natexlab{b}}.

\bibitem[Pennington et~al.(2014)Pennington, Socher, and
  Manning]{pennington2014glove}
Jeffrey Pennington, Richard Socher, and Christopher~D. Manning.
\newblock Glove: Global vectors for word representation.
\newblock In \emph{Proceedings of the 2014 Conference on Empirical Methods in
  Natural Language Processing (EMNLP)}, pages 1532--1543, 2014.

\bibitem[Burns et~al.(2022)Burns, Ye, Klein, and
  Steinhardt]{burns2022discovering}
Collin Burns, Haotian Ye, Dan Klein, and Jacob Steinhardt.
\newblock Discovering latent knowledge in language models without supervision.
\newblock \emph{arXiv preprint arXiv:2212.03827}, 2022.

\bibitem[Merullo et~al.(2023)Merullo, Eickhoff, and
  Pavlick]{merullo2023language}
Jack Merullo, Carsten Eickhoff, and Ellie Pavlick.
\newblock Language models implement simple word2vec-style vector arithmetic.
\newblock \emph{arXiv preprint arXiv:2305.16130}, 2023.

\bibitem[Tigges et~al.(2023)Tigges, Hollinsworth, Geiger, and
  Nanda]{tigges2023linear}
Curt Tigges, Oskar~John Hollinsworth, Atticus Geiger, and Neel Nanda.
\newblock Linear representations of sentiment in large language models.
\newblock \emph{arXiv preprint arXiv:2310.15154}, 2023.

\bibitem[Pal et~al.(2023)Pal, Sun, Yuan, Wallace, and Bau]{pal2023future}
Koyena Pal, Jiuding Sun, Andrew Yuan, Byron~C Wallace, and David Bau.
\newblock Future lens: Anticipating subsequent tokens from a single hidden
  state.
\newblock \emph{arXiv preprint arXiv:2311.04897}, 2023.

\bibitem[Gurnee and Tegmark(2023)]{gurnee2023language}
Wes Gurnee and Max Tegmark.
\newblock Language models represent space and time.
\newblock \emph{arXiv preprint arXiv:2310.02207}, 2023.

\bibitem[Bricken et~al.(2023)Bricken, Templeton, Batson, Chen, Jermyn, Conerly,
  Turner, Anil, Denison, Askell, Lasenby, Wu, Kravec, Schiefer, Maxwell,
  Joseph, Hatfield-Dodds, Tamkin, Nguyen, McLean, Burke, Hume, Carter,
  Henighan, and Olah]{bricken2023monosemanticity}
Trenton Bricken, Adly Templeton, Joshua Batson, Brian Chen, Adam Jermyn, Tom
  Conerly, Nick Turner, Cem Anil, Carson Denison, Amanda Askell, Robert
  Lasenby, Yifan Wu, Shauna Kravec, Nicholas Schiefer, Tim Maxwell, Nicholas
  Joseph, Zac Hatfield-Dodds, Alex Tamkin, Karina Nguyen, Brayden McLean,
  Josiah~E Burke, Tristan Hume, Shan Carter, Tom Henighan, and Christopher
  Olah.
\newblock Towards monosemanticity: Decomposing language models with dictionary
  learning.
\newblock \emph{Transformer Circuits Thread}, 2023.

\bibitem[Khemakhem et~al.(2020{\natexlab{a}})Khemakhem, Monti, Kingma, and
  Hyvarinen]{khemakhem2020ice}
Ilyes Khemakhem, Ricardo Monti, Diederik Kingma, and Aapo Hyvarinen.
\newblock Ice-beem: Identifiable conditional energy-based deep models based on
  nonlinear {ICA}.
\newblock \emph{Advances in Neural Information Processing Systems (NeurIPS)},
  33:\penalty0 12768--12778, 2020{\natexlab{a}}.

\bibitem[Arora et~al.(2016)Arora, Li, Liang, Ma, and Risteski]{arora2016latent}
Sanjeev Arora, Yuanzhi Li, Yingyu Liang, Tengyu Ma, and Andrej Risteski.
\newblock A latent variable model approach to {PMI}-based word embeddings.
\newblock \emph{Transactions of the Association for Computational Linguistics},
  4:\penalty0 385--399, 2016.

\bibitem[Allen and Hospedales(2019)]{allen2019analogies}
Carl Allen and Timothy Hospedales.
\newblock Analogies explained: Towards understanding word embeddings.
\newblock In \emph{International Conference on Machine Learning (ICML)}, pages
  223--231. PMLR, 2019.

\bibitem[Park et~al.(2024{\natexlab{a}})Park, Choe, and Veitch]{park2023linear}
Kiho Park, Yo~Joong Choe, and Victor Veitch.
\newblock The linear representation hypothesis and the geometry of large
  language models.
\newblock In \emph{Proceedings of the 41st International Conference on Machine
  Learning (ICML)}, volume 235 of \emph{Proceedings of Machine Learning
  Research}, pages 39643--39666. PMLR, 21--27 Jul 2024{\natexlab{a}}.

\bibitem[Heinzerling and Inui(2024)]{heinzerling2024monotonic}
Benjamin Heinzerling and Kentaro Inui.
\newblock Monotonic representation of numeric properties in language models.
\newblock \emph{arXiv preprint arXiv:2403.10381}, 2024.

\bibitem[Khemakhem et~al.(2020{\natexlab{b}})Khemakhem, Kingma, Monti, and
  Hyvarinen]{khemakhem2020variational}
Ilyes Khemakhem, Diederik Kingma, Ricardo Monti, and Aapo Hyvarinen.
\newblock {Variational autoencoders and nonlinear ICA: A unifying framework}.
\newblock In \emph{International Conference on Artificial Intelligence and
  Statistics (AISTATS)}, pages 2207--2217. PMLR, 2020{\natexlab{b}}.

\bibitem[Lachapelle et~al.(2023)Lachapelle, Deleu, Mahajan, Mitliagkas, Bengio,
  Lacoste-Julien, and Bertrand]{lachapelle2023synergies}
S{\'e}bastien Lachapelle, Tristan Deleu, Divyat Mahajan, Ioannis Mitliagkas,
  Yoshua Bengio, Simon Lacoste-Julien, and Quentin Bertrand.
\newblock Synergies between disentanglement and sparsity: Generalization and
  identifiability in multi-task learning.
\newblock In \emph{International Conference on Machine Learning (ICML)}, pages
  18171--18206. PMLR, 2023.

\bibitem[Rehder(1980)]{rehder1980projections}
W~Rehder.
\newblock When do projections commute?
\newblock \emph{Zeitschrift f{\"u}r Naturforschung A}, 35\penalty0
  (4):\penalty0 437--441, 1980.

\bibitem[Nielsen et~al.(2024)Nielsen, Gresele, and
  Dittadi]{nielsen2024challenges}
Beatrix Miranda~Ginn Nielsen, Luigi Gresele, and Andrea Dittadi.
\newblock Challenges in explaining representational similarity through
  identifiability.
\newblock In \emph{UniReps: 2nd Edition of the Workshop on Unifying
  Representations in Neural Models}, 2024.

\bibitem[Mikolov(2013)]{mikolov2013efficient}
Tomas Mikolov.
\newblock Efficient estimation of word representations in vector space.
\newblock \emph{arXiv preprint arXiv:1301.3781}, 2013.

\bibitem[Jiang et~al.(2024)Jiang, Rajendran, Ravikumar, Aragam, and
  Veitch]{jiang2024origins}
Yibo Jiang, Goutham Rajendran, Pradeep Ravikumar, Bryon Aragam, and Victor
  Veitch.
\newblock On the origins of linear representations in large language models.
\newblock \emph{arXiv preprint arXiv:2403.03867}, 2024.

\bibitem[Chang et~al.(2022)Chang, Tu, and Bergen]{chang2022geometry}
Tyler~A. Chang, Zhuowen Tu, and Benjamin~K Bergen.
\newblock The geometry of multilingual language model representations.
\newblock \emph{arXiv preprint arXiv:2205.10964}, 2022.

\bibitem[Alain(2016)]{alain2016understanding}
Guillaume Alain.
\newblock Understanding intermediate layers using linear classifier probes.
\newblock \emph{arXiv preprint arXiv:1610.01644}, 2016.

\bibitem[Kim et~al.(2018)Kim, Wattenberg, Gilmer, Cai, Wexler, Viegas,
  et~al.]{kim2018interpretability}
Been Kim, Martin Wattenberg, Justin Gilmer, Carrie Cai, James Wexler, Fernanda
  Viegas, et~al.
\newblock {Interpretability Beyond Feature Attribution: Quantitative Testing
  with Concept Activation Vectors (TCAV)}.
\newblock In \emph{International Conference on Machine Learning (ICML)}, pages
  2668--2677. PMLR, 2018.

\bibitem[Stolfo et~al.(2024)Stolfo, Balachandran, Yousefi, Horvitz, and
  Nushi]{stolfo2024improving}
Alessandro Stolfo, Vidhisha Balachandran, Safoora Yousefi, Eric Horvitz, and
  Besmira Nushi.
\newblock Improving instruction-following in language models through activation
  steering.
\newblock \emph{arXiv preprint arXiv:2410.12877}, 2024.

\bibitem[Einstein(1920)]{einstein1920fundamental}
Albert Einstein.
\newblock Fundamental ideas and methods of the theory of relativity, presented
  in their development.
\newblock \emph{The collected papers of Albert Einstein}, 7:\penalty0
  1918--1921, 1920.

\bibitem[Thorne et~al.(2000)Thorne, Misner, and Wheeler]{thorne2000gravitation}
Kip~S. Thorne, Charles~W. Misner, and John~Archibald Wheeler.
\newblock \emph{Gravitation}.
\newblock Freeman San Francisco, 2000.

\bibitem[Thorne and Blandford(2017)]{thorne2017modern}
Kip~S. Thorne and Roger~D. Blandford.
\newblock \emph{Modern Classical Physics: Optics, Fluids, Plasmas, Elasticity,
  Relativity, and Statistical Physics}.
\newblock Princeton University Press, 2017.

\bibitem[Villar et~al.(2023)Villar, Hogg, Yao, Kevrekidis, and
  Sch{\"o}lkopf]{villar2023towards}
Soledad Villar, David~W. Hogg, Weichi Yao, George~A. Kevrekidis, and Bernhard
  Sch{\"o}lkopf.
\newblock Towards fully covariant machine learning.
\newblock \emph{{Transactions on Machine Learning Research}}, 2023.

\bibitem[Marks and Tegmark(2023)]{marks2023geometry}
Samuel Marks and Max Tegmark.
\newblock The geometry of truth: Emergent linear structure in large language
  model representations of true/false datasets.
\newblock \emph{arXiv preprint arXiv:2310.06824}, 2023.

\bibitem[Li et~al.(2022)Li, Hopkins, Bau, Vi{\'e}gas, Pfister, and
  Wattenberg]{li2022emergent}
Kenneth Li, Aspen~K. Hopkins, David Bau, Fernanda Vi{\'e}gas, Hanspeter
  Pfister, and Martin Wattenberg.
\newblock Emergent world representations: Exploring a sequence model trained on
  a synthetic task.
\newblock In \emph{The Eleventh International Conference on Learning
  Representations (ICLR)}, 2022.

\bibitem[Nanda et~al.(2023)Nanda, Lee, and Wattenberg]{nanda2023emergent}
Neel Nanda, Andrew Lee, and Martin Wattenberg.
\newblock Emergent linear representations in world models of self-supervised
  sequence models.
\newblock \emph{arXiv preprint arXiv:2309.00941}, 2023.

\bibitem[Elhage et~al.(2022)Elhage, Hume, Olsson, Schiefer, Henighan, Kravec,
  Hatfield-Dodds, Lasenby, Drain, Chen, Grosse, McCandlish, Kaplan, Amodei,
  Wattenberg, and Olah]{elhage2022toymodelsof}
Nelson Elhage, Tristan Hume, Catherine Olsson, Nicholas Schiefer, Tom Henighan,
  Shauna Kravec, Zac Hatfield-Dodds, Robert Lasenby, Dawn Drain, Carol Chen,
  Roger Grosse, Sam McCandlish, Jared Kaplan, Dario Amodei, Martin Wattenberg,
  and Christopher Olah.
\newblock Toy models of superposition.
\newblock \emph{Transformer Circuits Thread}, 2022.

\bibitem[Engels et~al.(2024)Engels, Liao, Michaud, Gurnee, and
  Tegmark]{engels2024not}
Joshua Engels, Isaac Liao, Eric~J Michaud, Wes Gurnee, and Max Tegmark.
\newblock Not all language model features are linear.
\newblock \emph{arXiv preprint arXiv:2405.14860}, 2024.

\bibitem[Park et~al.(2024{\natexlab{b}})Park, Choe, Jiang, and
  Veitch]{park2024geometry}
Kiho Park, Yo~Joong Choe, Yibo Jiang, and Victor Veitch.
\newblock The geometry of categorical and hierarchical concepts in large
  language models.
\newblock \emph{arXiv preprint arXiv:2406.01506}, 2024{\natexlab{b}}.

\bibitem[Moran et~al.(2022)Moran, Sridhar, Wang, and
  Blei]{moran2022identifiable}
Gemma~E. Moran, Dhanya Sridhar, Yixin Wang, and David~M. Blei.
\newblock Identifiable deep generative models via sparse decoding.
\newblock \emph{Transactions on Machine Learning Research}, 2022.

\bibitem[Xi and Bloem-Reddy(2023)]{xi2023indeterminacy}
Quanhan Xi and Benjamin Bloem-Reddy.
\newblock Indeterminacy in generative models: Characterization and strong
  identifiability.
\newblock In \emph{International Conference on Artificial Intelligence and
  Statistics (AISTATS)}, pages 6912--6939. PMLR, 2023.

\bibitem[Hyvarinen et~al.(2019)Hyvarinen, Sasaki, and
  Turner]{hyvarinen2019nonlinear}
Aapo Hyvarinen, Hiroaki Sasaki, and Richard Turner.
\newblock {Nonlinear ICA using auxiliary variables and generalized contrastive
  learning}.
\newblock In \emph{The 22nd International Conference on Artificial Intelligence
  and Statistics (AISTATS)}, pages 859--868. PMLR, 2019.

\bibitem[Gresele et~al.(2020)Gresele, Rubenstein, Mehrjou, Locatello, and
  Sch{\"o}lkopf]{gresele2020incomplete}
Luigi Gresele, Paul~K Rubenstein, Arash Mehrjou, Francesco Locatello, and
  Bernhard Sch{\"o}lkopf.
\newblock The incomplete {R}osetta {S}tone problem: {I}dentifiability results
  for multi-view nonlinear {ICA}.
\newblock In \emph{Uncertainty in Artificial Intelligence (UAI)}, pages
  217--227. PMLR, 2020.

\bibitem[H{\"a}lv{\"a} and Hyvarinen(2020)]{halva2020hidden}
Hermanni H{\"a}lv{\"a} and Aapo Hyvarinen.
\newblock Hidden markov nonlinear {ICA}: Unsupervised learning from
  nonstationary time series.
\newblock In \emph{Conference on Uncertainty in Artificial Intelligence (UAI)},
  pages 939--948. PMLR, 2020.

\bibitem[Buchholz et~al.(2022)Buchholz, Besserve, and
  Sch{\"o}lkopf]{buchholz2022function}
Simon Buchholz, Michel Besserve, and Bernhard Sch{\"o}lkopf.
\newblock Function classes for identifiable nonlinear independent component
  analysis.
\newblock \emph{Advances in Neural Information Processing Systems (NeurIPS)},
  35:\penalty0 16946--16961, 2022.

\bibitem[Hyttinen et~al.(2022)Hyttinen, Pacela, and
  Hyv{\"a}rinen]{hyttinen2022binary}
Antti Hyttinen, Vit{\'o}ria~Barin Pacela, and Aapo Hyv{\"a}rinen.
\newblock Binary independent component analysis: a non-stationarity-based
  approach.
\newblock In \emph{Uncertainty in Artificial Intelligence (UAI)}, pages
  874--884. PMLR, 2022.

\bibitem[Lippe et~al.(2022)Lippe, Magliacane, L{\"o}we, Asano, Cohen, and
  Gavves]{lippe2022citris}
Phillip Lippe, Sara Magliacane, Sindy L{\"o}we, Yuki~M Asano, Taco Cohen, and
  Stratis Gavves.
\newblock Citris: Causal identifiability from temporal intervened sequences.
\newblock In \emph{International Conference on Machine Learning (ICML)}, pages
  13557--13603. PMLR, 2022.

\bibitem[Ahuja et~al.(2023)Ahuja, Mahajan, Wang, and
  Bengio]{ahuja2023interventional}
Kartik Ahuja, Divyat Mahajan, Yixin Wang, and Yoshua Bengio.
\newblock Interventional causal representation learning.
\newblock In \emph{International Conference on Machine Learning (ICML)}, pages
  372--407. PMLR, 2023.

\bibitem[Liang et~al.(2024)Liang, Keki{\'c}, von K{\"u}gelgen, Buchholz,
  Besserve, Gresele, and Sch{\"o}lkopf]{liang2023causal}
Wendong Liang, Armin Keki{\'c}, Julius von K{\"u}gelgen, Simon Buchholz, Michel
  Besserve, Luigi Gresele, and Bernhard Sch{\"o}lkopf.
\newblock Causal {C}omponent {A}nalysis.
\newblock \emph{Advances in Neural Information Processing Systems (NeurIPS)},
  36, 2024.

\bibitem[von K{\"u}gelgen et~al.(2024)von K{\"u}gelgen, Besserve, Wendong,
  Gresele, Keki{\'c}, Bareinboim, Blei, and
  Sch{\"o}lkopf]{von2024nonparametric}
Julius von K{\"u}gelgen, Michel Besserve, Liang Wendong, Luigi Gresele, Armin
  Keki{\'c}, Elias Bareinboim, David Blei, and Bernhard Sch{\"o}lkopf.
\newblock Nonparametric identifiability of causal representations from unknown
  interventions.
\newblock \emph{Advances in Neural Information Processing Systems (NeurIPS)},
  36, 2024.

\bibitem[Varici et~al.(2024)Varici, Acart{\"u}rk, Shanmugam, and
  Tajer]{varici2024general}
Burak Varici, Emre Acart{\"u}rk, Karthikeyan Shanmugam, and Ali Tajer.
\newblock General identifiability and achievability for causal representation
  learning.
\newblock In \emph{International Conference on Artificial Intelligence and
  Statistics (AISTATS)}, pages 2314--2322. PMLR, 2024.

\bibitem[Zhang et~al.(2024{\natexlab{a}})Zhang, Greenewald, Squires,
  Srivastava, Shanmugam, and Uhler]{zhang2024identifiability}
Jiaqi Zhang, Kristjan Greenewald, Chandler Squires, Akash Srivastava,
  Karthikeyan Shanmugam, and Caroline Uhler.
\newblock Identifiability guarantees for causal disentanglement from soft
  interventions.
\newblock \emph{Advances in Neural Information Processing Systems (NeurIPS)},
  36, 2024{\natexlab{a}}.

\bibitem[Rajendran et~al.(2024)Rajendran, Buchholz, Aragam, Sch{\"o}lkopf, and
  Ravikumar]{rajendran2024learning}
Goutham Rajendran, Simon Buchholz, Bryon Aragam, Bernhard Sch{\"o}lkopf, and
  Pradeep Ravikumar.
\newblock Learning interpretable concepts: Unifying causal representation
  learning and foundation models.
\newblock \emph{arXiv preprint arXiv:2402.09236}, 2024.

\bibitem[Li et~al.(2024)Li, Pan, and Bareinboim]{li2024disentangled}
Adam Li, Yushu Pan, and Elias Bareinboim.
\newblock Disentangled representation learning in non-markovian causal systems.
\newblock 2024.

\bibitem[Bortolotti et~al.(2025)Bortolotti, Marconato, Morettin, Passerini, and
  Teso]{bortolotti2025shortcuts}
Samuele Bortolotti, Emanuele Marconato, Paolo Morettin, Andrea Passerini, and
  Stefano Teso.
\newblock Shortcuts and identifiability in concept-based models from a
  neuro-symbolic lens.
\newblock \emph{arXiv preprint arXiv:2502.11245}, 2025.

\bibitem[Buchholz(2024)]{buchholz2024learning}
Simon Buchholz.
\newblock Learning partitions from context.
\newblock In \emph{Advances in Neural Information Processing Systems
  (NeurIPS)}, volume~37, 2024.

\bibitem[Reizinger et~al.(2024)Reizinger, Ujv{\'a}ry, M{\'e}sz{\'a}ros,
  Kerekes, Brendel, and Husz{\'a}r]{reizinger2024understanding}
Patrik Reizinger, Szilvia Ujv{\'a}ry, Anna M{\'e}sz{\'a}ros, Anna Kerekes,
  Wieland Brendel, and Ferenc Husz{\'a}r.
\newblock {Position: Understanding LLMs Requires More Than Statistical
  Generalization}.
\newblock In \emph{Forty-first International Conference on Machine Learning
  (ICML)}, 2024.

\bibitem[Zhang et~al.(2024{\natexlab{b}})Zhang, Frei, and
  Bartlett]{zhang2023trained}
Ruiqi Zhang, Spencer Frei, and Peter~L Bartlett.
\newblock Trained transformers learn linear models in-context.
\newblock \emph{Journal of Machine Learning Research}, 25\penalty0
  (49):\penalty0 1--55, 2024{\natexlab{b}}.

\bibitem[Liu et~al.(2018)Liu, Saleh, Pot, Goodrich, Sepassi, Kaiser, and
  Shazeer]{liu2018generating}
Peter~J Liu, Mohammad Saleh, Etienne Pot, Ben Goodrich, Ryan Sepassi, Lukasz
  Kaiser, and Noam Shazeer.
\newblock Generating wikipedia by summarizing long sequences.
\newblock \emph{arXiv preprint arXiv:1801.10198}, 2018.

\bibitem[Radford et~al.(2021)Radford, Kim, Hallacy, Ramesh, Goh, Agarwal,
  Sastry, Askell, Mishkin, Clark, et~al.]{radford2021learning}
Alec Radford, Jong~Wook Kim, Chris Hallacy, Aditya Ramesh, Gabriel Goh,
  Sandhini Agarwal, Girish Sastry, Amanda Askell, Pamela Mishkin, Jack Clark,
  et~al.
\newblock Learning transferable visual models from natural language
  supervision.
\newblock In \emph{International Conference on Machine Learning (ICML)}, 2021.

\bibitem[Wang and Komatsuzaki(2021)]{wang2021gpt}
Ben Wang and Aran Komatsuzaki.
\newblock {GPT-J-6B: A 6 Billion Parameter Autoregressive Language Model}.
\newblock 2021.
\newblock URL \url{https://github. com/kingoflolz/mesh-transformer-jax}.

\bibitem[Axler(2015)]{axler2015linear}
Sheldon Axler.
\newblock \emph{{Linear Algebra Done Right}}.
\newblock Springer, 2015.

\bibitem[Hase et~al.(2024)Hase, Bansal, Kim, and Ghandeharioun]{hase2024does}
Peter Hase, Mohit Bansal, Been Kim, and Asma Ghandeharioun.
\newblock {Does localization inform editing? Surprising differences in
  causality-based localization vs. knowledge editing in language models}.
\newblock \emph{Advances in Neural Information Processing Systems (NeurIPS)},
  36, 2024.

\bibitem[Arditi et~al.(2024)Arditi, Obeso, Syed, Paleka, Rimsky, Gurnee, and
  Nanda]{arditi2024refusal}
Andy Arditi, Oscar Obeso, Aaquib Syed, Daniel Paleka, Nina Rimsky, Wes Gurnee,
  and Neel Nanda.
\newblock Refusal in language models is mediated by a single direction.
\newblock \emph{arXiv preprint arXiv:2406.11717}, 2024.

\end{thebibliography}

\section*{Checklist}

 \begin{enumerate}

 \item For all models and algorithms presented, check if you include:
 \begin{enumerate}
   \item A clear description of the mathematical setting, assumptions, algorithm, and/or model. [Yes/No/\textbf{Not Applicable}]
   \item An analysis of the properties and complexity (time, space, sample size) of any algorithm. [Yes/No/\textbf{Not Applicable}]
   \item (Optional) Anonymized source code, with specification of all dependencies, including external libraries. [Yes/No/\textbf{Not Applicable}]
 \end{enumerate}

 \item For any theoretical claim, check if you include:
 \begin{enumerate}
   \item Statements of the full set of assumptions of all theoretical results. [\textbf{Yes}/No/Not Applicable]
   \item Complete proofs of all theoretical results. [\textbf{Yes}/No/Not Applicable]
   \item Clear explanations of any assumptions. [\textbf{Yes}/No/Not Applicable]     
 \end{enumerate}

 \item For all figures and tables that present empirical results, check if you include:
 \begin{enumerate}
   \item The code, data, and instructions needed to reproduce the main experimental results (either in the supplemental material or as a URL). [Yes/No/\textbf{Not Applicable}]
   \item All the training details (e.g., data splits, hyperparameters, how they were chosen). [Yes/No/\textbf{Not Applicable}]
         \item A clear definition of the specific measure or statistics and error bars (e.g., with respect to the random seed after running experiments multiple times). [Yes/No/\textbf{Not Applicable}]
         \item A description of the computing infrastructure used. (e.g., type of GPUs, internal cluster, or cloud provider). [Yes/No/\textbf{Not Applicable}]
 \end{enumerate}

 \item If you are using existing assets (e.g., code, data, models) or curating/releasing new assets, check if you include:
 \begin{enumerate}
   \item Citations of the creator If your work uses existing assets. [Yes/No/\textbf{Not Applicable}]
   \item The license information of the assets, if applicable. [Yes/No/\textbf{Not Applicable}]
   \item New assets either in the supplemental material or as a URL, if applicable. [Yes/No/\textbf{Not Applicable}]
   \item Information about consent from data providers/curators. [Yes/No/\textbf{Not Applicable}]
   \item Discussion of sensible content if applicable, e.g., personally identifiable information or offensive content. [Yes/No/\textbf{Not Applicable}]
 \end{enumerate}

 \item If you used crowdsourcing or conducted research with human subjects, check if you include:
 \begin{enumerate}
   \item The full text of instructions given to participants and screenshots. [Yes/No/\textbf{Not Applicable}]
   \item Descriptions of potential participant risks, with links to Institutional Review Board (IRB) approvals if applicable. [Yes/No/\textbf{Not Applicable}]
   \item The estimated hourly wage paid to participants and the total amount spent on participant compensation. [Yes/No/\textbf{Not Applicable}]
 \end{enumerate}

 \end{enumerate}

\begin{minipage}[c]{0.4\textwidth}
    \color{white}
    \hypersetup{linkcolor=white}
    \doparttoc %
    \faketableofcontents %
    \part{} %
    \parttoc %
\end{minipage}

\onecolumn 

\appendix

\addcontentsline{toc}{section}{Appendices}%
\part{Appendices} %
\parttoc%

\newpage

\section{Functional Form of Decoders-only Transformers} \label{sec:app-transformers}

Decoders-only transformer models can be reduced to the form we use in the main text, as already shown in a derivation by~\citet{roeder2021linear}. We consider autoregressive, GPT-like models \citep{liu2018generating, radford2021learning}, focusing on the GPT-J model \citep{wang2021gpt}.

We denote with $\vh(\vx; \theta)$ the representation given by the a transformer model $\vh$ with trainable parameters $\theta \in \bbR^q$ for the architecture. 
We consider the transformer to represent sentences of lenght up to $C$, and by convention we pick the last $C$ tokens if the sentence has length $t(\vx)> C$.  Denote with $\tau := \max(t(\vx) -C, 1)$ it holds that: 
\[
    \vh(\vx; \theta) =  \vh(\vx_{t(\vx) - \tau: t(\vx)}; \theta) \in \bbR^{d \times \tau} \, .
\]
We focus on the representation of the last token of sentence, which is used to perform next-token prediction, and denote it with
\[
    \vh(\vx; \theta)_{-1} \in \bbR^d \, .
\]
{When predicting the next token among $K := |\calA|$ possible options, a common approach is to use a layer (or {\em head}) with weights $\vW_D \in \bbR^{K\times d}$ and a bias $\vb \in \bbR^K$. The representation of $\vg_\varphi$ is determined by the parameters $\varphi = (\vW_D, \vb)$, which are used to compute the logits as follows:}
\[
    \mathrm{logits}(\vx) := \vW_D \vh(\vx;\theta)_{-1} + \vb \, .
\]
By extending the representation of $\vh_{-1}$ to $\vf_\theta(\vx) := (1, \vh(\vx; \theta)^\top_{-1})^\top $, we can incorporate the bias by adding one column to $\vW_D$, obtaining:
\[  
    \mathrm{logits}(\vx) = \tilde{\vW}_D \vf_\theta(\vx), \quad \text{where }   \tilde{\vW}_D = \begin{pmatrix}
        \vb, \vW_D
    \end{pmatrix} \, .
\]
To predict the probability of the next  token $y$ it is then sufficient to consider
\[  \textstyle
    \log p_{\vf_\theta, \vg_\varphi}(y \mid \vx) = \mathrm{logits}(\vx)_y - \log \sum_{y' \in \calA} \mathrm{logits}(\vx_{y'}) \, .
\]
Transforming the token $y\in \calA$ to its one-hot representation, $\vy \in \{0,1\}^{K}$, we can write
\[
    \vg_\varphi(y) = \tilde{\vW}_D^\top \vy, 
\]
thereby leading to the expression:
\[
    \log p_{\vf_\theta, \vg_\varphi}(y \mid \vx) = \vg_\varphi(y)^\top \vf_\theta(\vx) -\log \sum_{y' \in \calA} \vg_\varphi(y)^\top \vf_\theta(\vx), 
\]
where we used:
\[  \textstyle
    \vf_\theta(\vx) = \begin{pmatrix}
        1 \\
        \vh (\vx_{t(\vx) - \tau: t(\vx)}; \theta)_{-1}
    \end{pmatrix},
    \qquad 
    \vg_\varphi(y) = \tilde{\vW}_D^\top \vy.
\]
This explicit form of the embedding and unembedding respectively and consistent with Equation~\eqref{eq:next-token-predictor}, as previously detailed by \citet{roeder2021linear}.

\newpage

\section{Proofs of \cref{sec:identifiability}}
\label{sec:proofs-sec4}
    \subsection{Reminder of useful properties of pseudo-inverses}

    We will often make use of the pseudo-inverse $\vA^+$ of a matrix $\vA$  \citep[page 221]{axler2015linear}. We denote with $\vT \mid_{\mathrm{ker}(\vT)^\perp }$ the restriction of $\vT$ to its orthogonal complement of the kernel \citep{axler2015linear}.

    \begin{definition}[Pseudo-inverse]
        Let $\vT \in \bbR^{m \times n}$ be a matrix.  The pseudo-inverse $\vT^+ \in \bbR^{n \times m}$ of $\vT$ is defined as the linear map:
        \[  
            \vT^+ \vw := \big( \vT \mid_{\mathrm{ker}(\vT)^\perp } \big)^{-1} \vP_{\mathrm{Im}(\vT)} \vw
        \]
        for all $\vw \in \bbR^n$.
    \end{definition}
    Accordingly, the pseudo-inverse always exists and it is unique. Notice that for any matrix $\vT \in \bbR^{m \times n}$ it holds:
    \begin{align}
        \vT \vT^+ &= \vP_{\mathrm{Im} (\vT)} 
        \label{eq:pseudoinverse-tt+}
        \\
        \vT^+ \vT &= \vP_{\mathrm{ker} (\vT)^\perp}
        \label{eq:pseudoincerse-t+t}
        \\
        \vT^\top (\vT^\top)^+ &= \vP_{\mathrm{ker} (\vT)^\perp}
        \label{eq:pseudoincerse-ttopttop+} \\
        (\vT^\top)^+ \vT^\top &= \vP_{\mathrm{Im} (\vT)} \, .
        \label{eq:pseudoincerse-ttop+ttop}
    \end{align}

    \subsection{Proof of \cref{lemma:projectors}}

        We provide here a longer version of the Lemma capturing different properties between the projectors.

\begin{lemma*}[Ref \cref{lemma:projectors}]
    Let $(\vf, \vg) \in \Theta$, and take $\calF := \SIM{\vf}$ and $\calG :=\SIM{\vg_0}$.
    For the orthogonal projectors $\vP_\calF$ and $\vP_\calG$ and the orthogonal projectors $\vP_\calM$ and $\vP_\calN$ projecting on the spaces:
    \[
        \calM = \mathrm{Im} (\vP_\calF \vP_\calG), \quad\calN = \mathrm{ker} (\vP_\calF \vP_\calG)^\perp
    \] 
    it holds:
    \begin{itemize}[leftmargin=3em]
        \item[(i)] $\mathrm{dim}(\calM) =\mathrm{dim} (\calN) = \mathrm{dim}(\calF) -\mathrm{dim}\big(
        \calF \cap \calG^\perp
        \big) $;

        \item[(ii)] The orthogonal projectors are also given by:
        \begin{align}
            \vP_\calM = (\vP_\calF \vP_\calG) (\vP_\calF \vP_\calG)^+,& \quad
            \vP_\calN = (\vP_\calF \vP_\calG)^+ (\vP_\calF \vP_\calG) \\
            \vP_\calM = (\vP_\calG \vP_\calF)^+ (\vP_\calG \vP_\calF),& \quad
            \vP_\calN = (\vP_\calG \vP_\calF) (\vP_\calG \vP_\calF)^+ 
        \end{align}

        \item[(iii)] We have:
        \begin{align}
            \vP_\calM (\vP_\calF \vP_\calG)^{\phantom{+}} = &(\vP_\calF \vP_\calG)^{\phantom{+}} = (\vP_\calF \vP_\calG)^{\phantom{+}} \vP_\calN \\ 
            \vP_\calN (\vP_\calF \vP_\calG)^+ = &(\vP_\calF \vP_\calG)^+ = (\vP_\calF \vP_\calG)^+ \vP_\calM \\
            \vP_\calN (\vP_\calG \vP_\calF)^{\phantom{+}} = &(\vP_\calG \vP_\calF)^{\phantom{+}} = (\vP_\calG \vP_\calF)^{\phantom{+}} \vP_\calM \\ 
            \vP_\calM (\vP_\calG \vP_\calF)^+ = &(\vP_\calG \vP_\calF)^+ = (\vP_\calG \vP_\calF)^+ \vP_\calN
        \end{align}

        \item[(iv)] $\calM \subseteq \calF$ and $\calN \subseteq \calG$;

        \item[(v)] $\vP_\calF \vP_\calM = \vP_\calM = \vP_\calM \vP_\calF$ and  $\vP_\calG \vP_\calN = \vP_\calN = \vP_\calN \vP_\calG$;

        \item[(vi)] It holds $\vP_\calF \vP_\calG = \vP_\calM \vP_\calN$
        and in particular:
        \[  
            \textstyle  
            \vf(\vx)^\top \vg_0(y) %
            =  (\vP_\calM \vf(\vx))^\top  \vP_\calN \vg_0(y)
        \]

    \end{itemize}
\end{lemma*}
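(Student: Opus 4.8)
The plan is to abbreviate $\vT := \vP_\calF \vP_\calG$, so that $\vT^\top = \vP_\calG \vP_\calF$ because orthogonal projectors are symmetric, and to observe that by construction $\calM = \mathrm{Im}(\vT)$, $\calN = \mathrm{ker}(\vT)^\perp$, and correspondingly $\mathrm{Im}(\vT^\top) = \mathrm{ker}(\vT)^\perp = \calN$, $\mathrm{ker}(\vT^\top)^\perp = \mathrm{Im}(\vT) = \calM$, with the analogous statements for $\vT^+$ and $(\vT^\top)^+$ following from $\mathrm{Im}(\vT^+) = \mathrm{Im}(\vT^\top)$, $\mathrm{ker}(\vT^+) = \mathrm{ker}(\vT^\top)$ and $(\vT^+)^\top = (\vT^\top)^+$. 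Everything then reduces to the pseudo-inverse identities in Equations~\eqref{eq:pseudoinverse-tt+}--\eqref{eq:pseudoincerse-ttop+ttop}, the standard facts $\vP_{\mathrm{Im}(\vT)}\vT = \vT$ and $\vT\vP_{\mathrm{ker}(\vT)^\perp} = \vT$, and the rank--nullity theorem. I would establish the items in the order (iv), (v), (ii), (iii), (i), (vi).

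For (iv): $\calM = \mathrm{Im}(\vP_\calF\vP_\calG) \subseteq \mathrm{Im}(\vP_\calF) = \calF$; and any $\vv \in \calG^\perp$ satisfies $\vP_\calG\vv = \mathbf{0}$, hence $\vT\vv = \mathbf{0}$, so $\calG^\perp \subseteq \mathrm{ker}(\vT)$ and therefore $\calN = \mathrm{ker}(\vT)^\perp \subseteq (\calG^\perp)^\perp = \calG$. For (v): since $\calM \subseteq \calF$, $\vP_\calF$ fixes every vector in the range of $\vP_\calM$, i.e.\ $\vP_\calF\vP_\calM = \vP_\calM$; transposing and using symmetry of both projectors gives $\vP_\calM\vP_\calF = \vP_\calM$, and the statement for $\calN,\calG$ is identical. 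Items (ii) and (iii) are then direct: for (ii), $\vT\vT^+ = \vP_{\mathrm{Im}(\vT)} = \vP_\calM$ and $\vT^+\vT = \vP_{\mathrm{ker}(\vT)^\perp} = \vP_\calN$ give the first line, and applying the same two identities to $\vT^\top = \vP_\calG\vP_\calF$ (whose image is $\calN$ and whose $\mathrm{ker}^\perp$ is $\calM$) gives the second; for (iii), $\vP_{\mathrm{Im}(\vT)}\vT = \vT$ and $\vT\vP_{\mathrm{ker}(\vT)^\perp} = \vT$ yield $\vP_\calM(\vP_\calF\vP_\calG) = \vP_\calF\vP_\calG = (\vP_\calF\vP_\calG)\vP_\calN$, and the same reasoning applied to $\vT^+$ (image $\calN$, $\mathrm{ker}^\perp$ $\calM$), to $\vT^\top$, and to $(\vT^\top)^+$ produces the remaining three lines. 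For (i), rank--nullity gives $\dim(\calN) = n - \dim\mathrm{ker}(\vT) = \mathrm{rank}(\vT) = \dim\mathrm{Im}(\vT) = \dim(\calM)$, and $\mathrm{rank}(\vT) = \mathrm{rank}(\vT^\top) = \dim\mathrm{Im}(\vP_\calG\vP_\calF) = \dim\big(\vP_\calG(\calF)\big) = \dim(\calF) - \dim(\calF\cap\calG^\perp)$, the last equality because the kernel of $\vP_\calG$ restricted to $\calF$ is precisely $\calF\cap\calG^\perp$.

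Finally, (vi) is where the one-sided identities must be combined. From (iii) and (v), $\vP_\calF\vP_\calG = \vP_\calM(\vP_\calF\vP_\calG) = (\vP_\calM\vP_\calF)\vP_\calG = \vP_\calM\vP_\calG$, and symmetrically $\vP_\calF\vP_\calG = (\vP_\calF\vP_\calG)\vP_\calN = \vP_\calF(\vP_\calG\vP_\calN) = \vP_\calF\vP_\calN$; left-multiplying the latter by $\vP_\calM$ and using $\vP_\calM\vP_\calF = \vP_\calM$ on both sides gives $\vP_\calM\vP_\calG = \vP_\calM\vP_\calN$, hence $\vP_\calF\vP_\calG = \vP_\calM\vP_\calN$. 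The dot-product identity then follows because $\vf(\vx) = \vP_\calF\vf(\vx)$ and $\vg_0(y) = \vP_\calG\vg_0(y)$, so $\vf(\vx)^\top\vg_0(y) = \vf(\vx)^\top\vP_\calF\vP_\calG\vg_0(y) = \vf(\vx)^\top\vP_\calM\vP_\calN\vg_0(y) = (\vP_\calM\vf(\vx))^\top\vP_\calN\vg_0(y)$, using symmetry of $\vP_\calF$ and $\vP_\calM$.

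The reasoning is almost entirely bookkeeping with pseudo-inverses and projector identities; the one genuinely non-mechanical step is the collapse in (vi). The idempotency identities of (iii) alone only give $\vP_\calF\vP_\calG = \vP_\calM(\vP_\calF\vP_\calG)\vP_\calN$, which is not yet $\vP_\calM\vP_\calN$, and the key realization is that the commutation facts $\vP_\calM\vP_\calF = \vP_\calM$ and $\vP_\calG\vP_\calN = \vP_\calN$ from (v)---themselves consequences of the containments in (iv)---are exactly what is needed to absorb the middle factor. A secondary subtlety is in (i): the two superficially different formulas $\dim(\calF) - \dim(\calF\cap\calG^\perp)$ and $\dim(\calG) - \dim(\calG\cap\calF^\perp)$ coincide only via $\mathrm{rank}(\vT) = \mathrm{rank}(\vT^\top)$, so one should phrase the dimension count through the rank of $\vT$ rather than through either subspace directly.
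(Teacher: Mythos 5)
Your proof is correct and follows essentially the same route as the paper's: the same pseudo-inverse identities, the same rank--nullity computation for (i), and the same absorption of the middle factor via (v) to obtain $\vP_\calF\vP_\calG=\vP_\calM\vP_\calN$ in (vi). The only differences are cosmetic---a reordering of the items and a slightly cleaner justification of $\calN \subseteq \calG$ via $\calG^\perp \subseteq \mathrm{ker}(\vP_\calF\vP_\calG)$, where the paper argues more informally.
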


\begin{proof}
    (i) By the rank-nullity theorem \citep[page 62]{axler2015linear}, we have that $\mathrm{dim} \, \mathrm{Im} (\vP_\calF \vP_\calG)  = d - \mathrm{dim} \, \mathrm{ker} (\vP_\calF \vP_\calG)$. Notice that $\mathrm{Im}(\vP_\calF \vP_\calG) =\calM$ and $\mathrm{ker}(\vP_\calF \vP_\calG) = \bbR^d \setminus \calN$, therefore
    \[
        \mathrm{dim}(\calM) = \mathrm{dim} \, \mathrm{Im} (\vP_\calF \vP_\calG) = d - \mathrm{dim} \, \mathrm{ker} (\vP_\calF \vP_\calG) = d - d + \mathrm{dim} \, \mathrm{ker}(\vP_\calF \vP_\calG)^\perp = \mathrm{dim}(\calN) \, .
    \]
    Next, we derive the dimensionality of $\calM$. Recall that for two matrices $\vA \in \bbR^{m \times k}$ and $\vB \in \bbR^{k \times n}$, from the rank-nullity theorem \citep[page 62]{axler2015linear}, it follows that:
    \[
        \mathrm{rank} (\vA \vB) = \mathrm{rank}(\vB) - \mathrm{dim}(\mathrm{ker}(\vA) \cap \mathrm{Im}(\vB)) \, .
    \]
    Using this for $\vA = \vP_\calG$ and $\vB = \vP_\calF$, we have that:
    \begin{align}
        \mathrm{dim}(\calM) &= \mathrm{rank} (\vP_\calG \vP_\calF) \\
        &= \mathrm{rank}(\vP_\calF) - \mathrm{dim}(\mathrm{ker}(\vP_\calG) \cap \mathrm{Im}(\vP_\calF)) \\
        &= \mathrm{dim}(\calF) - \mathrm{dim}(\calG^\perp \cap \calF) \\
        &= \mathrm{dim}(\SIM{\vf}) - \mathrm{dim}(\SIM{\vg_0}^\perp \cap \SIM{\vf}) \, . 
    \end{align}

    (ii) From the property of the pseudo-inverse, see Equation~\eqref{eq:pseudoinverse-tt+} and Equation~\eqref{eq:pseudoincerse-t+t}, we have that
    \[  \label{eq:def-of-calm}
        (\vP_\calF \vP_\calG) (\vP_\calF \vP_\calG)^+ = \vP_{\mathrm{Im}(\vP_\calF \vP_\calG)} =\vP_\calM
    \]
    and that:
    \[  \label{eq:def-of-caln}
        (\vP_\calF \vP_\calG)^+ (\vP_\calF \vP_\calG)  = \vP_{ \mathrm{ker}(\vP_\calF \vP_\calG)^\perp} = \vP_\calN \, .
    \]
    From Equation~\eqref{eq:def-of-calm}, taking the transpose of $\vP_\calM$ we get:
    \begin{align}
        \vP_\calM^\top &= ((\vP_\calF \vP_\calG)^+)^\top  (\vP_\calF \vP_\calG)^\top \\
        \vP_\calM &= (\vP_\calG^\top \vP_\calF^\top )^+(\vP_\calG^\top \vP_\calF^\top) \\
        &= (\vP_\calG \vP_\calF)^+(\vP_\calG \vP_\calF) \, ,
    \end{align}
    and from Equation~\eqref{eq:def-of-caln}, taking the transpose we obtain:
    \begin{align}
        \vP_\calN^\top &=  (\vP_\calF \vP_\calG)^\top ((\vP_\calF \vP_\calG)^+)^\top \\
        \vP_\calN &= (\vP_\calG^\top \vP_\calF^\top ) (\vP_\calG^\top \vP_\calF^\top)^+ \\
        &= (\vP_\calG \vP_\calF) (\vP_\calG \vP_\calF)^+ \, .
    \end{align}

    (iii) Denote $\vA := \vP_\calF \vP_\calG$. From the pseudo-inverse definition \citep[page 221]{axler2015linear}, it holds
    \[
        (\vA \vA^+) \vA = \vA = \vA (\vA^+ \vA)
    \]
    and substituting for $\vP_\calF \vP_\calG$ and the projectors $\vP_\calM$ and $\vP_\calN$ we get:
    \[  \label{eq:propertyFG}
        \vP_\calM( \vP_\calF \vP_\calG) = \vP_\calF \vP_\calG = (\vP_\calF \vP_\calG) \vP_\calN \, .
    \]
    Similarly, for the pseudo-inverse, consider
    \[
        (\vA^+ \vA) \vA^+ = \vA^+ = \vA^+ (\vA \vA^+)
    \]
    and substituting we get
    \[  \label{eq:propertyFG+}
        \vP_\calN (\vP_\calF \vP_\calG)^+ = (\vP_\calF \vP_\calG)^+ = (\vP_\calF \vP_\calG)^+ \vP_\calM \, .
    \]
    Similarly to point (ii), taking the transpose of Equation~\eqref{eq:propertyFG}, we obtain:
    \[
        ( \vP_\calG \vP_\calF) \vP_\calM = \vP_\calG \vP_\calF = \vP_\calN (\vP_\calG \vP_\calF)  \, ,
    \]
    and taking the transpose of Equation~\eqref{eq:propertyFG+} we obtain:
    \[
        ( \vP_\calG \vP_\calF)^+ \vP_\calN = (\vP_\calG \vP_\calF)^+ = \vP_\calM (\vP_\calG \vP_\calF)^+  \, .
    \]
    
    (iv) To show this, notice that $\calM := \mathrm{Im} (\vP_\calF \vP_\calG) = \mathrm{ker} (\vP_\calG \vP_\calF)^\perp$. Therefore, we have that
    \[
        \mathrm{ker} (\vP_\calG \vP_\calF)^\perp = \{\vP_\calF \vv \not \in \calG^\perp \mid \vv \in \bbR^d\}, \quad \calF = \{ \vP_\calF \vv \neq \mathbf{0} \mid \vv \in \bbR^d \} \, ,
    \]
    which shows that $\calM$ is a subset of $\calF$, and they are equal only when $\vP_\calG = \vI$: $\calM \subseteq \calF$. Similarly, for $\calN$, we get the same: $\calN \subseteq \calG$.

    (v) Follows by the property of orthogonal projectors. Since $\calM \subseteq \calF$, we have that $\vP_\calM$ and$ \vP_\calF$ commute, which means that $\vP_\calM \vP_\calF = \vP_{\calM \cap \calF} = \vP_\calM$. The same conclusion also holds for $\calN$ and $\calG$, because of $\calN \subseteq \calG$.

    (vi) We have to rework the following expression
    \begin{align}
        \vP_\calF \vP_\calG
        &= \vP_\calM \vP_\calF \vP_\calG \vP_\calN \\
        &=\vP_\calM \vP_\calN  \, ,
    \end{align}
    where in the second line we used that $\vP_\calF \vP_\calG = \vP_\calM (\vP_\calF \vP_\calG) \vP_\calN$, and the final step is given by using that $\vP_\calM \vP_\calF = \vP_\calM$ and $\vP_\calG \vP_\calN = \vP_\calN$. In particular:
    \begin{align}
        \vf(\vx)^\top \vg_0(y) 
        &= \vf(\vx)^\top \vP_\calM \vP_\calN \vg_0(y)  \\
        &= (\vP_\calM \vf(\vx))^\top \vP_\calN \vg_0(y) \, ,
    \end{align}
    where in the last line we used that $\vP_\calM^\top = \vP_\calM$, being an orthogonal projector.
\end{proof}

    \subsection{Extended linear equivalence}
    \label{sec:app-equivalence-relation}
    
        We show that the relation defined in \cref{def:klinear-equiv} is an equivalence relation.
        
        \klinequiv*
        \begin{proof} To prove $\sim_{EL}$ is an equivalence relation we have to show its reflexivity, symmetry, and transitivity.

\textbf{Reflexivity}. Take:
\[
    (\vf, \vg) \sim_{EL} (\vf, \vg) \, .
\]
This means that there must exist $\vM, \vN$ of rank $k := \mathrm{dim}(\calM)$ such that,
\[
    \begin{cases}
        \vP_{\calM} {\vf}(\vx) &= \vM \vP_{\calM} {\vf}(\vx)  \\
        \vP_{\calN} {\vg}_0(y) &= \vN \vP_{\calN} {\vg}_0(y) 
    \end{cases}
\]
which are given by $\vM = \vP_\calM$, $\vN = \vP_\calN$.

\textbf{Symmetry}. We have to show that:
\[
    (\vf, \vg) \sim_{EL} (\tilde \vf, \tilde \vg) \iff (\tilde \vf, \tilde \vg)  \sim_{EL} (\vf, \vg) \, .
\]
This can be seen by showing one side of the implication ($\implies$):
\[
\begin{cases}
    \vP_{\calM} {\vf}(\vx) &= \vM \vP_{\tilde \calM} \tilde{\vf}(\vx) \\
    \vP_{\calN} {\vg}_0(y) &= \vN \vP_{\tilde \calN} \tilde{\vg}_0(y) 
\end{cases} 
\implies 
\begin{cases}
    \vP_{\tilde \calM} \tilde{\vf}(\vx)  &= \tilde \vM  \vP_{\calM} {\vf}(\vx)  \\
    \vP_{\tilde \calN} \tilde{\vg}_0(y) 
     &= \tilde \vN \vP_{\calN} {\vg}_0(y) 
\end{cases}
\]
Take:
\[
\begin{cases}
    \vP_{\calM} {\vf}(\vx) &= \vM \vP_{\tilde \calM} \tilde{\vf}(\vx)\\
    \vP_{\calN} {\vg}_0(y) &= \vN \vP_{\tilde \calN} \tilde{\vg}_0(y) 
\end{cases} 
\]
and consider the pseudo-inverses $\vM^+$ and $\vN^+$ obtaining:
\[
\begin{cases}
    \vM^+ \vP_{\calM} {\vf}(\vx)  &= \vM^+ \vM \vP_{\tilde \calM} \tilde{\vf}(\vx) \\
    \vN^+ \vP_{\calN} {\vg}_0(y)  &= \vN^+ \vN \vP_{\tilde \calN} \tilde{\vg}_0(y)
\end{cases} 
\]
Notice that $\vM^+ \vM = \vP_{\tilde{\calM}}$ and $\vN^+ \vN = \vP_{\tilde{\calN}}$, which follows by the fact that $\mathrm{ker}(\vM)^\perp = \tilde \calM$ and  $\mathrm{ker}(\vN)^\perp = \tilde \calN$ 
\citep[Page 211]{axler2015linear}. Using this we have $ \vM^+ \vM \vP_{\tilde{\calM}} = \vP_{\tilde{\calM}} \vP_{\tilde{\calM}}= \vP_{\tilde{\calM}} $, and similarly $\vN^+ \vN \vP_{\tilde{\calN}} = \vP_{\tilde{\calN}}$. 
Therefore,
on the right-hand side only the projectors $\vP_{\tilde{\calM}}$ and $\vP_{\tilde{\calN}}$ remain, whereas we have to set $\tilde{\vM} = \vM^+$ and $\tilde{\vN} = \vN^+$.
Both $\tilde \vM$ and $\tilde \vN$ have range $k$, as a consequence of being pseudo-inverses.
Therefore, we get $(\tilde{\vf}, \tilde{\vg}) \sim_{EL} (\vf, \vg)$. 

The other side of the implication ($\impliedby$) follows a similar proof.

\textbf{Transitivity}. We have to show that:
\[
    (\vf, \vg) \sim_{EL} (\tilde \vf, \tilde \vg) \; 
    \land  \;
    (\tilde \vf, \tilde \vg)  \sim_{EL} (\vf^*,  \vg^*) 
    \implies 
    (\vf, \vg)  \sim_{EL} (\vf^*, \vg^*) \, .
\]
This can be verified by substitution:
\begin{align}
& \begin{cases}
    \vP_{\calM} {\vf}(\vx) = \vM \vP_{\tilde \calM} \tilde{\vf}(\vx) \\
    \vP_{\calN} {\vg}_0(y) = \vN \vP_{\tilde \calN} \tilde{\vg}_0(y)
\end{cases} \\     
& \begin{cases}
    \vP_{\calM} {\vf}(\vx) = \vM \tilde \vM \vP_{ \calM^*} {\vf^*}(\vx)  \\
    \vP_{\calN} {\vg}_0(y) = \vN \tilde \vN \vP_{ \calN^*} {\vg^*}_0(y) 
\end{cases} 
\end{align}
and by setting $\overline{\vM} = \vM \tilde{\vM}$ and $\overline{\vN} = \vN \tilde{\vN}$, it holds:
\[
\begin{cases}
    \vP_{\calM} {\vf}(\vx) = \overline{\vM} \vP_{ \calM^*} {\vf^*}(\vx) \\
    \vP_{\calN} {\vg}_0(y) = \overline \vN  \vP_{ \calN^*} {\vg^*}_0(y) 
\end{cases}
\]
Notice that, since $\vM: \tilde{\calM} \to {\calM}$ is a linear invertible transformation, and similarly $\tilde \vM: \calM^* \to \tilde{\calM}$ is also a linear invertible transformation, the composition $\overline{\vM} = \vM \tilde{\vM}$ is a linear invertible transformation from $\calM^*$ to $\calM$, with $\mathrm{rank}(\overline{\vM}) = k$.
A similar observation also applies to $\overline{\vN}: \calN^* \to \calN$. Therefore, we have shown that:
\[
    (\vf, \vg)  \sim_{EL} (\vf^*, \vg^*) \, .
\]

This concludes the proof.
\end{proof}

\paragraph{Explicit form on $\vN$.} Based on the requirement that matrices $\vM, \vN \in \bbR^{d \times \tilde d}$ obey $\vM^\top \vN = \vP_{\tilde{\calM}} \vP_{\tilde{\calN}}$, we have that:
\[
    \vN := (\vP_\calM \vP_\calN)^+ (\vM^\top)^+ (\vP_{\tilde{\calM}} \vP_{\tilde{\calN}})
\]

\begin{proof}
    Notice that the following holds:
    \begin{itemize}
        \item $\vM^\top \vP_\calF = \vM^\top$, by the fact that $\vM^\top= \vM^\top \vP_\calM = \vM^\top \vP_\calM \vP_\calF$ by \cref{lemma:projectors} (v), and
        
        \item $\vP_\calG (\vP_\calF \vP_\calG)^+ = (\vP_\calF \vP_\calG)^+$, since we have $(\vP_\calF \vP_\calG) \vP_\calG = (\vP_\calF \vP_\calG)$.
    \end{itemize}
    We use these identities to obtain:
    \begin{align}
        \vM^\top (\vP_\calF \vP_\calG)^+ (\vM^\top)^+ (\vP_{\tilde{\calF}} \vP_{\tilde{\calG}}) 
        &= \vM^\top {\vP_\calF \vP_\calG} (\vP_\calF \vP_\calG)^+ (\vM^\top)^+ (\vP_{\tilde{\calF}} \vP_{\tilde{\calG}})  
        \\
        &= \vM^\top {\vP_\calM} (\vM^\top)^+ (\vP_{\tilde{\calF}} \vP_{\tilde{\calG}}) 
        \tag{Using \cref{lemma:projectors} (ii)}
        \\
        &= \HL{\vM^\top} (\vM^\top)^+ (\vP_{\tilde{\calF}} \vP_{\tilde{\calG}}) 
        \tag{Using $\vM^\top \vP_\calM = \vM^\top$}
        \\
        &=  \HL{\vP_{\tilde \calM}} (\vP_{\tilde{\calF}} \vP_{\tilde{\calG}}) 
        \tag{$\vM^\top (\vM^\top)^+ = \vP_{\mathrm{ker}(\vM)^\perp} =\vP_\calM$}
        \\
        &=\vP_{\tilde{\calF}} \vP_{\tilde{\calG}} \, . %
    \end{align}
    where we used that $\vP_{\tilde \calM} (\vP_{\tilde{\calF}} \vP_{\tilde{\calG}}) = \vP_{\tilde{\calF}} \vP_{\tilde{\calG}}$ by \cref{lemma:projectors} (iii). 
    Notice that by \cref{lemma:projectors} (vi) we have that
    $\vP_{\tilde{\calF}} \vP_{\tilde{\calG}} = \vP_{\tilde \calM} \vP_{\tilde \calN}$.
\end{proof}

    \subsection{Proof of \cref{prop:elpreserves}}

        \elpreserves*
        \begin{proof}
    \cchanged{Starting from \cref{lemma:projectors} (vi), we have that:
    \[
        \vf(\vx)^\top\vg_0(y) = \vf(\vx)^\top \vP_\calM \vP_\calN \vg_0(y) \, .
    \]
    }
    \cchanged{Considering} the expression of $\vf$ and $\vg_0$ given by the $\sim_{EL}$ equivalence relation (\cref{def:klinear-equiv}):
    \begin{align}
        \vP_\calM \vf(\vx) &= \vM \vP_{\tilde \calM} \tilde \vf(\vx) \label{eq:eleq-top}\\
        \vP_\calN \vg_0(y) &= \vN \vP_{\tilde \calN} \tilde \vg_0(y) \label{eq:eleq-bottom}
    \end{align}
    we use also the condition that $\vM^\top \vN = \vP_{\tilde\calM} \vP_{\tilde\calN}$.
    We get
    \begin{align*}
        \vf(\vx)^\top \vP_\calM \vP_\calN \vg_0(y) &= \tilde{\vf}(\vx)^\top \vP_{\tilde \calM} \vM^\top \vN \vP_{\tilde \calN} \tilde \vg_0(y) \\
        &= \tilde{\vf}(\vx)^\top \vP_{\tilde \calM} \vP_{\tilde \calM} \vP_{\tilde \calN} \vP_{\tilde \calN} \tilde \vg_0(y) \\
        &= \tilde{\vf}(\vx)^\top \vP_{\tilde \calM} \vP_{\tilde \calN} \tilde \vg_0(y) \, ,
    \end{align*}
    where we used the {idempotency of projectors, i.e.,} that $\vP_{\tilde \calM}^2 = \vP_{\tilde \calM}$ and $\vP_{\tilde \calN}^2 = \vP_{\tilde \calN}$.
    \cchanged{To prove the claim, it is sufficient to use \cref{lemma:projectors} (vi) again, obtaining:}
    \begin{align}
        \vf(\vx)^\top\vg_0(y) &= \tilde{\vf}(\vx)^\top \vP_{\tilde \calM} \vP_{\tilde \calN} \tilde \vg_0(y) 
        = \tilde{\vf}(\vx)^\top \tilde \vg_0(y) \, .
    \end{align}
\end{proof}

    \subsection{Counterexample when diversity condition does not hold}
    \label{sec:counter-example-diversity}

\cchanged{We detail here a counter-example to linear identifiability (\cref{cor:linear-id}) when the diversity condition does not hold.}
Let $\vf: \SeqA \to \bbR^2$ and $\vg:\calA \to \bbR^2$, and $\calA = \{y_0, y_1, y_2 \}$. Let $\vg(y_0)= (1,0)^\top$, $\vg(y_1)= (1,1)^\top$, and $\vg(y_2)= (1,-1)^\top$ be unembeddings, which do not fulfill the diversity condition (\cref{def:diversity-condition}): 
In fact, these unembeddings give $\calG = \mathrm{span}(\ve_2)$, and $\dim(\calG)=1$ which is less than the dimensionality of the representation space. %
The vector $\ve_2 := (0,1)^\top$ is drawn as a \textcolor{blue}{\textbf{blue}} arrow in \cref{fig:counter-example}. 
We can construct another model where $\tilde \vg = \vg$
and choose $\tilde \vf(\vx) = (\vf_1(\vx) + 0.2 \cos (40 a_1/ \pi), \vf_2(\vx)  )^\top$. 
\cref{fig:counter-example} shows this transformation.
By construction, this model generates the same next-token distribution of the first one; however, the two model representations are not equal up to a linear transformation.

\begin{figure}[!h]
    \centering
    \begin{tabular}{ccc}
        Model $(\vf, \vg)$ & 
        & \textcolor{white}{aba}
        Model $(\tilde \vf, \tilde \vg)$
        \\
         \includegraphics[width=0.3\textwidth]{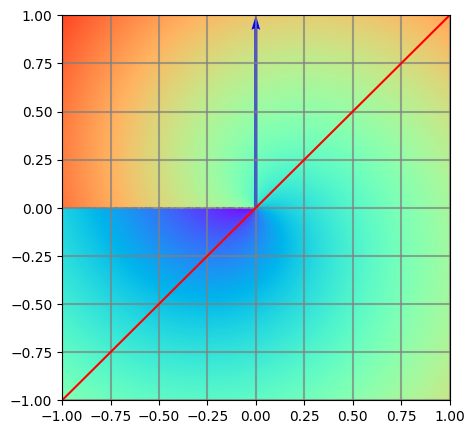}
         & & \includegraphics[width=0.3\textwidth]{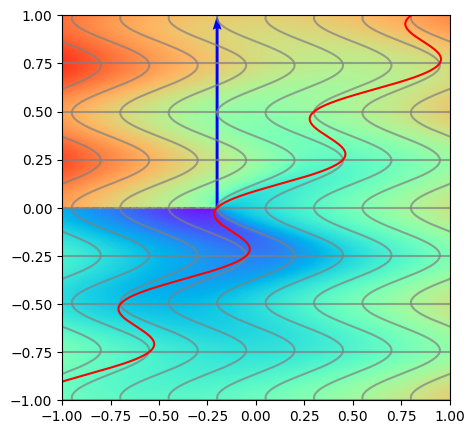}
    \end{tabular}
    \caption{\textbf{Allowed distorsions among $\sim_{EL}$-equivalent models}. 
    From the left, model embeddings $\vf$ are given different colors. The \textcolor{red}{red} segment is non-linearly transformed on the right along $\tilde \vf_1$, whereas they remain equal to the left on the component $\tilde \vf_2$. This shows that the models are not $\sim_L$-equivalent.
    }
    \label{fig:counter-example}
\end{figure}

    \subsection{Proof of \cref{thm:partial-identifiability}}

        We begin with the following lemma, which will be used in the proof of~\cref{thm:partial-identifiability}. 
        \begin{lemma} \label{lemma:rank-equality}
            Let $\calF$ be a subspace of $\bbR^d$ and $\calM \subseteq \calF$ a subspace of $\calF$. 
            Consider $D$ elements $\vv_i \in \bbR^d$, such that the matrix
            \[
                \vF := \begin{pmatrix}
                    \vv_1, \ldots, \vv_D
                \end{pmatrix}
            \]
            has range $\mathrm{Im}(\vF) = \calF$. Then, it holds $\mathrm{rank}(\vP_\calM \vF) = \mathrm{dim}(\calM)$.
        \end{lemma}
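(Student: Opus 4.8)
The statement is a short linear-algebra fact, so the plan is direct: rewrite the rank as the dimension of an image, push the projector through, and use $\calM \subseteq \calF$ together with the defining properties of the orthogonal projector $\vP_\calM$.

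First I would record that $\mathrm{rank}(\vP_\calM \vF) = \dim\big(\mathrm{Im}(\vP_\calM \vF)\big)$, and that $\mathrm{Im}(\vP_\calM \vF) = \vP_\calM\big(\mathrm{Im}(\vF)\big) = \vP_\calM(\calF)$, since the image of a composition $\vP_\calM \vF$ is the image under $\vP_\calM$ of $\mathrm{Im}(\vF)$, and by hypothesis $\mathrm{Im}(\vF) = \calF$. So the whole claim reduces to showing $\vP_\calM(\calF) = \calM$.

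For this I would prove the two inclusions separately. The inclusion $\vP_\calM(\calF) \subseteq \calM$ is immediate because $\mathrm{Im}(\vP_\calM) = \calM$ by definition of the orthogonal projector onto $\calM$, so $\vP_\calM \vu \in \calM$ for every $\vu$, in particular for every $\vu \in \calF$. For the reverse inclusion $\calM \subseteq \vP_\calM(\calF)$, take any $\vv \in \calM$; since $\calM \subseteq \calF$ by hypothesis, we have $\vv \in \calF$, and since $\vP_\calM$ restricts to the identity on $\calM$ (an orthogonal projector fixes its image), $\vv = \vP_\calM \vv \in \vP_\calM(\calF)$. Combining, $\vP_\calM(\calF) = \calM$, hence $\mathrm{rank}(\vP_\calM \vF) = \dim(\calM)$.

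There is no real obstacle here; the only thing to be careful about is not to conflate $\vP_\calM$ applied to the ambient space (which would still give $\calM$) with $\vP_\calM$ applied to $\calF$ — the point is that the hypothesis $\calM \subseteq \calF$ is exactly what makes $\vP_\calM(\calF)$ equal to the full subspace $\calM$ rather than a proper subspace of it. I would state this explicitly so the role of the assumption is visible, since this lemma is used in the proof of \cref{thm:partial-identifiability} precisely to guarantee that projecting a spanning set of $\calF$ onto $\calM$ still spans $\calM$.
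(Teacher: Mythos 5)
Your proof is correct, and it takes a genuinely different (and more elementary) route than the paper's. The paper invokes the rank--nullity identity $\mathrm{rank}(\vA\vB) = \mathrm{rank}(\vB) - \mathrm{dim}(\mathrm{ker}(\vA)\cap \mathrm{Im}(\vB))$ with $\vA = \vP_\calM$, $\vB = \vF$, and then reduces $\mathrm{dim}(\calM^\perp\cap\calF)$ to $\mathrm{dim}(\calF)-\mathrm{dim}(\calM)$ using $\calM\subseteq\calF$ (implicitly via the orthogonal decomposition $\calF = \calM\oplus(\calM^\perp\cap\calF)$); it counts dimensions of kernels. You instead compute the image directly: $\mathrm{Im}(\vP_\calM\vF)=\vP_\calM(\calF)$ and then show $\vP_\calM(\calF)=\calM$ by two inclusions, the nontrivial one using exactly that $\vP_\calM$ fixes $\calM$ and $\calM\subseteq\calF$. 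The two arguments use the hypothesis $\calM\subseteq\calF$ at the same logical juncture, but yours avoids the kernel-intersection bookkeeping entirely (and incidentally sidesteps the paper's slightly loose set-difference notation $\calF\setminus\calM$, which is not a subspace as written); the paper's formula-based route has the minor advantage of being the same tool it reuses repeatedly in Lemma~\ref{lemma:projectors}~(i) and in Steps~3 and~5 of the proof of Theorem~\ref{thm:partial-identifiability}. Your closing remark about why $\calM\subseteq\calF$ is needed is exactly the right point to make explicit.
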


        \begin{proof}   

For two matrices $\vA \in \bbR^{m \times k}$ and $\vB \in \bbR^{k \times n}$, from the rank-nullity theorem \citep[page 62]{axler2015linear}, it follows that:
\[
    \mathrm{rank} (\vA \vB) = \mathrm{rank}(\vB) - \mathrm{dim}(\mathrm{ker}(\vA) \cap \mathrm{range}(\vB)) \, .
\]
We use this formula to obtain:
\begin{align}
    \mathrm{rank} (\vP_\calM \vF) &= \mathrm{rank} (\vF) - \mathrm{dim} (\mathrm{ker}(\vP_\calM) \cap \mathrm{range} (\vF)) \\
    &= \mathrm{dim}(\calF) - \mathrm{dim} \big( \calM^\perp \cap \calF \big) \\
    &= \mathrm{dim}(\calF) - \mathrm{dim} \big( \bbR^d \setminus \calM \cap \calF \big) \\
    &= \mathrm{dim}(\calF) - \mathrm{dim} \big( \calF \setminus \calM \big) \\
    &= \mathrm{dim}(\calF) - \mathrm{dim} (\calF) + \mathrm{dim} (\calM) \\
    &= \mathrm{dim} (\calM) 
\end{align}
where the equality on the second-last line follows from the fact that $\calM \subseteq \calF$. 
\end{proof}

        We now proceed to prove \cref{thm:partial-identifiability}.
        
        \partialidf*

\textbf{Proof sketch.}  
To prove the implication, we divide into five parts:
\begin{enumerate}

    \item Starting from log-equality of probabilities,
    we adopt a pivoting strategy to get rid of normalizing constants\footnote{A similar ``pivoting'' strategy is the starting point of several identifiability proofs in nonlinear ICA with auxiliary variables, e.g.,~\citep{hyvarinen2019nonlinear, khemakhem2020variational, khemakhem2020ice, roeder2021linear}.};

    \item We derive an explicit expression of $\vM$ 
    and $\tilde{\vM}$
    such that
    \begin{align*}
        \vP_\calM \vf(\vx) &= \vM \vP_{\tilde \calM} \tilde{\vf}(\vx) \\ 
        \vP_{\tilde \calM} \tilde{\vf}(\vx) &= \tilde{\vM} \vP_\calM \vf(\vx)
    \end{align*}    
    hold for every $\vx \in \SeqA$. We achieve that by considering $q$ points, or tokens, of $\calA$, such that the matrices $\vG := (\vg_0(y_1), \ldots, \vg_0(y_q))$ and $\tilde \vG := (\tilde{\vg}_0(y_1), \ldots, \tilde{\vg}(y_q))$ have images corresponding to $\SIM{\vg_0}$ and $\SIM{\tilde{\vg}_0}$, respectively;

    \item 
    From the linear relation obtained between $\vf$ and $\tilde \vf$, we show that, having $\mathrm{dim}(\calM) = k$, also $\mathrm{dim}(\tilde \calM) = k$ and $\mathrm{rank} (\vM) = k$.
    By considering $\ell$ points, or sequences, $x_i \in \SeqA$, such that the matrices $\vF := (\vf(\vx_q), \ldots, \vf(\vx_\ell))$ and $\tilde \vF := (\tilde \vf(\vx_q), \ldots, \tilde \vf(\vx_\ell))$ have images corresponding to $\SIM{\vf}$ and $\SIM{\tilde{\vf}}$, respectively, we will obtain that
    $$  k \leq m \leq k, \quad k \leq k' \leq k \, ;
    $$

    \item We derive an explicit expression of $\vN$ such that
    $$
        \vP_\calN \vg_0(y) = \vN \vP_{\tilde \calN} \tilde{\vg}_0 (y)
    $$
    hold for every $y \in \calA$. We achieve that by considering again $\ell$ points such that $\vF$ and $\tilde{\vF}$ have images corresponding to $\SIM{\vf}$ and $\SIM{\tilde{\vf}}$, and using the linear relation obtained between $\vf$ and $\tilde{\vf}$. This allows us to derive an expression of $\vN$ that depends on $\vM$;

    \item Finally, we show that it follows that $\mathrm{rank}(\vN)$ is equal to $k := \mathrm{dim}(\calM)$. To achieve that we use the relation between $\vg_0$ and $\tilde{\vg}_0$ to show that:
    $$ 
    k \leq \mathrm{rank}(\vN) \leq k \, .
    $$
\end{enumerate}

Recall that we are indicating with:
$$
    \calF := \SIM{\vf}, \quad \tilde{\calF} := \SIM{\tilde{\vf}}, \quad \calG := \SIM{\vg_0}, \quad  \tilde{\calG} := \SIM{\tilde{\vg}_0}\, ,
$$
and with:
\[
    \calM := \mathrm{Im} (\vP_\calF \vP_\calG), \quad \tilde \calM := \mathrm{Im} (\vP_{\tilde{\calF}} \vP_{\tilde{\calG}}), \quad
    \calN := \mathrm{ker} (\vP_\calF \vP_\calG)^\perp, \quad \tilde \calN := \mathrm{ker} (\vP_{\tilde{\calF}} \vP_{\tilde{\calG}} )^\perp \ .
\]
\begin{proof}
    We start by proving the implication ($\implies$).
    
    \textbf{Step 1: Using pivoting to get rid of normalizing constants.}
    We start from the equality between log probabilities:
    \[
        \vf(\vx)^\top\vg(y) - \log Z(\vx) = \tilde{\vf}(\vx)^\top\tilde{\vg}(y) - \log \tilde{Z}(\vx)
    \]
    Subtracting the pivot $y_0 \in \calA$ 
 to all remaining elements $y \in \calA$, and defining $\vg_0 := \vg(y) -\vg(y_0)$ and $\tilde \vg_0(y) := \tilde \vg(y) - \tilde \vg(y_0)$, we get rid of the terms containing the log of the normalizing constant, \ie
    \begin{align}
        \vf(\vx)^\top(\vg(y) - \vg(y_0))  &= \tilde{\vf}(\vx)^\top(\tilde{\vg}(y) -\tilde{\vg}(y_0))
        \tag{Subtract by $\vf(\vx)^\top\vg(y_0)$ and by $\tilde{\vf}(\vx)^\top \tilde{\vg}(y_0)$}
        \\
        \vf(\vx)^\top\vg_0(y)  &= \tilde{\vf}(\vx)^\top\tilde{\vg}_0(y) \, .
        \label{eq:log-prob-nonorm}
    \end{align}

    \textbf{Step 2: Obtaining the relation between embeddings $\vf$ and $\tilde \vf$.}
    We will now show that $\vM:= (\vP_\calG \vP_\calF)^+ (\vG^\top)^+ \tilde{\vG}^\top (\vP_{\tilde{\calG}} \vP_{\tilde{\calF}})$ and $\tilde{\vM} := (\vP_{\tilde \calG} \vP_{\tilde \calF})^+ (\tilde \vG^\top)^+ {\vG}^\top \vP_{{\calG}} \vP_{{\calF}}
    $
    satisfies Equation~\eqref{eq:linear-transf-f}, showing that $\vf$ and $\tilde{\vf}$ are linearly related. 
    For this, consider $q$ elements, or tokens, $y \in \calA$ such that the matrix
    \[
        \label{eq:g-gtilse-spans}
        \vG = \begin{pmatrix}
            \vg_0(y_1),  &  \cdots, & \vg_0(y_q)
        \end{pmatrix}, \quad
        \tilde \vG = \begin{pmatrix}
            \tilde \vg_0(y_1),  &  \cdots, & \tilde \vg_0(y_q)
        \end{pmatrix}
    \]
    span $\calG$ and $\tilde{\calG}$, respectively, \ie  $\mathrm{Im}(\vG) = \calG$ and $\mathrm{Im}(\tilde \vG) = \tilde \calG$. 
    Taking the transpose of Equation~\eqref{eq:log-prob-nonorm} and considering these $q$ elements, we can write
    \[  \label{eq:matrix-equality-G}
        \vG^\top \vf(\vx) = \tilde{\vG}^\top \tilde{\vf}(\vx) \, .
    \]
   Next, we consider the projectors on the subspace where all functions' images are contained. Indicate with $\vP_\calF$ and $\vP_\calG$ the orthogonal projectors on $\calF$ and $\calG$, respectively, and with $\vP_{\tilde{\calF}}$ and $\vP_{\tilde{\calG}}$ the orthogonal projectors on $\tilde{\calF}$ and $\tilde{\calG}$, respectively.
   It holds
    \[
        \vf(\vx) = \vP_\calF \vf(\vx), \quad \vG = \vP_\calG \vG, \quad
        \tilde \vf(\vx) = \vP_{\tilde{\calF}} \tilde \vf(\vx), \quad \tilde \vG = \vP_{\tilde{\calG}} \tilde \vG \, ,
        \label{eq:inserting-projs}
    \]
    which can be inserted in Equation~\eqref{eq:matrix-equality-G}, leading to
    \[
        \vG^\top \vP_\calG \vP_\calF \vf(\vx) = \tilde{\vG}^\top \vP_{\tilde{\calG}} \vP_{\tilde{\calF}} \tilde{\vf} (\vx) \, .
    \]
    We make use of the pseudo-inverse of $\vG^\top$ to obtain:
    \begin{align}
        \HL{(\vG^\top)^+} \vG^\top \vP_\calG \vP_\calF \vf(\vx) &=  \HL{(\vG^\top)^+} \tilde{\vG}^\top \vP_{\tilde{\calG}} \vP_{\tilde{\calF}} \tilde{\vf}(\vx) 
        \tag{Multiply on the left by $(\vG^\top)^+$}
        \\
        \HL{\vP_{\mathrm{Im}(\vG)}} \vP_\calG \vP_\calF \vf(\vx) &=  (\vG^\top)^+ \tilde{\vG}^\top \vP_{\tilde{\calG}} \vP_{\tilde{\calF}} \tilde{\vf}(\vx) 
        \tag{From Equation~\eqref{eq:pseudoincerse-ttop+ttop},  $(\vG^\top)^+ \vG^\top = \vP_{\mathrm{Im}(\vG)}$}
        \\
        \HL{\vP_\calG} \vP_\calF \vf(\vx) &=  \HL{\vA} \vP_{\tilde{\calG}} \vP_{\tilde{\calF}} \tilde{\vf}(\vx) \, ,
        \label{eq:thm5-first-step}
    \end{align}
    where in the last line we used $\vP_{\mathrm{Im}(\vG)} \vP_\calG = \vP_\calG \vP_\calG = \vP_\calG$ since $\mathrm{Im}(\vG) = \calG$, and
    we denoted with $\vA := (\vG^\top)^+ \tilde{\vG}^\top$. 
    
    Next, we insert the projectors $\vP_\calM$ and $\vP_{\tilde \calM}$ in Equation~\eqref{eq:thm5-first-step}, using \cref{lemma:projectors} (iii) we get:
    \begin{align}
        \vP_\calG \vP_\calF \HL{\vP_\calM} \vf(\vx) &=  \vA\vP_{\tilde{\calG}} \vP_{\tilde{\calF}} \HL{\vP_{\tilde{\calM}}} \tilde{\vf}(\vx) \, .
    \end{align}
    We now consider the left pseudo-inverse of $\vP_\calG \vP_\calF$ to obtain
    \begin{align}
        \HL{(\vP_\calG \vP_\calF)^+} \vP_\calG \vP_\calF \vP_\calM \vf(\vx) &= \HL{(\vP_\calG \vP_\calF)^+} \vA\vP_{\tilde{\calG}} \vP_{\tilde{\calF}} \vP_{\tilde{\calM}} \tilde{\vf}(\vx) 
        \tag{Multiply on the left by $(\vP_\calG \vP_\calF)^+$}
        \\
        \HL{\vP_\calM} \vP_\calM \vf(\vx) &= (\vP_\calG \vP_\calF)^+ \vA\vP_{\tilde{\calG}} \vP_{\tilde{\calF}} \vP_{\tilde{\calM}} \tilde{\vf}(\vx) 
        \tag{By \cref{lemma:projectors} (ii) $\vP_\calM = (\vP_\calG \vP_\calF)^+(\vP_\calG \vP_\calF)$}
        \\
        \HL{\vP_\calM} \vf(\vx) &= \HL{\vM} \vP_{\tilde{\calM}} \tilde{\vf}(\vx) \, , \label{eq:lin-subspace-equiv}
    \end{align}
    where we {used the idempotency of the $\vP_\calM$ projector, and we defined} 
    \[
        \label{eq:f-and-tild}
        {\vM} := (\vP_\calG \vP_\calF)^+ (\vG^\top)^+ \tilde{\vG}^\top \vP_{\tilde{\calG}} \vP_{\tilde{\calF}} \, .
    \]
    
    Following similar steps but starting from $\tilde{\vG}^\top \tilde \vf(\vx) = {\vG}^\top \vf(\vx)$, we arrive at a similar expression for $\vP_{\tilde \calM} \tilde{\vf}$:
    \[
        \vP_{\tilde{\calM}} \tilde \vf(\vx) = \tilde{\vM} \vP_{{\calM}} {\vf}(\vx) \, ,
        \label{eq:tilde-lin-space-equiv}
    \]
    where $\tilde{\vM} := (\vP_{\tilde \calG} \vP_{\tilde \calF})^+ (\tilde \vG^\top)^+ {\vG}^\top \vP_{{\calG}} \vP_{{\calF}}$.

    \textbf{Step 3: Showing that} $\mathrm{dim}(\tilde \calM) = \mathrm{rank}(\vM) = \mathrm{dim}(\calM)$.
    Let $k := \mathrm{dim}(\calM)$ and $k' := \mathrm{dim}(\tilde{\calM})$. Also, let $m:= \mathrm{rank}(\vM)$.
    We want to show that $k' = m = k$ and to this end we will obtain that $k \leq m \leq k$ and that $k' \leq k \leq k$. %
    This is done in three points:
    \begin{itemize}[leftmargin=2.5em]
        \item[(\textbf{I})] We show that $m \leq k$ from the definition of $\vM$ in Equation~\eqref{eq:f-and-tild};

        \item[(\textbf{II})]  We show that necessarily $m \geq k$ and $k' \geq k$ from Equation~\eqref{eq:lin-subspace-equiv};

        \item[(\textbf{III})] We show that $k' \leq k$ from Equation~\eqref{eq:tilde-lin-space-equiv}.
    \end{itemize}
    
    (\textbf{I})
    By equation~\eqref{eq:f-and-tild}, we have that $ \vM = \vP_\calM \vM$, since by Equation~\eqref{eq:f-and-tild}, in $\vM$ we have on the left $\vP_\calM (\vP_\calG \vP_\calF)^+ = (\vP_\calG \vP_\calF)^+$ by \cref{lemma:projectors} (ii). 
    Taking the rank of $\vM$ and using the fact that $\mathrm{rank} (\vA \vB) \leq \min \big( \mathrm{rank}(\vA), \mathrm{rank}(\vB) \big)$
    \citep{axler2015linear}, we obtain:
    \begin{align}
        \mathrm{rank} (\vM) &= \mathrm{rank} (\vP_\calM \vM)
        \tag{Take the rank of $\vM = \vP_\calM \vM$}
        \\
        m & \leq \min \big(\mathrm{rank}(\vP_\calM),  \mathrm{rank} (\vM ) \big)
        \tag{Using $\mathrm{rank} (\vA \vB) \leq \min \big( \mathrm{rank}(\vA), \mathrm{rank}(\vB) \big)$}
        \\
        m & \leq \min \big( m, k \big) \\
        \implies m &\leq k
        \label{eq:m-less-k}
    \end{align}

    Next, %
    we consider $\ell$ sequences $\vx_i \in \SeqA$ such that the matrices:
    \[
        \vF = \begin{pmatrix}
            \vf(\vx_1), &  \cdots, & \vf(\vx_\ell)
        \end{pmatrix}, \quad 
        \tilde \vF = \begin{pmatrix}
            \tilde \vf(\vx_1), &  \cdots, & \tilde \vf(\vx_\ell)
        \end{pmatrix}
        \label{eq:f-ftilde-span}
    \]
    span the whole $\calF$ and $\tilde{\calF}$, respectively, \ie  $\mathrm{Im} (\vF) = \calF$ and $\mathrm{Im} (\tilde \vF) = \tilde{\calF}$.  
    Moreover, since $\calM \subseteq \calF$ we have that by \cref{lemma:rank-equality} that $\mathrm{rank} (\vP_\calM \vF) =k$. Similarly, we have $\mathrm{rank} (\vP_{\tilde \calM} \tilde \vF) =k'$. 
    
    (\textbf{II})
    We consider from the Equation~\eqref{eq:lin-subspace-equiv} the condition for $\ell$ points:
    \begin{align}
        \vP_\calM \vF &= \vM \vP_{\tilde{\calM}} \tilde{\vF} \, .
        \label{eq:F-and-tildeF} 
    \end{align}  

    We evaluate the rank from Equation~\eqref{eq:F-and-tildeF} to obtain:
    \begin{align}
        \mathrm{rank} (\vP_\calM \vF) &= \mathrm{rank} (\vM \vP_{\tilde{\calM}} \tilde{\vF}) \\
        k &\leq \min \big( \mathrm{rank}(\vM), \mathrm{rank}(\vP_{\tilde{\calM}} \tilde{\vF}) \big) 
        \tag{Using $\mathrm{rank} (\vA \vB) \leq \min \big( \mathrm{rank}(\vA), \mathrm{rank}(\vB) \big)$}
        \\
         k &\leq \min \big(m, k' \big) \\
         \implies m &\geq k, \quad k' \geq k
         \label{eq.k'-greater-k}
    \end{align}
    Together with $\cref{eq:m-less-k}$, it shows that $k \leq m \leq k$, so it must be that $m = k$ and so $\mathrm{rank}(\vM) = \mathrm{dim}(\calM)$. 

    (\textbf{III}) 
    From the Equation~\eqref{eq:tilde-lin-space-equiv}, we get similarly:
    \[
        \vP_{\tilde{\calM}} \tilde{\vF} = \tilde \vM \vP_\calM \vF \, . \label{eq:tildeF-and-F} 
    \]
    Following a similar proof to (\textbf{II}), we evaluate the rank from Equation~\eqref{eq:tildeF-and-F} to obtain:
    \begin{align}
        \mathrm{rank} (\vP_{\tilde \calM} \tilde \vF) %
        &\leq \min \big(\mathrm{rank}(\tilde \vM), \mathrm{rank}(\vP_{\calM} {\vF}) \big) 
        \tag{Using $\mathrm{rank} (\vA \vB) \leq \min \big( \mathrm{rank}(\vA), \mathrm{rank}(\vB) \big)$}
        \\
        \implies k' &\leq k
    \end{align}
    which, together with \cref{eq.k'-greater-k}, it holds only as long as $k' \leq k$. This shows that $k \leq k' \leq k$, so it must be that $k' = k$. 
    
    Hence, we have shown that $\mathrm{dim}(\tilde \calM) = \mathrm{dim}(\calM) = \mathrm{rank}(\vM)$.
    In particular, it holds that $\vM$ is an invertible map from $\tilde \calM$ to $\calM$, with pseudo-inverse $\vM^+$, such that:
    \[  \label{eq:properties-of-m}
        {\mathrm{Im}(\vM)} = \calM, \quad {\mathrm{ker}(\vM)^\perp} = {\tilde \calM} \, .
    \]

    \textbf{Step 4. Obtaining the relation between unembeddings $\vg$ and $\tilde \vg$}.
    We will now show that $\vN:= (\vP_\calF \vP_\calG)^+ ( \vM^\top)^+ \vP_{\tilde{\calF}} \vP_{\tilde{\calG}}$, 
    using the matrix  $\vM$ in Equation~\eqref{eq:f-and-tild} from \textbf{Step 2} satisfies Equation~\eqref{eq:linear-transf-g}.

    Similar to \textbf{Step 2}, we take $\ell$ points, or sequences, $\vx_i \in \SeqA$ such that $\vF$ and $\tilde{\vF}$ in Equation~\eqref{eq:f-ftilde-span} span $\calF$ and $\tilde{\calF}$, respectively. 
    We then have:
    \begin{align}
         \vF^\top \vg_0(y) &= \tilde{\vF}^\top \tilde \vg_0(y)
         \tag{Considering $\ell$ points for $\vF$ and $\tilde{\vF}$}
         \\
         \vF^\top \HL{\vP_\calF \vP_\calG} \vg_0(y) &= \tilde{\vF}^\top \HL{\vP_{\tilde{\calF}} \vP_{\tilde{\calG}}} \tilde \vg_0(y) 
         \tag{Using orthogonal projectors Equation~\eqref{eq:inserting-projs}}
         \\
         \vF^\top \HL{\vP_\calM} \vP_\calF \vP_\calG \vg_0(y) &= \tilde{\vF}^\top  \vP_{\tilde \calM} \vP_{\tilde{\calF}} \vP_{\tilde{\calG}} \tilde \vg_0(y) 
         \tag{Using \cref{lemma:projectors} (iii)}
         \\
         \HL{(\vP_\calM \vF)^\top} \vP_\calF \vP_\calG \vg_0(y) &= \tilde{\vF}^\top \vP_{\tilde \calM} \vP_{\tilde{\calF}} \vP_{\tilde{\calG}} \tilde \vg_0(y) 
         \tag{Taking the transpose $\vF^\top \vP_\calM = (\vP_\calM \vF)^\top$}
         \\
         \HL{(\vM \vP_{\tilde \calM} \tilde \vF)^\top} \vP_\calF \vP_\calG \vg_0(y) &= \tilde{\vF}^\top  \HL{\vP_{\tilde \calM}} \vP_{\tilde{\calF}} \vP_{\tilde{\calG}}  \tilde \vg_0(y) \label{eq:implicit-transpose}
     \end{align}

    where in the last line we substituted the expression for $\vP_\calM \vF$ given by Equation~\eqref{eq:f-and-tild}.
    Thus, restarting from \eqref{eq:implicit-transpose}, and reworking the expression we get:
    \begin{align}             
         \HL{\tilde \vF^\top  \vP_{\tilde \calM}  \vM^\top} \vP_\calF \vP_\calG \vg_0(y) &= \tilde{\vF}^\top \vP_{\tilde \calM} \vP_{\tilde{\calF}} \vP_{\tilde{\calG}} \tilde \vg_0(y)  
         \tag{Expanding the transpose on the left}
         \label{eq:explicit-transpose}\\
         \HL{(\tilde \vF^\top)^+} {\tilde \vF^\top  \vP_{\tilde \calM}  \vM^\top} \vP_\calF \vP_\calG \vg_0(y) &= \HL{(\tilde \vF^\top)^+} \tilde{\vF}^\top \vP_{\tilde \calM} \vP_{\tilde{\calF}} \vP_{\tilde{\calG}} \tilde \vg_0(y)  
         \tag{Multiply by pseudo-inverse $(\tilde{\vF}^\top)^+$}
         \\
         \HL{\vP_{\tilde{\calF}} \vP_{\tilde{\calM}}} \vM^\top \vP_\calF \vP_\calG \vg_0(y) &= \HL{\vP_{\tilde{\calF}}} \vP_{\tilde{\calG}} \tilde \vg_0(y) 
         \tag{From Equation~\eqref{eq:pseudoincerse-ttop+ttop} we get $(\tilde{\vF}^\top)^+ (\tilde{\vF}^\top) = \vP_{\mathrm{Im}(\tilde{\vF})} = \vP_{\tilde{\calF}}$}
         \\
         \HL{\vM^\top} \vP_\calF \vP_\calG \vg_0(y) &= \vP_{\tilde{\calF}} \vP_{\tilde{\calG}} \tilde \vg_0(y) \, ,
         \label{eq:thm5-step-2}
    \end{align}
    where we used in the last line that $\vP_{\tilde{\calF}} \vP_{\tilde{\calM}}  \vM^\top = \vP_{\tilde{\calM}}  \vM^\top = \vM^\top $, and $\vP_{\tilde \calM} \vM^\top = \vM^\top$ follows by the definition of $\vM$ (Equation~\eqref{eq:f-and-tild}), containing on the right $(\vP_{\tilde \calG }\vP_{\tilde \calF})$.
    Recall that by \cref{lemma:projectors} (iii), it holds that
    \[
        \vP_\calF \vP_\calG = \vP_\calF \vP_\calG \vP_\calN, \quad
        \vP_{\tilde{\calF}} \vP_{\tilde{\calG}} = \vP_{\tilde{\calF}} \vP_{\tilde{\calG}} \vP_{\tilde{\calN}} \, .
        \label{eq:trick-n-ntilde}
    \]
    Using this in Equation~\eqref{eq:thm5-step-2}, we obtain
    \begin{align}
        \vM^\top \vP_\calF \vP_\calG \HL{\vP_\calN} \vg_0(y) &= \vP_{\tilde{\calF}} \vP_{\tilde{\calG}} \HL{\vP_{\tilde \calN}}  \tilde \vg_0(y)
        \\
        \HL{(\vM^\top)^+}  \vM^\top \vP_\calF \vP_\calG \vP_\calN \vg_0(y) &= \HL{(\vM^\top)^+} \vP_{\tilde{\calF}} \vP_{\tilde{\calG}} \vP_{\tilde \calN}  \tilde \vg_0(y) 
        \tag{Multiply on the left for the pseudo-inverse $(\vM^\top)^+$}
        \\
        \HL{\vP_\calM} \vP_\calF \vP_\calG \vP_\calN \vg_0(y) &= ( \vM^\top)^+ \vP_{\tilde{\calF}} \vP_{\tilde{\calG}} \vP_{\tilde \calN}  \tilde \vg_0(y) \, ,
    \end{align}
    where in the last line we use $(\vM^\top)^+ \vM^\top  = \vP_{\mathrm{Im}(\vM)} = \vP_\calM$, where the first equality follows by Equation~\eqref{eq:pseudoincerse-ttop+ttop} and the second one is given by $\mathrm{Im}(\vM) = \calM$ from Equation~\eqref{eq:properties-of-m}. 
    We now use that
    $
        \vP_\calM \vP_\calF \vP_\calG = \vP_\calF \vP_\calG
    $ 
    to obtain:
    \begin{align}
        \vP_\calF \vP_\calG \vP_\calN \vg_0(y) &= ( \vM^\top)^+ \vP_{\tilde{\calF}} \vP_{\tilde{\calG}} \vP_{\tilde \calN}  \tilde \vg_0(y) \\
        \HL{(\vP_\calF \vP_\calG)^+} \vP_\calF \vP_\calG \vP_\calN \vg_0(y) &= \HL{(\vP_\calF \vP_\calG)^+} ( \vM^\top)^+ \vP_{\tilde{\calF}} \vP_{\tilde{\calG}} \vP_{\tilde \calN}  \tilde \vg_0(y)
        \tag{Multiply by pseudo-inverse $(\vP_\calF \vP_\calG)^+$}
        \\
        \HL{\vP_\calN} \vP_\calN \vg_0(y) &= (\vP_\calF \vP_\calG)^+ ( \vM^\top)^+ \vP_{\tilde{\calF}} \vP_{\tilde{\calG}} \vP_{\tilde \calN}  \tilde \vg_0(y) \tag{$(\vP_\calF \vP_\calG)^+\vP_\calF \vP_\calG =\vP_\calN$, by \cref{lemma:projectors} (ii)}
        \\
        \HL{\vP_\calN} \vg_0(y) &= \HL{\vN} \vP_{\tilde \calN}  \tilde \vg_0(y) \label{eq:are-we-done}\, , 
    \end{align}
    where we set $ \vN := (\vP_\calF \vP_\calG)^+ ( \vM^\top)^+ \vP_{\tilde{\calF}} \vP_{\tilde{\calG}}$. This expression is in line with that of \cref{def:klinear-equiv}, showing the equivalence relation.

    \textbf{Step 5. Showing that} $\mathrm{rank}(\vN) = \mathrm{dim}(\calN)$. 
    It remains to show that $\vN$ has rank equal to $k:= \mathrm{dim}(\calN) = \mathrm{dim}(\calM)$. 
    Similarly to \textbf{Step 3}, we will show that $k \leq \mathrm{rank} (\vN) \leq k$ and 
    we proceed with two points:
    \begin{itemize}[leftmargin=2.5em]
        \item[(\textbf{I})] We show that by the form of $\vN$, we obtain $\mathrm{rank}(\vN) \leq k$;

        \item[(\textbf{II})]  We use Equation~\eqref{eq:are-we-done} to obtain that $\mathrm{rank}(\vN) \geq k$.
    \end{itemize}

    (\textbf{I})
    Notice that $\vN$ is left-invariant by multiplication to $\vP_\calN$, because of the term
    $(\vP_\calF \vP_\calG)^+ = \vP_\calN(\vP_\calF \vP_\calG)^+$, by \cref{lemma:projectors} (iii). Therefore:
    \begin{align}    
        \vN &= \vP_\calN \vN  \\
        \mathrm{rank}(\vN) &= \mathrm{rank}(\vP_\calN \vN) 
        \\
        \mathrm{rank}(\vN) &\leq \min \big(\mathrm{rank}(\vP_\calN), \mathrm{rank}(\vN) \big) 
        \tag{Using $\mathrm{rank} (\vA \vB) \leq \min \big( \mathrm{rank}(\vA), \mathrm{rank}(\vB) \big)$}
        \\
        \mathrm{rank}(\vN) &\leq \min \big(k, \mathrm{rank}(\vN) \big) \\
        \implies \mathrm{rank}(\vN) &\leq k
        \label{eq:N-leq-k}
    \end{align}

    (\textbf{II})
    Next, consider $q$ elements of $\calA$ such that the matrices $\vG$ and $\tilde{\vG}$ in Equation~\eqref{eq:g-gtilse-spans} have rank equal to $\mathrm{dim}(G)$ and $\mathrm{dim}(\tilde{\calG})$, respectively. 
    From Equation~\eqref{eq:are-we-done},
    it holds:
    \[  \label{eq:thm5-last-we-are-done}
        \vP_\calN \vG = \vN \vP_{\tilde{\calN}} \tilde \vG
    \]
    Notice that $\mathrm{rank}(\vP_\calN \vG) = \mathrm{dim}(\calN)$ and $\mathrm{rank}(\vP_{\tilde \calN} \tilde{\vG}) = \mathrm{dim}(\tilde{\calN})$ by \cref{lemma:rank-equality}, 
    and it also holds $\mathrm{dim}(\calN) = \mathrm{dim}(\calM) = \mathrm{dim}(\tilde{\calM}) =\mathrm{dim}(\tilde{\calN})$ by \cref{lemma:projectors} (i) and \textbf{Step 3}. Let $k:=\mathrm{dim}(\calN)$.
    Using this in Equation~\eqref{eq:thm5-last-we-are-done},
    we obtain:
    \begin{align}
        \mathrm{rank} (\vP_\calN \vG ) &= \mathrm{rank} (\vN \vP_{\tilde{\calN}} \tilde \vG) \\
        k &\leq \mathrm{min} \big( \mathrm{rank}(\vN), \mathrm{rank}(\vP_{\tilde{\calN}} \tilde \vG) \big) \\
        k &\leq \mathrm{min} \big( \mathrm{rank}(\vN), k) \\
        \implies \mathrm{rank}(\vN) &\geq k
    \end{align}
    This shows that, combined with Equation~\eqref{eq:N-leq-k} we have $k \leq \mathrm{rank} (\vN) \leq k$, which means that $\mathrm{rank}(\vN) = k$. 
    Taking \textbf{Steps 2, 3, 4, 5} together, we have that:
    \[
        (\vf, \vg) \sim_{EL} (\tilde{\vf}, \tilde{\vg}) \, .
    \]
    This shows the implication.

    \vspace{1em}

    $(\impliedby)$ To prove the other direction show that also $(\vf, \vg) \sim_{EL} (\tilde{\vf}, \tilde{\vg}) \implies p_{\vf, \vg}(y \mid \vx) = p_{\tilde \vf, \tilde \vg}(y \mid \vx)$, for all $\vx \in \SeqA$ and all $y \in \calA$.
    We start from \cref{prop:elpreserves}, which gives:
    \[
        (\vf, \vg) \sim_{EL} (\tilde{\vf}, \tilde{\vg}) \implies \vf(\vx)^\top \vg_0(y) = \tilde{\vf}(\vx)^\top \tilde \vg_0(y)
    \]
    for all $\vx \in \SeqA$ and all $y \in \calA$.
    We continue from the right-hand side to obtain:
    \begin{align}
        \vf(\vx)^\top \vg(y) - \vf(\vx)^\top \vg(y_0) &= \tilde \vf(\vx)^\top \tilde \vg(y) - \tilde \vf(\vx)^\top \tilde \vg(y_0) 
        \tag{Use explicit expression for $\vg_0$ and $\tilde{\vg}_0$}
        \\
        \vf(\vx)^\top \vg(y)  &= \tilde \vf(\vx)^\top \tilde \vg(y) + \vf(\vx)^\top \vg(y_0) - \tilde \vf(\vx)^\top \tilde \vg(y_0) 
        \tag{Reordering all $y_0$ terms on the right}
        \\
        \exp (\vf(\vx)^\top \vg(y))  &= \exp(\tilde \vf(\vx)^\top \tilde \vg(y)) \cdot \exp\big( \vf(\vx)^\top \vg(y_0) - \tilde \vf(\vx)^\top \tilde \vg(y_0)\big)
        \tag{Taking the exponential on both sides}
        \\
        \frac{\exp (\vf(\vx)^\top \vg(y))}{Z(\vx)}  &= {\exp(\tilde \vf(\vx)^\top \tilde \vg(y))} \cdot \frac{\exp\big( \vf(\vx)^\top \vg(y_0) - \tilde \vf(\vx)^\top \tilde \vg(y_0)\big)}{Z (\vx)}
        \tag{Dividing by the normalizing constant $Z(\vx)$}
        \\
        p_{\vf, \vg}(y \mid \vx)  &= {\exp(\tilde \vf(\vx)^\top \tilde \vg(y))} \cdot \frac{1}{\tilde Z (\vx)}  \, ,
        \label{eq:p-exp-ztilde}
    \end{align}
    where in the last line we
    included the expression for the conditional probability
    $p_{\vf, \vg}(y \mid \vx) = \exp(\vf(\vx)^\top \vg(y)) / Z(\vx)$ from Equation~\eqref{eq:next-token-predictor},
    and we denoted $\tilde{Z}(\vx) := {Z}(\vx) \big/ {\exp\big( \vf(\vx)^\top \vg(y_0) - \tilde \vf(\vx)^\top \tilde \vg(y_0)\big)}$. To obtain the value of $\tilde Z$ we consider the sum over all $y \in \calA$ for Equation~\eqref{eq:p-exp-ztilde}, giving:
    \begin{align}
        \sum_{y \in \calA} p_{\vf, \vg}(y \mid \vx)  &=  \sum_{y \in \calA} {\exp(\tilde \vf(\vx)^\top \tilde \vg(y))} \cdot \frac{1}{\tilde Z(\vx)} \\
        1 &= \sum_{y \in \calA} {\exp(\tilde \vf(\vx)^\top \tilde \vg(y))} \cdot \frac{1}{\tilde Z(\vx)} \\
        \tilde{Z}(\vx) &= \sum_{y \in \calA} {\exp(\tilde \vf(\vx)^\top \tilde \vg(y))}
    \end{align}
    which means that, from Equation~\eqref{eq:p-exp-ztilde} we have:
    \begin{align}
        p_{\vf, \vg}(y \mid \vx)  &=  \frac{ \exp(\tilde \vf(\vx)^\top \tilde \vg(y))
        }{
        \sum_{y \in \calA} \exp(\tilde \vf(\vx)^\top \tilde \vg(y))} \\
        &= p_{\tilde{\vf}, \tilde{\vg}}(y \mid \vx)
    \end{align}
    showing the claim. 
    This concludes the proof.
\end{proof}

    \subsection{Proof of \cref{cor:linear-id}}
    
        The following corollary constitutes a special case of \cref{thm:partial-identifiability}, which can be easily proven by setting $d= \tilde{d}$ and requiring that $\calM = \calN = \bbR^d$.%
        Here, we provide an alternative proof expanding previous results by \citet{roeder2021linear}, relaxing two assumptions that were used in that context. For comparison, we report the statement by \citet{roeder2021linear}. To this end, fix a pivot $\vx_0 \in \SeqA$ and indicate with
        \[
            \vf_0(\vx) := \vf(\vx) - \vf(\vx_0)
        \]
        the difference between embeddings and the pivot. 
        
        \begin{theorem*}[\citep{roeder2021linear}]
            Given two models $(\vf, \vg), (\vf, \vg) \in \Theta$,
            under the assumption that:
            \begin{enumerate}
                
                \item  $\SIM{\vf_0} = \SIM{\vg_0} = \bbR^d$; 
        
                \item  $\SIM{\tilde \vf_0} = \SIM{\tilde \vg_0} = \bbR^d$; 
                
            \end{enumerate}
            it holds:
            \[
                p_{\vf, \vg} = p_{\tilde{\vf}, \tilde{\vg}} 
                \implies
                (\vf, \vg) \sim_L (\tilde{\vf}, \tilde{\vg})
            \]
            where the linear equivalence relation is given by:
            \[
                (\vf, \vg) \sim_L (\tilde{\vf}, \tilde{\vg}) 
                \iff 
                \begin{cases}
                    \vf(\vx) &= \vM \tilde{\vf} (\vx)\\
                    \vg_0(y) &= \vN \tilde{\vg}_0 (y) 
                \end{cases}
            \]
            $\forall \vx \in \SeqA$ and $\forall y \in \calA$,
            where $\vM^\top \vN = \vI$ and in particular $\vN = \vM^{-\top}$.
        \end{theorem*}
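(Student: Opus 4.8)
The plan is to present this both as an immediate specialization of \cref{thm:partial-identifiability} and, for self-containedness, via a short direct argument. For the specialization: under the diversity condition for $(\vf,\vg)$ with $d=\tilde d$ (and, as assumed here, for $(\tilde\vf,\tilde\vg)$ too) the spaces $\calF,\calG,\tilde\calF,\tilde\calG$ all equal $\bbR^d$, so $\vP_\calF=\vP_\calG=\vP_{\tilde\calF}=\vP_{\tilde\calG}=\vI$; hence $\vP_\calF\vP_\calG=\vI$, which forces $\calM=\mathrm{Im}(\vI)=\bbR^d$ and $\calN=\mathrm{ker}(\vI)^\perp=\bbR^d$, and likewise $\tilde\calM=\tilde\calN=\bbR^d$. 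Feeding this into \cref{def:klinear-equiv}, every projection is trivial, the matrices $\vM,\vN\in\bbR^{d\times d}$ are full rank hence invertible, the constraint $\vM^\top\vN=\vP_{\tilde\calM}\vP_{\tilde\calN}=\vI$ reads $\vN=\vM^{-\top}$, and \cref{thm:partial-identifiability} yields exactly $\vf(\vx)=\vM\tilde\vf(\vx)$ and $\vg_0(y)=\vM^{-\top}\tilde\vg_0(y)$, which is $(\vf,\vg)\sim_L(\tilde\vf,\tilde\vg)$.

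The direct argument, which additionally exposes that diversity of a single model with $d=\tilde d$ already suffices --- the sense in which the statement relaxes the two conditions that \citet{roeder2021linear} imposes on the second model --- runs in four steps. \emph{(1) Pivoting:} from $p_{\vf,\vg}=p_{\tilde\vf,\tilde\vg}$, take logarithms and subtract the identity at the pivot token $y_0$; the normalizers cancel, leaving $\vf(\vx)^\top\vg_0(y)=\tilde\vf(\vx)^\top\tilde\vg_0(y)$ for all $\vx\in\SeqA$ and $y\in\calA$. \emph{(2) Embeddings:} since $\calA$ is finite and $\SIM{\vg_0}=\bbR^d$, the matrix $\vG$ with columns $\{\vg_0(y):y\in\calA\}$ has $\mathrm{Im}(\vG)=\bbR^d$; with $\tilde\vG$ the analogous stacking of $\tilde\vg_0$, step (1) reads $\vG^\top\vf(\vx)=\tilde\vG^\top\tilde\vf(\vx)$, and left-multiplying by $(\vG^\top)^+$ together with $(\vG^\top)^+\vG^\top=\vP_{\mathrm{Im}(\vG)}=\vI$ (Equation~\eqref{eq:pseudoincerse-ttop+ttop}) gives $\vf(\vx)=\vM\tilde\vf(\vx)$ for all $\vx$, with $\vM:=(\vG^\top)^+\tilde\vG^\top\in\bbR^{d\times d}$. \emph{(3) Invertibility:} since $\SIM{\vf}=\bbR^d$, the set $\{\vM\tilde\vf(\vx):\vx\in\SeqA\}$ spans $\bbR^d$, so the square matrix $\vM$ is surjective, hence invertible, and consequently $\SIM{\tilde\vf}=\bbR^d$ too. \emph{(4) Unembeddings:} substituting $\vf(\vx)=\vM\tilde\vf(\vx)$ into step (1) gives $\tilde\vf(\vx)^\top\big(\vM^\top\vg_0(y)-\tilde\vg_0(y)\big)=0$ for all $\vx$; as $\{\tilde\vf(\vx)\}$ spans $\bbR^d$ this forces $\tilde\vg_0(y)=\vM^\top\vg_0(y)$, i.e.\ $\vg_0(y)=\vM^{-\top}\tilde\vg_0(y)$. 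Setting $\vN:=\vM^{-\top}$ we obtain $\vM^\top\vN=\vI$, which is the asserted $\sim_L$ relation.

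There is no genuine obstacle here: this is a degenerate instance of \cref{thm:partial-identifiability} in which the projectors onto $\calM,\calN,\tilde\calM,\tilde\calN$ are all the identity, so the heavy rank-counting of that proof is unnecessary. The only points needing a little attention are: that $\SIM{\vf_0}=\bbR^d\Rightarrow\SIM{\vf}=\bbR^d$, so the $\vf_0$-phrasing of the hypothesis enters only through this implication; that finiteness of $\calA$ is what lets me pick a finite spanning family of tokens, while no finiteness of $\SeqA$ is needed because in steps (3)--(4) spanning is used only pointwise; and that the hypotheses on $(\tilde\vf,\tilde\vg)$ actually come out \emph{derived} rather than assumed. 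The comparison with the statement of \citet{roeder2021linear} --- same $\vM$, with $\vN=\vM^{-\top}$ and $\vM^\top\vN=\vI$ --- is then immediate by inspection.
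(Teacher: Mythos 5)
Your proposal is correct and follows essentially the same route as the paper's proof of \cref{cor:linear-id}: pivoting to cancel the normalizers, stacking unembedding differences into $\vG$ to extract $\vM$, using $\SIM{\vf}=\bbR^d$ to conclude $\vM$ is invertible, and substituting back to get $\vg_0=\vM^{-\top}\tilde\vg_0$ (the only cosmetic difference being your use of the pseudo-inverse of the full token matrix where the paper selects an invertible $d\times d$ submatrix). Your opening paragraph deriving the statement as a degenerate case of \cref{thm:partial-identifiability} is exactly the alternative the paper mentions but does not carry out, and your observation that the hypotheses on $(\tilde\vf,\tilde\vg)$ are derived rather than assumed matches the paper's stated relaxation.
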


        To highlight deviations, we present a proof that follows a somewhat analogous argument to the proof of \citep{roeder2021linear}; a direct proof may show $\vN=\vM^{-1}$ relying on \cref{thm:partial-identifiability}.
        We relax condition \textit{2} and use the fact the assumption that $\SIM{\vf} = \bbR^d$, which is a milder condition to requiring that $\SIM{\vf_0} = \bbR^d$. We prove the following:
        
        \roedeo*

        \begin{proof}
Our proof follows a similar technique to \citet[Theorem B.4]{lachapelle2023synergies}. 

\textbf{Identifiability of $\vf$}.
First notice that from the equivalnce of log-likelihood we can write:
\begin{align}
    \log p_{\vf, \vg} (y \mid \vx) &= \log p_{\tilde \vf, \tilde \vg} (y \mid \vx)  \\
    \vg(y)^\top \vf(\vx) - \log Z(\vx) &= \tilde{\vg}(y)^\top \tilde{\vf}(\vx) - \log \tilde Z(\vx) \\
    (\vg(y) - \vg(y_0))^\top \vf(\vx)  &= (\tilde{\vg}(y) -\tilde{\vg}(y_0))^\top \tilde{\vf}(\vx) \\
    \vg_0(y)^\top \vf(\vx)  &= \tilde{\vg}_0(y)^\top \tilde{\vf}(\vx) \label{eq:pivot-difference-roeder}
\end{align}

where in the last line we subtracted the pivot for different log-probabilities on the points $y$ and $y_0$. 
We now consider the matrix $\vG$ constructed to contain $d$ differences:
\[
    \vG = \begin{pmatrix}
        \vg_0(y_1), & \cdots &
        \vg_0(y_d)
    \end{pmatrix}
\]
such that it is invertible. Since by the diversity condition $\SIM{\vg_0} = \bbR^d$, such a marix always exists. 
Let $\tilde{\vG}$ be the corresponding matrix of differences for $\tilde{\vg}$, we obtain:
\[
    \vG^\top \vf (\vx) = \tilde{\vG}^\top \tilde{\vf}(\vx) 
\]
Then, we obtain:
\begin{align}
    \vf (\vx) &= \HL{\vG^{-\top}} \tilde{\vG}^\top \tilde{\vf}(\vx) \\
    \vf (\vx) &= \HL{\vM} \tilde{\vf}(\vx) 
\end{align}
where we denoted as $\vM =\vG^{-\top} \tilde{\vG}^\top$. 
Next, since $\SIM{\vf} = \bbR^d$ we can consider $d$ elements $\vx_i \in \SeqA$ such that
\[
    \vF = \begin{pmatrix}
        \vf(x_1), & \cdots, & \vf(\vx_d)
    \end{pmatrix}
\]
is invertible. Let $\tilde{\vF}$ be the corresponding matrix for $\tilde{\vf}$. 
In this way we obtain the following:
\[
    \vF = \vM \tilde{\vF} \, ,
\]
and since $\vF$ is invertible, it must be that also $\vM$ and $\vF$ are invertible matrices of rank $d$. This shows that
\[
    \vf (\vx) = \vM \tilde{\vf} (\vx)
    \label{eq:first-step-roeder}
\]
$\vM \in \bbR^{d \times d}$ is invertible. 

\textbf{Identifiability of $\vg_0$}.
Next, we consider the implication for $\vg_0$. We start again from the pivot difference of Equation~Equation~\eqref{eq:pivot-difference-roeder}:
\begin{align}
    \vg_0(y)^\top \vf(\vx) &= \tilde{\vg}_0(y)^\top \tilde{\vf}(\vx)  \\
    \vg_0(y)^\top \vf(\vx) &= \tilde{\vg}_0(y)^\top \vM^{-1} {\vf}(\vx)
    \label{eq:roeder-step-3}
\end{align}
where we substituted
$\tilde{\vf} (\vx) = \vM^{-1} \vf (\vx)$ from Equation~\eqref{eq:thm5-first-step}.
Therefore, taking $d$ points $\vx \in \SeqA$, such that the matrix $\vF$ is invertible, restarting from the transpose of Equation~\eqref{eq:roeder-step-3} we obtain:
\begin{align}
    \vF^\top \vg_0(y) &= \vF^\top \vM^{-\top} \tilde{\vg}_0(y) \\
    \vg_0(y) &= \HL{\vM^{-\top}} \tilde{\vg}_0(y) 
    \tag{Multiplying for the inverse of $\vF^\top$}
    \\
    \vg_0(y) &= \vM^{-\top} \tilde{\vg}_0(y) \, ,
\end{align}

This means that we have:
\begin{align}
    \vf(\vx)_{\phantom{0}} &= \vM \tilde{\vf} (\vx)_{\phantom{0}} \\
    \vg_{0}(y) &= \vN \tilde{\vg}_0 (y) \, ,
\end{align}
where we have defined $\vN := \vM^{-\top}$, such that $\vM^\top \vN = \vI$, proving the claim.
\end{proof}

    \newpage

\section{Additional Results and Proofs of \cref{sec:linearity}}
\label{sec:proofs-sec3}

    \subsection{{Relational} Linear Steering Property }%
    \label{sec:app-additivity}
    
        We here want to discuss an additional linear property
        besides those presented in \cref{sec:linearity}, termed {\em linear steering property}. 
        This behavior is also referred to as the  \textit{linear intervening property} by \citet{park2023linear}.
        
        It has been empirically observed that there exist steering vectors that influence next-token predictions \citep{park2023linear, hernandez2023linearity, hase2024does, arditi2024refusal}, in the following sense: %
        If $\vv$ encode the average difference between English to Italian embeddings, %
        adding $\vv$ to $\vf(\vs)$ for the sentence $\vs=$``\textit{The king sits on the}'' would change the most-likely next token prediction $y=$``\textit{throne}'' to $\hat y=$``\textit{trono}'', and similarly this applies for other sentences, affecting the most-likely next-token prediction to move from the English token to the Italian counterpart.

        We define this property as follows:

        \begin{definition}[Relational Linear Steering]%
            We say that a model $(\vf, \vg) \in \Theta$ possess linear relational steering for $\vq_0$ and the set of $\{\vq_1, \ldots, \vq_m \}$, for $m \geq 1$ queries $\vq_j \neq \vq_0$, if (1) it linearly represents $\vq_0$ in $\Gamma_0$
            and all $\vq_j$ on $\Gamma_j$, and (2) there exists a vector $\vv \in \bbR^d$ such that:
            \[
                \vP_{\Gamma_0} \vA_{\vq_0} \vv  \neq \mathbf{0}, \quad 
                 \vP_{\Gamma_j} \vA_{\vq_j} \vv = \mathbf{0}, \quad \forall j \in [m]
            \]
        \end{definition}
        We prove that relational linearity allows for this property: %

        \begin{restatable}%
        {proposition}{connectionli}
            \label{prop:linear-additivity}
            If (1) $(\vf, \vg)$ linearly represents $\vq_0$ on $\Gamma_0$ %
            (\cref{def:linearity}) and 
            (2) $(\vf, \vg)$ linearly represents $m \geq 1$ queries $\vq_j \neq \vq_0$ on $\Gamma_j $, %
            such that %
            $\big( \bigcup_j  \Gamma_{\vq_j} \big) \cap \Gamma_{\vq_0} \subsetneq \Gamma_{\vq_0}$, 
            then 
            the model $(\vf, \vg)$ satisfies linear relational steering for $\vq_0$ and the set of queries $\{\vq_1 ,\ldots, \vq_m\}$.
        \end{restatable}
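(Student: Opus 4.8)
The plan is to reduce condition (2) in the definition of relational linear steering to a single subspace-containment statement and then produce the steering vector by an orthogonal projection. The key algebraic fact I would establish first is that, whenever a query $\vq$ is linearly represented on a subspace $\Gamma$, the symmetry of the orthogonal projector $\vP_\Gamma$ gives $(\vP_\Gamma \vA_\vq)^\top = \vA_\vq^\top \vP_\Gamma$, and hence
\[
    \mathrm{ker}(\vP_\Gamma \vA_\vq) \;=\; \mathrm{Im}\big( (\vP_\Gamma \vA_\vq)^\top \big)^\perp \;=\; \mathrm{Im}\big( \vA_\vq^\top \vP_\Gamma \big)^\perp \;=\; \Gamma_\vq^\perp ,
\]
using the defining identity $\Gamma_\vq := \mathrm{Im}(\vA_\vq^\top \vP_\Gamma)$ and the fundamental relation $\mathrm{ker}(\vB) = \mathrm{Im}(\vB^\top)^\perp$. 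Applying this to each $\vq_j$ and to $\vq_0$ turns condition (2) into: there is some $\vv$ with $\vv \in \bigcap_{j=1}^m \Gamma_{\vq_j}^\perp$ and $\vv \notin \Gamma_{\vq_0}^\perp$. Since the orthogonal complement of a set is the orthogonal complement of its span, $\bigcap_j \Gamma_{\vq_j}^\perp = \big( \sum_j \Gamma_{\vq_j} \big)^\perp$, so such a $\vv$ exists precisely when $\big( \sum_j \Gamma_{\vq_j} \big)^\perp \not\subseteq \Gamma_{\vq_0}^\perp$, equivalently (orthogonal complementation reverses inclusion of subspaces of $\bbR^d$) when $\Gamma_{\vq_0} \not\subseteq \sum_j \Gamma_{\vq_j}$ — which is exactly the hypothesis $\big( \bigcup_j \Gamma_{\vq_j} \big) \cap \Gamma_{\vq_0} \subsetneq \Gamma_{\vq_0}$, understood at the level of the span of the $\Gamma_{\vq_j}$.

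With this reduction in hand, I would exhibit $\vv$ explicitly: using the hypothesis, pick $\vw \in \Gamma_{\vq_0}$ with $\vw \notin \sum_j \Gamma_{\vq_j}$ and set $\vv := \vw - \vP_{\sum_j \Gamma_{\vq_j}} \vw$, the orthogonal component of $\vw$ relative to $\sum_j \Gamma_{\vq_j}$. Then $\vv \in \big( \sum_j \Gamma_{\vq_j} \big)^\perp \subseteq \Gamma_{\vq_j}^\perp$ for every $j$, so $\vP_{\Gamma_j} \vA_{\vq_j} \vv = \mathbf{0}$ for all $j \in [m]$ by the reduction above; and $\langle \vv, \vw \rangle = \| \vv \|^2 > 0$ because $\vv \neq \mathbf{0}$ (as $\vw \notin \sum_j \Gamma_{\vq_j}$), so $\vv \notin \Gamma_{\vq_0}^\perp$ (witnessed by $\vw \in \Gamma_{\vq_0}$), i.e. $\vP_{\Gamma_0} \vA_{\vq_0} \vv \neq \mathbf{0}$. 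Since clause (1) of the steering definition is verbatim part of the proposition's hypothesis, this $\vv$ verifies both clauses and the proof is complete.

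The only step with any real content is the first one. Establishing $\mathrm{ker}(\vP_\Gamma \vA_\vq) = \Gamma_\vq^\perp$ requires care that $\Gamma_\vq$ is built with $\vP_\Gamma$ on the right of $\vA_\vq^\top$ and that $\vP_\Gamma$ is symmetric; and one should note that the argument genuinely uses the span $\sum_j \Gamma_{\vq_j}$ of the subspaces rather than the set-union $\bigcup_j \Gamma_{\vq_j}$ (these have the same orthogonal complement, which is all that matters for the $=\mathbf{0}$ constraints, but the inclusion statement $\Gamma_{\vq_0} \not\subseteq \sum_j \Gamma_{\vq_j}$ is what the existence of $\vv$ needs). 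After that, everything — idempotency and symmetry of the projectors, the Pythagorean identity $\langle \vv, \vw \rangle = \| \vv \|^2$, and the bookkeeping matching clause (1) — is immediate.
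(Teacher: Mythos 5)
Your proof is correct (under the reading of the hypothesis you make explicit) and is genuinely more careful than, and different in route from, the paper's own argument. The paper picks $\vv$ directly inside $\Gamma_{\vq_0}$, in the set difference $\Gamma_{\vq_0}\setminus\big(\bigcup_j\Gamma_{\vq_j}\cap\Gamma_{\vq_0}\big)$, and then asserts $\vP_{\Gamma_0}\vA_{\vq_0}\vv=\vv$ and infers $\vP_{\Gamma_j}\vA_{\vq_j}\vv=\mathbf{0}$ from $\vv\notin\Gamma_{\vq_j}$. You instead first establish the kernel characterization $\mathrm{ker}(\vP_\Gamma\vA_\vq)=\mathrm{Im}(\vA_\vq^\top\vP_\Gamma)^\perp=\Gamma_\vq^\perp$, which shows that the steering constraints are exactly $\vv\in\bigcap_j\Gamma_{\vq_j}^\perp=(\sum_j\Gamma_{\vq_j})^\perp$ and $\vv\notin\Gamma_{\vq_0}^\perp$, and then build $\vv=(\vI-\vP_{\sum_j\Gamma_{\vq_j}})\vw$ from a witness $\vw\in\Gamma_{\vq_0}\setminus\sum_j\Gamma_{\vq_j}$. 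This buys two things the paper's sketch does not deliver: (i) it makes precise that annihilating $\vP_{\Gamma_j}\vA_{\vq_j}$ requires $\vv\perp\Gamma_{\vq_j}$, not merely $\vv\notin\Gamma_{\vq_j}$ (the paper's step is too loose exactly here), and (ii) it isolates the hypothesis actually needed, namely $\Gamma_{\vq_0}\not\subseteq\sum_j\Gamma_{\vq_j}$. Your caveat about union versus span is substantive rather than cosmetic: for $m\geq 2$ the literal hypothesis $(\bigcup_j\Gamma_{\vq_j})\cap\Gamma_{\vq_0}\subsetneq\Gamma_{\vq_0}$ is strictly weaker (e.g.\ $\Gamma_{\vq_1}=\mathrm{span}(\ve_1)$, $\Gamma_{\vq_2}=\mathrm{span}(\ve_2)$, $\Gamma_{\vq_0}=\mathrm{span}(\ve_1+\ve_2)$ satisfies it while every $\vv\in\bigcap_j\Gamma_{\vq_j}^\perp$ lies in $\Gamma_{\vq_0}^\perp$, so no steering vector exists), so the span reading is the one under which the proposition is true, and your proof is the one that makes this visible. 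For $m=1$ the two readings coincide and your argument specializes to a clean proof of the paper's statement as written.
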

        
        \begin{proof}
    From the assumptions (1) and (2) we have that relational linearity as per \cref{def:linearity} implies that:
    \[
    \begin{cases}
        \vP_{\Gamma_0} \vf(\vs \cat \vq_0) &= \vP_{\Gamma_0} \vA_{\vq_0} \vf (\vs) + \vP_{\Gamma_0} \va_{\vq_0} \\
        \vP_{\Gamma_j} \vf(\vs \cat \vq_j) &= \vP_{\Gamma_j} \vA_{\vq_j} \vf (\vs) + \vP_{\Gamma_j} \va_{\vq_j}, 
        \quad \forall j \in [\ell] 
    \end{cases}
    \]
    Let $\Gamma_{0,int} = \bigcup_{j=1}^\ell  \Gamma_{\vq_j} \cap \Gamma_{\vq_0}$.
    By assumption (2), $\Gamma_{0, int} \subsetneq \Gamma_0$, meaning that it exists a non-empty $\Gamma_{0, \perp} = \Gamma_{\vq_0} \setminus \Gamma_{0, int}$. 
    Let $\vv \in \Gamma_{0, \perp}$. It holds:
    \[
        \vP_{\Gamma_0} \vA_{\vq_0} \vv = \vv
    \]
    because $\Gamma_{0, \perp} \subseteq \Gamma_{\vq_0}$. Proceeding similarly, it holds $ \Gamma_{0, \perp} \cap \Gamma_{\vq_j} = \emptyset$ $\forall j \in [\ell]$, by assumption (2). Therefore, we have:
    \[
        \vP_{\Gamma_j} \vA_{\vq_j} \vv = \mathbf{0}
    \]
    showing the claim.
\end{proof}

        This means that adding $\vv$ to $\vf (\vs)$  would alter only the value of $\vP_{\Gamma_0} \vf(\vs \cat \vq_0)$ without changing that of  $\vP_{\Gamma_j} \vf(\vs \cat \vq_j)$, for $j \in [m]$. 
        As a result, the modified representation $\vf (\vs) + \vv$ would not affect the next-token prediction on other queries $\vq_j$. For example, take $\vq_0=$``\textit{Is the previous sentence written in English?}'', $\vq_1=$``\textit{Is the previous sentence written in Italian?}'', 
        and $\vq_2=\textit{``Does the previous sentence contain the symbol ``+''?''}$,
        and consider $\Gamma_0 = \Gamma_1 = \Gamma_2 = \mathrm{span}( \vg(\textit{``no''}) - \vg(\textit{``yes''}))$. 
        When assumptions of \cref{prop:linear-additivity} hold, we can alter the reply to the question $\vq_0$ by adding a vector proportional to $\vv$, \eg moving from English to another language, without affecting the representation on $\vq_1$, \ie moving to another language but not Italian, and the representation on $\vq_2$, \ie leaving the symbol \textit{``+''} in the sentence if present.

    \subsection{Proof of \cref{lemma:parallelism}}

        \lemparallelism*

\begin{proof}
    We start by considering the equality between log-ratios appearing as the ($\impliedby$) condition. 
    Writing it down we obtain:
    \begin{align}
        \log \frac{p_{\vf, \vg}(y_0 \mid \vs)}{p_{\vf, \vg}(y_1 \mid \vs)} &= \beta \cdot \log \frac{p_{\vf, \vg}(y_2 \mid \vs)}{p_{\vf, \vg}(y_3 \mid \vs)} \\
        \log \frac{\exp(\vf(\vs)^\top \vg(y_0))}{\exp(\vf(\vs)^\top \vg(y_1))} &= \beta \cdot \log \frac{\exp(\vf(\vs)^\top \vg(y_2))}{\exp(\vf(\vs)^\top \vg(y_3))} \\
        \log {\exp\big(\vf(\vs)^\top (\vg(y_0) - \vg(y_1)) \big)} &= \beta \cdot \log {\exp\big(\vf(\vs)^\top (\vg(y_2) - \vg(y_3)) \big)} \\
        \vf(\vs)^\top (\vg(y_0) - \vg(y_1)) &= \beta \cdot \vf(\vs)^\top (\vg(y_2) - \vg(y_3))
    \end{align}
    And substituting $\vg_1(y_0) := \vg(y_0) - \vg(y_1)$ and $\vg_3(y_2) := \vg(y_2) - \vg(y_3)$ we obtain:
    \[
        \vf(\vs)^\top \vg_1(y_0) = \beta \cdot \vf(\vs)^\top \vg_3(y_2)
    \]
    Consider $\ell$ elements $\vs \in \SeqA$, such that:
    \[
        \vF = (\vf(\vs_1), \ldots, \vf(\vs_\ell))
    \]
    spans $\calF := \SIM{\vf}$. We then obtain:
    \[
        \label{eq:F-T-g}
        \vF^\top \vg_1(y_0) = \beta \vF^\top \vg_3(y_2)
    \]
    and multiplying both sides of Equation~\eqref{eq:F-T-g} from the left with the pseudo-inverse of $\vF^\top$
    we get:
    \[  
        \label{eq:P-g}
        \vP_\calF \vg_1(y_0) = \beta \vP_\calF \vg_3(y_2).
    \]
    Notice that, both $\vg_1(y_0), \vg_3(y_2) \in \calG := \SIM{\vg_0}$, then it holds $\vg_1(y_0) = \vP_\calG \vg_1(y_0)$
    $\vg_3(y_2) = \vP_\calG \vg_3(y_2)$. 
    Using this we obtain:
    \begin{align}
        \vP_\calF \vP_\calG \vg_1(y_0) &= \beta \vP_\calF \vP_\calG \vg_3(y_2)
        \\
        (\vP_\calF \vP_\calG)^+ \vP_\calF \vP_\calG \vg_1(y_0) &= \beta (\vP_\calF \vP_\calG)^+ \vP_\calF \vP_\calG \vg_3(y_2) \\
        \vP_\calN \vg_1(y_0) &= \beta \vP_\calN \vg_3(y_2) \, .
    \end{align}
    Regrouping the two terms on one side we can see that to have parallelism in $\calN$ (\cref{def:s-parallelism}), we must have the following:
    \[
        \vP_\calN (\vg_1(y_0) - \beta \cdot \vg_3(y_2)) = \mathbf{0}
    \]
    \ie $\vg_1(y_0) - \beta \cdot \vg_3(y_2) \in \calN^\perp$. Therefore, $\vg_1(y_0)$ and $\vg_3(y_2)$ are parallel in $\calN$. 

    The implication ($\implies$) is given by a similar proof by starting from \cref{def:s-parallelism}, \ie that $\vP_\calN \vg_1(y_0) = \beta \vP_\calN \vg_3(y_2)$. 
    Therefore by multiplying the two for any embedding $\vf(\vs)$ we get
    \begin{align}
     \vg_1(y_0)^\top \vP_\calN \vf(\vs) &= \beta  \vg_3(y_2)^\top \vP_\calN \vf(\vs) \\
     \log \frac{p_{\vf, \vg}(y_0 \mid \vs)}{p_{\vf, \vg}(y_1 \mid \vs)} &= \beta \cdot \log \frac{p_{\vf, \vg}(y_2 \mid \vs)}{p_{\vf, \vg}(y_3 \mid \vs)} \, .
    \end{align}
    This shows the claim. 
\end{proof}

\begin{remark}
\label{remark:parallelism}
    We exclude the case $\beta = 0$ because, for $\va = 0$, a contradiction arises. Specifically, any vector $\vb \in \bbR^d$ would be trivially parallel to $\va$ (i.e., $\va = \beta \vb$ with $\beta = 0$). However, conversely, we would obtain that no scalar $\beta \in \bbR$ exists such that $\vb = \beta \va$.
\end{remark}

    \subsection{Proof of \cref{prop:connection-lin-sub}}

        \connectionls*
        \begin{proof}
    We start from the relational linearity as per \cref{def:linearity} for $\vq$ on $\Gamma \subset \bbR^d$:
    \[
        \vP_\calG \vf(\vs \cat \vq) = \vP_\calG \vA_\vq \vf (\vs) + \vP_\calG \va_\vq \, ,
    \]
    and recall that $\Gamma_\vq := \mathrm{Im}(\vA_\vq^\top \vP_\Gamma) =\{ \vA^\top_\vq \vv \mid \vv \in \Gamma \} $. 
    By assumption (2) $\Gamma_\vq \subseteq \SIM{\vg_0}$. Then, for any pair $y_i , y_j \in \calA$  such that
    $\vg_{i}(y) := \vg(y_j) - \vg(y_i) \in \Gamma_\vq$, 
    we can find a vector $\vgamma \in \Gamma$ such that
    \[
        \vA_\vq^\top \vgamma = \vg_{i}(y_j) \, .
    \]
    Notice that from this expression we can also write:
    \[
         \vA_\vq^\top \vP_\Gamma \vgamma = \vg_{i}(y_j) \, ,
    \]
    since $\vP_\Gamma \vgamma = \vgamma$. Hence,
    the vector $\vgamma$ can be obtained taking the pseudo-inverse of $\vA_\vq^\top \vP_\Gamma$:
    \begin{align}
        (\vA_\vq^\top \vP_\Gamma)^+ \vA_\vq^\top \vP_\Gamma \vgamma &= (\vA_\vq^\top \vP_\Gamma)^+  \vg_{i}(y_j) \\
        \vP_{\mathrm{Im} (\vP_\Gamma \vA_\vq )} \vgamma &= (\vA_\vq^\top \vP_\Gamma)^+  \vg_{i}(y_j)
    \end{align}
    Notice that, since $(\vA_\vq^\top \vP_\Gamma)^+ \vP_{\Gamma_\vq} = (\vA_\vq^\top \vP_\Gamma)^+$ and 
    $\vP_{\Gamma_\vq} \vg_i(y_j) =\vg_i(y_j)$, 
    we have that $(\vA_\vq^\top \vP_\Gamma)^+ \vg_i (y_j) = \mathbf{0}$ only when $\vg_i (y_j) = \mathbf{0}$. 
    As a consequence, when $\vg_i(y_j) \neq \mathbf{0}$, we have that also $\vP_{\mathrm{Im}(\vP_\Gamma \vA_\vq)} \vgamma \neq \mathbf{0}$.
    Fix this $\vgamma$ and consider:
    \begin{align}
        \vgamma^\top \vA_\vq \vf (\vs)
         &=  \vgamma^\top \big(  \vf(\vs \cat \vq) - \va_\vq \big) \\
         (\vA_\vq^\top \vgamma)^\top \vf (\vs)
         &=  \vgamma^\top \big(  \vf(\vs \cat \vq) - \va_\vq \big) \\
         \vg_{i}(y_j)^\top \vf (\vs)
         &=  \vgamma^\top \big(  \vf(\vs \cat \vq) - \va_\vq \big)  \, ,
    \end{align} 
    which shows that $(\vf, \vg)$ linearly represents $\Gamma_\vq$ related to $\vq$, showing the claim.
\end{proof}

    \subsection{Proof of \cref{prop:connection-probing}}

        \connectionlp*
        \begin{proof}
    Under the assumption (2), take a pivot $y_j \in \calY_P$, then for all remaining $y_i \in \calY_P$ denote with ${\vg_{j}(y_i) := \vg(y_i) -\vg(y_j) \in \Gamma}$. Taking the log-ratios between the conditional probabilities  
    \[
        p_{\vf, \vg} (y_i \mid \vs \cat \vq; \calY_P), \; \text{ and } p_{\vf, \vg} (y_j \mid \vs \cat \vq; \calY_P) \,  ,
    \]
    for conditional probabilities restricted to $\calY_P$, as in \cref{def:linear-probing},
    we obtain:
    \begin{align}
        \log \frac{p_{\vf, \vg} (y_i \mid \vs \cat \vq; \calY_P) }{p_{\vf, \vg} (y_j \mid \vs \cat \vq; \calY_P) } &= 
        \log \frac{ \exp \big(  \vg(y_i)^\top \vf( \vs \cat \vq) \big) }{ \exp \big(  \vg(y_j)^\top \vf( \vs \cat \vq) \big) } \label{eq:log-ratio}\\
        &= \log \exp \big(  (\vg(y_i) - \vg(y_j))^\top \vf( \vs \cat \vq) \big)  \\
        &= \big(  (\vg(y_i) - \vg(y_j))^\top \vf( \vs \cat \vq) \big)  \\
        &= \vg_{j}(y_i)^\top \vf( \vs \cat \vq ) \, . \label{eq:partial-result-logratio}
    \end{align}
     Due to relational linearity of $\vq$ onto $\Gamma$ (Assumption (1)), we can write following (see \cref{def:linearity}):
    \[
        \label{eq:consequence-def-linearity}
        \vP_\Gamma \vf(\vs \cat \vq) = \vP_\Gamma \vA_\vq \vf (\vs) + \vP_\Gamma \va_\vq \, ,
    \]
    {we can then take any $\vg_j(y_i) \in \Gamma$, and multiply their transpose times both sides of Equation~\eqref{eq:consequence-def-linearity} from the right. We then get}
    \begin{align}
        \vg_j(y_i)^\top \vP_\Gamma \vf(\vs \cat \vq) &= \vg_j(y_i)^\top \vP_\Gamma \vA_\vq \vf (\vs) + \vg_j(y_i)^\top \vP_\Gamma \va_\vq \\
        (\vP_\Gamma \vg_j(y_i))^\top  \vf(\vs \cat \vq) &= (\vP_\Gamma \vg_j(y_i))^\top  \vA_\vq \vf (\vs) + (\vP_\Gamma \vg_j(y_i))^\top \va_\vq  \\
        \vg_j(y_i)^\top \vf(\vs \cat \vq) &= \vg_j(y_i)^\top  \vA_\vq \vf (\vs) + \vg_j(y_i)^\top \va_\vq \, .
        \label{eq:relational-linearity-probing}
    \end{align}
    We can then substitute the expression on the RHS in Equation~\eqref{eq:relational-linearity-probing} to Equation~\eqref{eq:partial-result-logratio} to obtain:
    \[
        \log \frac{p_{\vf, \vg} (y_i \mid \vs \cat \vq; \calY_P) }{p_{\vf, \vg} (y_j \mid \vs \cat \vq; \calY_P) } = \vg_j(y_i)^\top  \vA_\vq \vf (\vs) + \vg_j(y_i)^\top \va_\vq \, .
    \]
    Now take the conditional probability $p_{\vf, \vg}(y_i \mid \vs \cat \vq; \calY_P)$ which can be written as
    \begin{align}
        p_{\vf, \vg}(y_i \mid \vs \cat \vq; \calY_P) &= \frac{e^{\vg(y_i)^\top \vf(\vs \cat \vq)}}{Z(\vs \cat \vq; \calY_P)} \\
        &= \frac{e^{\vg(y_i)^\top \vf(\vs \cat \vq)}}{Z(\vs \cat \vq; \calY_P)} \frac{e^{\vg(y_j)^\top\vf(\vs \cat \vq)}}{e^{\vg(y_j)^\top\vf(\vs \cat \vq)}}
        \tag{Multiply and divide by the same term}
        \\
        &= \frac{e^{(\vg(y_i)- \vg(y_j))^\top \vf(\vs \cat \vq)}}{Z(\vs \cat \vq; \calY_P)} e^{\vg(y_j)^\top\vf(\vs \cat \vq)} 
        \tag{Rearrange terms in the exponential}
        \\
        &= \frac{e^{\vg_j(y_i)^\top \vf(\vs \cat \vq)}}{Z(\vs \cat \vq; \calY_P)} e^{\vg(y_j)^\top\vf(\vs \cat \vq)} 
        \tag{Substitute $\vg_j(y_i)$}
        \\
        &= \exp \big( {\vg_j(y_i)^\top \vA_\vq \vf(\vs) +  \vg_j(y_i)^\top \va_\vq } \big) \,
        \frac{e^{(\vg(y_j)^\top\vf(\vs \cat \vq)}}{Z(\vs \cat \vq; \calY_P)}  
        \tag{Use Equation~\eqref{eq:relational-linearity-probing}}
        \\
        &= \exp \big( {\vg(y_i)^\top (\vA_\vq \vf(\vs) + \va_\vq) } \big) \,
        \exp \big(- {\vg(y_j)^\top (\vA_\vq \vf(\vs) + \va_\vq) } \big)
        \,
        \frac{e^{(\vg(y_j)\top\vf(\vs \cat \vq)}}{Z(\vs \cat \vq; \calY_P)} 
        \tag{Separate the term depending on $y_i$ to those that do not}
        \\
        &= \exp \big( {\vg(y_i)^\top (\vA_\vq \vf(\vs) + \va_\vq) \big) } 
        \,
        \frac{e^{ \vg(y_j)^\top \big(  \vf(\vs \cat \vq)- \vA_\vq \vf(\vs) + \va_\vq \big)}}{Z(\vs \cat \vq; \calY_P)} 
        \tag{Rearrange exponential on the right}
        \\
        &= \exp \big( {\vg(y_i)^\top (\vA_\vq \vf(\vs) + \va_\vq) \big) } C \, ,
        \label{eq:lin-prob-step1}
    \end{align}
    where we denoted with $C$ the scaling factor applied to the exponential, which we will treat as a constant since it does not depend on $y_i \in \calA$.
    From this expression, take the sum on $\calY_P$ to obtain that:
    \begin{align}
        \sum_{y_i \in \calY_P} \exp \big( {\vg(y_i)^\top (\vA_\vq \vf(\vs) + \va_\vq) \big) } \, C &= \sum_{y_i \in \calY_P} p_{\vf, \vg}(y_i \mid \vs \cat \vq; \calY_P) \\
        \sum_{y_i \in \calY_P} \exp \big( {\vg(y_i)^\top (\vA_\vq \vf(\vs) + \va_\vq) \big) } \, C &= 1
        \tag{The sum on the right equals to $1$}
        \\
        C &= 1 \big/ \sum_{y_i \in \calY_P} \exp \big( {\vg(y_i)^\top (\vA_\vq \vf(\vs) + \va_\vq) \big) 
        \label{eq:lin-probing-step-2}
        }
    \end{align}
    Denote with 
    $\vw{_i} := \vA_\vq^\top \vg(y_i)$ and with $b_i := \vg(y_i)^\top \va_\vq$. Then using this and Equation~\eqref{eq:lin-probing-step-2} inside Equation~\eqref{eq:lin-prob-step1} we get
    \begin{align}
        p_{\vf, \vg}(y_i \mid \vs \cat \vq; \calY_P) 
        &= \exp \big( {\vg(y_i)^\top (\vA_\vq \vf(\vs) + \va_\vq) \big) } C \\
        &= \frac{\exp \big( \vw_i^\top \vf(\vs) + b_i\big) }{\sum_{y_i \in \calY_P} \exp \big( \vw_i \vf(\vs) + b_i \big) } 
        \tag{Substitute for $\vw_i$ and $b_i$, and Equation~\eqref{eq:lin-probing-step-2}}
        \\
        &= \mathrm{softmax} \big(
        \vW \vf(\vs) + \vb \big)_i \, ,
    \end{align}
    where we defined $\vW := (\vw_1, \ldots, \vw_\ell)^\top$ and $\vb := (b_1, \ldots, b_\ell)^\top$.
    This shows that the model $(\vf, \vg)$ can be linear probed for $\vq$ in $\calY_P$ with  $\vW$ and $\vb$.
\end{proof}

\newpage

\section{Proof of \cref{sec:implications}} \label{sec:proof-sec5}

    \subsection{Proof of \cref{prop:part-id-lin-rep-tentative}}
    
        \pidlinrep*
        \textbf{Proof Sketch}. The proof is divided in two steps:
\begin{enumerate}
    \item We first prove the implication that $(\tilde{\vf}, \tilde{\vg})$ linearly represents $\vq$ on a subset $\tilde \Gamma \subseteq \bbR^d$; 

    \item Then we show that $\tilde \Gamma \subseteq \calN$ and that $\tilde \Gamma_\vq := \mathrm{Im}(\tilde{\vA}^\top_\vq \vP_\Gamma) \subseteq  \tilde \calN$.
\end{enumerate}

\begin{proof}
    \textbf{Step 1}.
    We begin from the relational linearity definition for model $(\vf, \vg)$. It holds
    \[
        \vP_\Gamma \vf(\vs \cat \vq ) = \vP_\Gamma \vA_\vq \vf(\vs) + \vP_\Gamma \va_\vq \, ,
    \]
    where $\Gamma \subseteq \calN$. This also means that $\vP_\Gamma \vP_\calN = \vP_\Gamma$.
    Denote with $\calF:= \SIM{\vf}$ and with $\calG:= \SIM{\vg_0}$.
    By assumption, it holds that $
        \Gamma_\vq := \mathrm{Im} (\vA^\top_\vq \vP_\Gamma) = \{\vA^\top_\vq \vv \mid \vv \in \Gamma \}
    $ is a subset of $\calM$. 
    This implies, in turn, that $\vP_\calM \vA^\top_\vq \vP_\Gamma=\vA^\top_\vq \vP_\Gamma$. We use this to write:
    \begin{align}
        \vP_\Gamma \vP_\calN \vf(\vs \cat \vq ) &= \vP_\Gamma \vA_\vq \vf(\vs) + \vP_\Gamma \va_\vq 
        \tag{Using $\vP_\Gamma =\vP_\Gamma \vP_\calN$}
        \\
        \vP_\Gamma \vP_\calN \vP_\calF \vf(\vs \cat \vq ) &= \vP_\Gamma \vA_\vq \vP_\calM \vf(\vs) + \vP_\Gamma \va_\vq 
        \tag{Using $\vf = \vP_\calF \vf$ on the left and $\vA_\vq = \vA_\vq \vP_\calM$ on the right}
        \\
        \vP_\Gamma \vP_\calN \vP_\calM \vf(\vs \cat \vq ) &= \vP_\Gamma \vA_\vq \vP_\calM \vf(\vs) + \vP_\Gamma \va_\vq \, ,
        \label{eq:prop14-step1}
    \end{align}
    where in the last line we used that  $\vP_\calN \vP_\calF = \vP_\calN \vP_\calG \vP_\calF =  \vP_\calN \vP_\calN \vP_\calM =\vP_\calN \vP_\calM$ by \cref{lemma:projectors} (vi).
    We now substitute the expression for $(\tilde \vf, \tilde \vg)$ based on the RHS of the equivalence relation Equation~\eqref{eq:linear-transf-f} in  Equation~\eqref{eq:prop14-step1} to get
    \[
        \vP_\Gamma \vP_\calN \vM  \vP_{\tilde \calM} \tilde{\vf}(\vs \cat \vq ) = \vP_\Gamma \vA_\vq \vP_\calN \vM \vP_{\tilde \calM} \tilde{\vf}(\vs) + \vP_\Gamma \va_\vq  \, .
        \label{eq:whatwework}
    \]
    Now starting from $\vN^\top \vM = \vP_{\tilde \calN} \vP_{\tilde \calM}$, as specified in %
    \cref{def:klinear-equiv}, we can apply the following steps:
    \begin{align}
        \vN^\top \vM &= \vP_{\tilde \calN} \vP_{\tilde \calM} \\
        (\vN^\top)^+ \vN^\top \vM &=  (\vN^\top)^+ \vP_{\tilde \calN} \vP_{\tilde \calM} 
        \tag{Multiply on the left by the pseudo-inverse $(\vN^\top)^+$}
        \\
        \vP_\calN \vM &= (\vN^\top)^+ \vP_{\tilde \calN} \vP_{\tilde \calM} 
        \tag{Using $(\vN^\top)^+ \vN^\top = \vP_\calN$}
        \\
        \vP_\calN \vM &= (\vN^\top)^+ \vP_{\tilde \calN} \vP_{\tilde \calN} \vP_{\tilde{\calM}}
        \tag{Idempotency of the orthogonal projector $\vP_{\tilde \calN}$}
        \\
        \vP_\calN \vM &= (\vN^\top)^+ \vP_{\tilde \calN} \vP_{\tilde \calG} \vP_{\tilde \calF} 
        \tag{Substitute $\vP_{\tilde \calG} \vP_{\tilde \calF} = \vP_{\tilde \calN} \vP_{\tilde \calM}$ from \cref{lemma:projectors} (vi)}
        \\
        \vP_\calN \vM &= (\vN^\top)^+\vP_{\tilde \calN} \vP_{\tilde \calF} 
        \tag{Using $\vP_{\tilde \calN} \vP_{\tilde \calG} = \vP_{\tilde \calN}$}
        \\
        \vP_\calN \vM &= (\vN^\top)^+ \vP_{\tilde \calF}  
        \tag{Using $(\vN^\top)^+ \vP_{\tilde \calN} = (\vN^\top)^+$}
        \\
        \vP_\Gamma \vP_\calN \vM \vP_{\tilde{\calM}} &= \vP_\Gamma (\vN^\top)^+ \vP_{\tilde \calF}  \, .
        \label{eq:prop14-step-3}
    \end{align}
    where in the last line we multiplied on the left by the orthogonal projector $\vP_\Gamma$ and used that $\vM = \vM \vP_{\tilde{\calM}}$.
    We can now show that we can substitute  Equation~\eqref{eq:prop14-step-3} into the left-hand side of  Equation~\eqref{eq:whatwework}: %
    \begin{align}
        \vP_\Gamma (\vN^\top)^+ \vP_{\tilde \calF}  \tilde{\vf}(\vs \cat \vq ) &=\vP_\Gamma \vA_\vq \vM \vP_{\tilde{\calM}} \tilde{\vf}(\vs) + \vP_\Gamma \va_\vq \\
        \vP_\Gamma (\vN^\top)^+ \tilde{\vf}(\vs \cat \vq ) &=\vP_\Gamma \vA_\vq \vM \vP_{\tilde{\calM}} \tilde{\vf}(\vs) + \vP_\Gamma \va_\vq 
        \tag{Use $\vP_{\tilde{\calF}} \tilde \vf = \tilde \vf$}
        \\
        (\vP_\Gamma (\vN^\top)^+)^+ \vP_\Gamma (\vN^\top)^+ \tilde{\vf}(\vs \cat \vq ) &= (\vP_\Gamma (\vN^\top)^+)^+ \vP_\Gamma \vA_\vq \vM \tilde{\vf}(\vs) +  (\vP_\Gamma (\vN^\top)^+)^+ \vP_\Gamma \va_\vq
        \tag{Multiply on the left by $(\vP_\Gamma (\vN^\top)^+)^+$}
        \\
        \vP_{\tilde{\Gamma}} \tilde{\vf}(\vs \cat \vq ) &= \tilde \vA_\vq \tilde{\vf}(\vs) + \tilde \va_\vq \, ,
    \end{align}
    where we denoted with $\vP_{\tilde \Gamma}$ the orthogonal projector on $\tilde \Gamma$ and we defined:
    \begin{align}
        \vP_{\tilde{\Gamma}} &:= (\vP_\Gamma (\vN^\top)^+)^+ \vP_\Gamma (\vN^\top)^+  
        \label{eq:projector-tildegamma}
        \\
        \tilde{\vA}_\vq &:= (\vP_\Gamma (\vN^\top)^+)^+ \vP_\Gamma \vA_\vq \vM \\
        \tilde \va_\vq &:=   (\vP_\Gamma (\vN^\top)^+)^+ \vP_\Gamma \va_\vq \, .
    \end{align}
        
    Multiplying by $\vP_{\tilde{\Gamma}}$ we arrive at the same expression for linearity for the model $(\tilde{\vf}, \tilde{\vg})$:
    \[
        \vP_{\tilde{\Gamma}} \tilde{\vf}(\vs \cat \vq ) = \vP_{\tilde{\Gamma}} \tilde \vA_\vq \tilde{\vf} + \vP_{\tilde{\Gamma}} \tilde \va_\vq \, .
    \]

    \vspace{1em}
    
    \textbf{Step 2}. We proceed to show that $\tilde{\Gamma} \subseteq \tilde{\calN}$.
    First, we have to show that $\vP_{\tilde \Gamma} \vP_{\tilde \calN} = \vP_{\tilde \Gamma}$. To this end, notice that $(\vN^\top)^+: \tilde{\calN} \to \calN$. Therefore, we have from Equation~\eqref{eq:projector-tildegamma}:
    \begin{align}
        \vP_{\tilde \Gamma} &= (\vP_\Gamma (\vN^\top)^+)^+ \vP_\Gamma (\vN^\top)^+ \\
        &= (\vP_\Gamma (\vN^\top)^+)^+ \vP_\Gamma (\vN^\top)^+ \vP_{\tilde \calN}
        \tag{Using $(\vN^\top)^+ = (\vN^\top)^+ \vP_{\tilde{\calN}}$}
        \\
        &=\vP_{\tilde \Gamma} \vP_{\tilde \calN} \, .
    \end{align}
    Taking the transpose of $\vP_{\tilde \Gamma}$ we obtain:
    \begin{align}
        \vP_{\tilde \Gamma}^\top &= (\vP_{\tilde \Gamma} \vP_{\tilde \calN})^\top \\
        \vP_{\tilde \Gamma} &=   \vP_{\tilde \calN}^\top \vP_{\tilde \Gamma}^\top \\
        \vP_{\tilde \Gamma} &=   \vP_{\tilde \calN} \vP_{\tilde \Gamma} 
    \end{align}
    where we used that $\vP_{\tilde{\calN}}^\top  = \vP_{\tilde{\calN}}$ and $\vP_{\tilde{\Gamma}}^\top  = \vP_{\tilde{\Gamma}}$ because both are symmetric matrices.
    This means, in turn, that
    $\vP_{\tilde \calN}$ and $\vP_{\tilde \Gamma}$ commute, and so 
    $\tilde \Gamma $ must be a contained in $ \tilde{\calN}$. 
    Similarly from the expression of $\tilde{\vA}_\vq$ we get:
    \begin{align}
         \tilde{\vA}_\vq &= (\vP_\Gamma (\vN^\top)^+)^+ \vP_\Gamma \vA_\vq \vM \\
         &= (\vP_\Gamma (\vN^\top)^+)^+ \vP_\Gamma \vA_\vq \vM \vP_{\tilde \calM} \\
         &= \tilde{\vA}_\vq \vP_{\tilde \calM} \, .
    \end{align}
    This means that 
    $
        \tilde \vA_\vq^\top  \vP_{\tilde \Gamma} =
        \vP_{\tilde \calM} \tilde \vA_\vq^\top  \vP_{\tilde \Gamma} 
    $,
    and so
    $\Gamma_\vq := \mathrm{Im}(\tilde{\vA}_\vq^\top \vP_{\tilde \Gamma}) \subseteq \tilde{\calM}$.
    To find the expression for $\tilde \Gamma$, we consider the following:
    \begin{align}
        \tilde{\Gamma} 
        &= \mathrm{ker}(\vP_\Gamma (\vN^\top)^+)^\perp \\
        &= \mathrm{Im} (\vN^+ \vP_\Gamma^\top )
        \tag{$\mathrm{ker}(\vA)^\perp = \mathrm{Im}(\vA^\top)$, for any matrix $\vA$ \citep{axler2015linear}}
        \\
        &= \mathrm{Im} (\vN^+ \vP_\Gamma ) \, .
    \end{align}

    This proves the claim. 
\end{proof}

\textbf{What happens if $\Gamma_\vq \not \subseteq \calM$?}. We discuss the case when the condition in \cref{prop:part-id-lin-rep-tentative} is not met due to $\Gamma_\vq := \mathrm{Im}(\vA_\vq^\top \vP_\Gamma ) \not \subseteq \calM$. 
We show that even if a model $(\vf, \vg)$ linearly represents $\vq$ on $\Gamma$, a $\sim_{EL}$-equivalent model $(\tilde{\vf}, \tilde \vg)$ may not. This is due to the fact that the information contained in $\calF \setminus \calM$, used to relationally represent $\vq$, may not be linearly transformed on another $\sim_{EL}$-equivalent model.  

In fact, when $\Gamma_\vq \not \subseteq \calM$, consider the expression for relational linearity given by \cref{def:linearity} where
\[
    \label{eq:reminder-lin}
    \vP_\Gamma \vf(\vs \cat \vq) = \vP_\Gamma \vA_\vq \vf(\vs) + \vP_\Gamma \va_\vq  \, .
\]
{We take the first term on the RHS of Equation~\eqref{eq:reminder-lin}: by inserting the projector $\vP_{\Gamma_\vq}$, we rewrite it as follows:}
\[  \label{eq:identity-minus-m}
    \vP_\Gamma \vA_\vq \vP_{\Gamma_\vq} \vf(\vs) = \vP_\Gamma \vA_\vq \vP_{\Gamma_\vq} \vP_\calM \vf(\vs) + \vP_\Gamma \vA_\vq \vP_{\Gamma_\vq} (\vI - \vP_\calM) \vf(\vs) \, ,
\]
where we used the identity $\vI = \vP_\calM + (\vI - \vP_\calM)$ to separate the contributions, inside and outside $\calM$. We can thus rewrite  Equation~\eqref{eq:reminder-lin} as:
\begin{align}
     \vP_\Gamma \vf(\vs \cat \vq) &= \vP_\Gamma \vA_\vq \vP_{\Gamma_\vq} 
     \vP_{\calM} \vf(\vs) + \vP_\Gamma \vA_\vq \vP_{\Gamma_\vq} (\vI - \vP_\calM) \vP_\calF \vf(\vs) + \vP_\Gamma \va_\vq
     \tag{Introduce $\vP_\calF \vf = \vf$}
     \\
     &\propto \vP_\Gamma \vA_\vq \vP_{\Gamma_\vq} (\vI - \vP_\calM) \vP_\calF \vf(\vs)
     \tag{$\vP_{\Gamma_\vq} 
     \vP_{\calM} = \mathbf{0}$}
     \\
     &= \vP_\Gamma \vA_\vq \vP_{\Gamma_\vq} (\vP_\calF - \vP_\calM \vP_\calF) \vf(\vs)
     \tag{Multiplying $(\vI - \vP_\calM)\vP_\calF = \vP_\calF - \vP_\calM\vP_\calF$} 
     \\
     &= \vP_\Gamma \vA_\vq \vP_{\Gamma_\vq} (\vP_\calF - \vP_\calM) \vf(\vs) \label{eq:proportionality-non-linear}
\end{align}
where the in the last equation we used that $\vP_\calM \vP_\calF = \vP_\calM$ by \cref{lemma:projectors} (v).
To show that this can lead to non-linearities {implying deviations from relational linearity}, suppose that: 
\begin{align}
    \vf(\vs) &= \vP_\calM \vf(\vs) + (\vP_\calF - \vP_\calM) \vf (\vs)  \\
    &= { \vM \vP_{\tilde{\calM}} \tilde \vf(\vs)}   + (\vP_\calF - \vP_\calM) \tilde{\vf}^2 (\vx) 
\end{align}
where the first term follows from the equivalence relation  Equation~\eqref{eq:linear-transf-f}, \ie $\vP_\calM \vf(\vs) = \vM \vP_{\tilde \calM} \tilde \vf(\vs)$, and we used $\tilde \vf^2(\vx) = (\tilde{f}_1(\vx)^2, \ldots, \tilde{f}_{\tilde d}(\vx)^2)$ to denote the square of the components of $\tilde{\vf}$. Notice that, this choice is allowed since the components of $\vf$ outside $\calM$, \ie those in $\calF \setminus \calM$, can be arbitrarily chosen and do not contribute to the dot-product with $\vg_0$. Therefore, substituting this expression to  Equation~\eqref{eq:proportionality-non-linear} we get:
\[
     \vP_\Gamma \vf(\vs \cat \vq) \propto%
     \vP_\Gamma \vA_\vq \vP_{\Gamma_\vq} (\vP_\calF - \vP_\calM) \tilde{\vf}^2 (\vs)
\]
and substituting $\vP_\Gamma \vf(\vs \cat \vq) = \vP_\Gamma \vN^+ \tilde \vf(\vs \cat \vq)$, implied by the LHS of  Equation~\eqref{eq:prop14-step1} and from the equality in  Equation~\eqref{eq:prop14-step-3}, 
we have that:
\[
    \vP_\Gamma \vN^+ \tilde \vf(\vs \cat \vq) \propto %
    \vP_\Gamma \vA_\vq \vP_{\Gamma_\vq} (\vP_\calF - \vP_\calM) \tilde{\vf}^2 (\vs) \, ,
\]
which shows a non-linear dependence of $\tilde{\vf}(\vs \cat \vq)$ on $\tilde{\vf}(\vs)$, invalidating relational linearity when $\vP_{\Gamma_\vq} (\vP_\calF - \vP_\calM) \neq \mathbf{0} $.

    \subsection{Proof of \cref{prop:no-paral}}
        \parallelismnotpreserved*
        
\begin{proof}
    Given that $\vgamma$ and $\vgamma'$ are parallel in $\calN$ (\cref{def:s-parallelism}), we have that:
    \begin{align}
        \vP_\calN \vgamma = \beta \vP_\calN \vgamma' \, , \label{eq:parallel-in-n}
    \end{align}
    where $\beta \neq 0$ is given by
    $
        \beta = {\norm{\vP_\calN \vgamma}}/{\norm{\vP_\calN \vgamma'}} 
    $ (see also \cref{remark:parallelism}. %
    We consider the components of the $\sim_{EL}$-equivalent model, given by
    \[
        \vP_\calN \vgamma =  \vN \vP_{\tilde \calN} \tilde \vgamma, \quad \vP_\calN \vgamma' =  \vN \vP_{\tilde \calN} \tilde \vgamma' \, .
    \]
    Using this in Equation~\eqref{eq:parallel-in-n} we get:
    \[
        \vN \vP_{\tilde \calN} \tilde \vgamma  = \beta \vN \vP_{\tilde \calN} \tilde \vgamma '
    \]
    and multiplying from the left by the pseudoinverse of $\vN$ we get:
    \begin{align}
        \vN^+ \vN \vP_{\tilde \calN} \tilde \vgamma &= \beta \vN^+ \vN \vP_{\tilde \calN} \tilde \vgamma ' \\
        \vP_{\tilde \calN} \vP_{\tilde \calN} \tilde \vgamma &= \beta  \vP_{\tilde \calN} \vP_{\tilde \calN} \tilde \vgamma ' 
        \tag{Using $\vN^+ \vN = \vP_{\tilde \calN}$}
        \\
        \vP_{\tilde \calN} \tilde \vgamma &= \beta  \vP_{\tilde \calN} \tilde \vgamma '
    \end{align}
    which shows that $\tilde \vgamma$ is parallel to $\tilde \vgamma'$ in $\tilde \calN$. %
    To prove the reverse implication, the same steps can be repeated by symmetry, taking:
    \[
        \vP_{\tilde \calN} \tilde \vgamma =  \vN^+ \vP_{\calN} \vgamma, \quad \vP_{\tilde \calN} \tilde \vgamma' =  \vN^+ \vP_{\calN} \vgamma' \, .
    \]
    for two vectors $\tilde \vgamma$ and $ \tilde \vgamma'$ parallel in $\tilde \calN$, giving the desired result.
\end{proof}

\newpage

\section{Context-query-reply sentences: Corner cases}
\label{sec:properties}

    \subsection{Paraphrases}

    We consider a sentence $\vq_2$ to be the paraphrase of $\vq_1$ when $\vq_2$ repeats what was written in $\vq_1$ using different words (for this definition, we only slightly adapted the one from the \href{https://dictionary.cambridge.org/dictionary/english/paraphrase}{Cambridge Dictionary}).
    To provide an example, let $\vq_1=\textit{``Is the text written in English?''}$. A paraphrase of $\vq_1$ can be $\vq_2=\textit{``Was the previous text written in English or not?''}$. These two equivalent formulations of the same question can nonetheless be treated differently by a next-token predictor: for example, given a string $\vs$, it can be that $\vf(\vs \cat \vq_1) \neq \vf(\vs \cat \vq_2)$. 
    
    Here, we analyze how paraphrastic aspects of textual data can be described with relational context-query-reply ($\vs \cat \vq \cat y$) strings for a model $(\vf, \vg) \in \Theta$. 
    We start by providing a tentative definition 
    of paraphrastic sentences 
    in terms of their entailed conditional log-probabilities for different pairs of next-tokens.

\begin{definition}[Paraphrases]
\label{def:paraphrase-test}
    We say that $\vq_2 \in \SeqA$ on $\calY_2 \subseteq \calA$ is a paraphrase of $\vq_1 \in \SeqA$ on $\calY_1 \subseteq \calA$  for the model $(\vf, \vg) \in \Theta$ if (1) there exists $\beta \neq 0$ such that, for all $y_0, y_1 \in \calY_1 \subseteq \calA$, and (2) there exist $\hat y_0, \hat y_1 \in \calY_2 \subseteq \calA$, for which it holds:
    \[
        \label{eq:paraphrasis-log-prob}
        \log \frac{p_{\vf, \vg} (y_0 \mid \vs \cat \vq_1) }{p_{\vf, \vg} (y_1 \mid \vs \cat \vq_1)} 
        = \beta \cdot
        \log \frac{p_{\vf, \vg} (\hat y_0 \mid \vs \cat \vq_2) }{p_{\vf, \vg} (\hat y_1 \mid \vs \cat \vq_2)} \, .
    \]
\end{definition}
For example,
consider the strings $\vq_1 = \textit{``Is the text written in English?''}$, with expected replies $y_0=\textit{``yes''}$ and $y_1=\textit{``no''}$, \ie $y_0, y_1 \in \calY_1$; 
and a second string $\vq_2=$\textit{``Reply with only A or B. Was the text written in (A) English or (B) not ?''}, with expected replies $\hat y_0=\textit{``A''}$ and  $ \hat y_1=\textit{``B''}$. 
\cref{def:paraphrase-test} entails that, for all input-strings $\vs$, the concatenation to the query $\vq_1$ gives a ratio of the log-probabilities of $y_0$ and $y_1$ that matches, up to a constant $\beta$, that of $\hat y_0$ and $\hat y_1$  for the concatenation to $\vq_2$.
A model that successfully recognizes between English and non-English text and considers $\vq_2$ a paraphrase of $\vq_1$, then will attribute similar conditional probabilities to both $p_{\vf,\vg}(y_0 \mid \vs \cat \vq)$ and $p_{\vf,\vg}(\hat y_0 \mid \vs \cat \vq)$, for any input-context $\vs \in \SeqA$.

We show that sentences and next-tokens as in \cref{def:paraphrase-test} induce a specific structure in the model embeddings, linearly relating the representations $\vf(\vs \cat \vq_1)$ and $\vf(\vs \cat \vq_2)$.
To this end,
we will define $\SIM{\vg_0}_\calY := \mathrm{span}\{\vg(y) - \vg(y_0) \mid \; y_0, y \in \calY \} $ and $\SIM{\vf}_\vq := \mathrm{span}\{
\vf(\vs \cat \vq) \mid \vs \in \SeqA \}$.

\begin{proposition}
    If (1) $\vq_2 \in \SeqA$ on $\calY_2 \subseteq \calA$ is a paraphrase of $\vq_1 \in \SeqA$ on $\calY_1 \subseteq \calA$  for the model $(\vf, \vg)$
    and (2) for the subspaces $\Gamma_1 := \SIM{\vg_0}_{\calY_1}$ and $\Gamma_2 := \SIM{\vg_0}_{\calY_2}$,
    it holds that
    $\Gamma_1 \subseteq \SIM{\vf}_{\vq_1} =: \calF_1$ and $\Gamma_2 \subseteq \SIM{\vf}_{\vq_2} =: \calF_2$, 
    then 
    $\mathrm{dim} (\Gamma_1) = \mathrm{dim} (\Gamma_2)$ and
    there exists a matrix $\vO \in \bbR^{d \times d}$ that defines a linear, invertible transformation from $\Gamma_2$ to $\Gamma_1$ such that
    \[  
        \label{eq:representation-tautology}
        \vP_{\Gamma_1} \vf(\vs \cat \vq_1) = \beta \vO \vP_{\Gamma_2} \vf(\vs \cat \vq_2) \, .
    \] %

\end{proposition}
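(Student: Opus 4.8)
The plan is to mirror the pseudo-inverse and rank machinery used in the proof of \cref{thm:partial-identifiability}. First I would rewrite the log-probability ratios appearing in \cref{def:paraphrase-test} as dot products: since $\log \frac{p_{\vf,\vg}(y_0\mid\vs)}{p_{\vf,\vg}(y_1\mid\vs)} = \vf(\vs)^\top(\vg(y_0)-\vg(y_1))$, the paraphrase condition states that there is a single $\beta\neq 0$ such that for every pair $(y_0,y_1)\in\calY_1^2$ there is a witness pair $(\hat y_0,\hat y_1)\in\calY_2^2$ with $\vf(\vs\cat\vq_1)^\top(\vg(y_0)-\vg(y_1)) = \beta\,\vf(\vs\cat\vq_2)^\top(\vg(\hat y_0)-\vg(\hat y_1))$ for all $\vs\in\SeqA$. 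Collecting all pairs $(y_0,y_1)\in\calY_1^2$ as the columns of a matrix $\vG_1$ and the corresponding witness differences as the columns of a matrix $\vG_2$ (both with the same, finitely many, columns because $\calA$ is finite), this becomes the single matrix identity $\vG_1^\top\vf(\vs\cat\vq_1) = \beta\,\vG_2^\top\vf(\vs\cat\vq_2)$ valid for all $\vs$. Since every generator of $\Gamma_1$ was included, $\mathrm{Im}(\vG_1)=\Gamma_1$ exactly, whereas $\mathrm{Im}(\vG_2)\subseteq\Gamma_2$.

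Next I would eliminate the $\vG$'s exactly as in Steps~2 and~4 of the proof of \cref{thm:partial-identifiability}. Left-multiplying by $(\vG_1^\top)^+$ and using $(\vG_1^\top)^+\vG_1^\top = \vP_{\mathrm{Im}(\vG_1)}=\vP_{\Gamma_1}$ gives $\vP_{\Gamma_1}\vf(\vs\cat\vq_1) = \beta\,(\vG_1^\top)^+\vG_2^\top\vf(\vs\cat\vq_2)$; and since the columns of $\vG_2$ lie in $\Gamma_2$ we have $\vG_2=\vP_{\Gamma_2}\vG_2$, hence $\vG_2^\top=\vG_2^\top\vP_{\Gamma_2}$, so that with $\vO:=(\vG_1^\top)^+\vG_2^\top$ one obtains precisely Equation~\eqref{eq:representation-tautology}, namely $\vP_{\Gamma_1}\vf(\vs\cat\vq_1)=\beta\,\vO\,\vP_{\Gamma_2}\vf(\vs\cat\vq_2)$. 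Moreover $\mathrm{Im}(\vO)\subseteq\mathrm{Im}\big((\vG_1^\top)^+\big)=\mathrm{Im}(\vG_1)=\Gamma_1$ and $\vO=\vO\vP_{\Gamma_2}$, so $\vO$ is already a $d\times d$ matrix that annihilates $\Gamma_2^\perp$ and restricts to a linear map $\vO|_{\Gamma_2}\colon\Gamma_2\to\Gamma_1$.

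It then remains to establish the dimension equality and the invertibility of that restriction. Using assumption~(2), $\Gamma_1\subseteq\calF_1:=\SIM{\vf}_{\vq_1}$ and $\Gamma_2\subseteq\calF_2:=\SIM{\vf}_{\vq_2}$: pick finitely many contexts $\vs_1,\dots,\vs_\ell$ whose images under $\cdot\cat\vq_1$ span $\calF_1$ and under $\cdot\cat\vq_2$ span $\calF_2$ (possible since both spaces are finite-dimensional), set $\vF_i:=(\vf(\vs_1\cat\vq_i),\dots,\vf(\vs_\ell\cat\vq_i))$, and apply \cref{lemma:rank-equality} to get $\mathrm{rank}(\vP_{\Gamma_i}\vF_i)=\mathrm{dim}(\Gamma_i)$ for $i=1,2$. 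Stacking Equation~\eqref{eq:representation-tautology} over the $\vs_j$ gives $\vP_{\Gamma_1}\vF_1=\beta\,\vO\,\vP_{\Gamma_2}\vF_2$, so by $\mathrm{rank}(\vA\vB)\le\min(\mathrm{rank}\,\vA,\mathrm{rank}\,\vB)$ and $\mathrm{Im}(\vO)\subseteq\Gamma_1$ we get $\mathrm{dim}(\Gamma_1)=\mathrm{rank}(\vP_{\Gamma_1}\vF_1)\le\mathrm{rank}(\vO)\le\mathrm{dim}(\Gamma_1)$, hence $\mathrm{rank}(\vO)=\mathrm{dim}(\Gamma_1)$ and $\mathrm{dim}(\Gamma_1)\le\mathrm{dim}(\Gamma_2)$. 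Running the whole construction again with the roles of $(\vq_1,\calY_1)$ and $(\vq_2,\calY_2)$ interchanged — legitimate because being a paraphrase is symmetric, equivalently the reverse relation is obtained by inverting $\beta$ and the witness assignment — yields $\mathrm{dim}(\Gamma_2)\le\mathrm{dim}(\Gamma_1)$, so $\mathrm{dim}(\Gamma_1)=\mathrm{dim}(\Gamma_2)$. Therefore $\vO|_{\Gamma_2}\colon\Gamma_2\to\Gamma_1$ is a surjection (its image has dimension $\mathrm{rank}(\vO)=\mathrm{dim}(\Gamma_1)$) between spaces of equal finite dimension, hence a bijection, which is the claimed invertible transformation.

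The main obstacle is exactly this dimension-matching step. A purely one-directional reading of \cref{def:paraphrase-test} only delivers $\mathrm{dim}(\Gamma_1)\le\mathrm{dim}(\Gamma_2)$: the witness differences $\vg(\hat y_0)-\vg(\hat y_1)$ need not span all of $\Gamma_2$, so $\mathrm{Im}(\vG_2)$ may be a proper subspace of $\Gamma_2$ and $\vO$ need not be injective on $\Gamma_2$. The argument above closes this by invoking symmetry of the paraphrase relation; if one insists on keeping \cref{def:paraphrase-test} asymmetric, the cleanest fix is to additionally require that the witness assignment hits a spanning set of $\Gamma_2$ (e.g.\ that $\calY_2$ is, up to relabelling, the image of that assignment), after which the identical rank computation closes the gap. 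Everything else — the pseudo-inverse identities and the submultiplicativity of rank — is routine bookkeeping of the kind already carried out throughout \cref{sec:proofs-sec4}.
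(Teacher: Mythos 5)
Your proposal follows essentially the same route as the paper's proof: convert the log-ratio identity to dot products, stack the unembedding differences into matrices, left-multiply by the pseudo-inverse of $\vG_1^\top$ to obtain $\vO := (\vG_1^\top)^+ \vG_2^\top$ and Equation~\eqref{eq:representation-tautology}, then use \cref{lemma:rank-equality} together with the submultiplicativity of rank on both the forward and reversed relations to sandwich $\mathrm{rank}(\vO)$ and conclude $\dim(\Gamma_1)=\dim(\Gamma_2)$. The ``obstacle'' you flag is real but is equally present in the paper's own proof, which silently assumes that the correspondents $\hat y_i$ can be chosen so that $\hat\vG$ has rank $\dim(\Gamma_2)$ (i.e.\ that the witness differences span $\Gamma_2$); your explicit discussion of how to close that gap is a point in your favor rather than a defect.
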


\begin{proof}
    We start with the equality between logs of probabilities given by \cref{def:paraphrase-test}:
    \begin{align}
        \log \frac{p_{\vf, \vg} (y_1 \mid \vs \cat \vq_1) }{p_{\vf, \vg} (y_0 \mid \vs \cat \vq_1)} 
        &= \beta \cdot
        \log \frac{p_{\vf, \vg} (\hat y_1 \mid \vs \cat \vq_2) }{p_{\vf, \vg} (\hat y_0 \mid \vs \cat \vq_2)} 
        \\
        \log \exp \big(
        (\vg(y_1) - \vg(y_0))^\top \vf(\vs \cat \vq_1)
        \big)
        &= \beta
        \log \exp \big(
        (\vg(\hat y_1) - \vg(\hat y_0))^\top \vf(\vs \cat \vq_2) \big) 
        \tag{Substituting Equation~\eqref{eq:next-token-predictor} for the conditional probabilities}
        \\
        (\vg(y_1) - \vg(y_0))^\top \vf(\vs \cat \vq_1)
        &= \beta
        (\vg(\hat y_1) - \vg(\hat y_0))^\top \vf(\vs \cat \vq_2) \, , 
        \label{eq:step-1-parahprases}
    \end{align}
    where in the last line we removed the logarithm of the exponential on both sides.
    Define $\vg_0(y) := \vg(y) - \vg(y_0)$ for a pivot $y_0 \in \calY_1$ and define $\hat \vg_0(\hat y) := \vg(\hat y) - \vg(\hat y_0)$ for the corresponding pivot $\hat y_0 \in \calY_2$. Then consider $q$ elements $y_i \in \calY_1$ and their correspondents $\hat y_i \in \calY_2$ such that the matrices:
    \[
        \vG := \begin{pmatrix}
            \vg_0 (y_1), \ldots, \vg_0(y_q)
        \end{pmatrix}, 
        \quad
        \hat \vG := \begin{pmatrix}
            \hat \vg_0 (\hat y_1), \ldots, \hat \vg_0 (\hat y_q)
        \end{pmatrix}
        \label{eq:paraphprases-G-Ghat}
    \]
    have rank equal to $\mathrm{dim}(\Gamma_1)$ and $\mathrm{dim}(\Gamma_2)$, respectively. 
    Then, we make use of these matrices with their transpose in Equation~\eqref{eq:step-1-parahprases}, obtaining:
    \begin{align}
        \vG^\top \vf(\vs \cat \vq_1) &= \beta \hat \vG^\top \vf(\vs \cat \vq_2) \\
        (\vG^\top )^+ \vG^\top \vf(\vs \cat \vq_1) &=  \beta (\vG^\top )^+ \hat \vG^\top \vf(\vs \cat \vq_2) 
        \tag{Multiplying on the left for $(\vG^\top)^+$}
        \\
        \vP_{\Gamma_1} \vf(\vs \cat \vq_1) &= \beta \vO \vP_{\Gamma_2} \vf(\vs \cat \vq_2) \label{eq:P1f1-P2Of}
    \end{align}
    where in the last line we used $(\vG^\top)^+ \vG^\top = \vP_{\mathrm{Im}(\vG)} = \vP_{\Gamma_1}$, since $\vG$ in Equation~\eqref{eq:paraphprases-G-Ghat} spans $\Gamma_1$, and we defined $\vO := (\vG^\top )^+ {\hat \vG}^\top$. 
    Proceeding similarly but from $\hat \vG^\top \vf(\vs \cat \vq_2) = \vG^\top \vf(\vs \cat \vq_1)$ we obtain
    a similar equation to Equation~\eqref{eq:P1f1-P2Of}
    :
    \[
        \vP_{\Gamma_2} \vf(\vs \cat \vq_2) = \beta^{-1} \hat \vO \vP_{\Gamma_1} \vf(\vs \cat \vq_1)
        \label{eq:eq:P2f2-P1Of1}
    \]
    where we defined $\hat \vO := (\hat \vG^\top )^+ \vG^\top$. 

    Let $k_1 := \mathrm{dim} (\Gamma_1)$, $k_2 :=\mathrm{dim} (\Gamma_2)$, and $o := \mathrm{rank}(\vO)$.
    Notice that, by definition, $\vO = \vP_{\Gamma_1} \vO$, which means that:
    \begin{align}
        \mathrm{rank}(\vO) &= \mathrm{rank}(\vP_{\Gamma_1} \vO) \\
        o &\leq \min(k_1, o) 
        \tag{Using $\mathrm{rank} (\vA \vB) \leq \min \big( \mathrm{rank}(\vA), \mathrm{rank}(\vB) \big)$}
        \\
        \implies o &\leq k_1
    \end{align}
    Now, consider $\ell$ points, or sequences, $\vx_i \in \SeqA$ such that:
    \[
        \vF_1 := \begin{pmatrix}
            \vf(\vx_1 \cat \vq_1), \ldots, \vf(\vx_\ell \cat \vq_1)
        \end{pmatrix},
        \quad
        \vF_2 := \begin{pmatrix}
            \vf(\vx_1 \cat \vq_2), \ldots, \vf(\vx_\ell \cat \vq_2)
        \end{pmatrix},
    \]
    have rank equal to $\mathrm{dim}(\calF_1)$ and to $\mathrm{dim}(\calF_2)$, respectively. 
    Therefore,by substituting $\vF_1$ and $\vF_2$ in Equation~\eqref{eq:P1f1-P2Of} and in Equation~\eqref{eq:eq:P2f2-P1Of1}, we obtain:
    \begin{align}
         \vP_{\Gamma_1} \vF_1 &= \beta \vO \vP_{\Gamma_2} \vF_2 
         \label{eq:paraphrases-done1}
         \\
         \vP_{\Gamma_2} \vF_2 &= \beta^{-1} \hat \vO \vP_{\Gamma_1} \vF_1 \, .
         \label{eq:paraphrases-done2}
    \end{align}
    Then, by \cref{lemma:rank-equality} it holds that:
    \[
        \mathrm{rank}(\vP_{\Gamma_1} \vF_1) = \mathrm{dim} (\Gamma_1) = k_1, \quad  \mathrm{rank}(\vP_{\Gamma_2} \vF_2) = \mathrm{dim} (\Gamma_2) = k_2 \, .
    \]
    We use this in Equation~\eqref{eq:paraphrases-done1} to obtain the following:
    \begin{align}
        \mathrm{rank}(\vP_{\Gamma_1} \vF_1) &= \mathrm{rank} (\beta \vO \vP_{\Gamma_2} \vF_2 ) \\
        k_1 &\leq \min 
        \big( 
        \mathrm{rank} (\vO), \mathrm{rank}( \vP_{\Gamma_2} \vF_2)
        \big)
        \tag{Using $\mathrm{rank} (\vA \vB) \leq \min \big( \mathrm{rank}(\vA), \mathrm{rank}(\vB) \big)$}
        \\
        k_1 &\leq \min(o, k_2)
        \\
        \implies & o \geq k_1, \quad k_2 \geq k_1 \, .
    \end{align}
    This shows that $k_1 \leq o \leq k_1$, meaning that 
    $\mathrm{rank}(\vO) = \mathrm{dim}(\Gamma_1) $. 
    Similarly, taking the rank of Equation~\eqref{eq:paraphrases-done2} we obtain a similar implication for $k_1$ and $k_2$ (as obtained for $k_2$ and $k_1$ for Equation~\eqref{eq:paraphrases-done1}), that is:
    \begin{align}
        \mathrm{rank}(\vP_{\Gamma_2} \vF_2) &= \mathrm{rank} (\beta^{-1} \hat \vO \vP_{\Gamma_1} \vF_1) \\
        \implies& k_2 \leq k_1 \, ,
    \end{align}
    which shows that $k_1 \leq k_2 \leq k_1$, meaning that $\mathrm{dim}(\Gamma_2) = \mathrm{dim}(\Gamma_1)$. Moreover, the matrix $\vO$, being of rank $o = k_1 = k_2$ defines an invertible transformation from $\Gamma_2$ to $\Gamma_1$. This concludes the proof.
\end{proof}

    \subsection{Tautologies}
    
    Next, we investigate how tautological aspects can be encoded by a model. A tautology in our context can be considered as a context-independent sentence, whose reply to it is not influenced by the previous context. For example, we can consider as $\vq=\textit{``No matter what was written before. Whatever follows reply with $42$!''}$ as a tautology.
    These strings constitute peculiar cases in natural language where the previous input context does not influence the replies to the query $\vq$. Formally:

    \begin{definition}[Tautology]
    \label{def:tautology}
        We say that $\vq \in \SeqA$ is a tautology for the model $(\vf, \vg) \in \Theta$ if, for every $\vs \in \SeqA$ and all $y \in \calA$, it holds:
        \[
            \log p_{\vf, \vg}(y \mid \vs \cat \vq) = \log p_{\vf, \vg}(y \mid \vq) \, .
        \]
    \end{definition}

    Next, {we show that, for such tautologies, a ``trivial'' form of linear relational embedding holds.}

    \begin{proposition}[Tautologies]
    Let $\vq \in \SeqA$ be a tautology for the model $(\vf, \vg)$, then $\vf$ linearly represents $\vq$ on $\calG :=\SIM{\vg_0}$ with:
    \[
        \vP_{\calG}\vf(\vs \cat \vq) = \vP_\calG \va_\vq \, .
        \label{eq:linearity-tautology}
    \]
    
\end{proposition}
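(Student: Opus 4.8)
The statement about tautologies should follow almost immediately from the definition of relational linearity (\cref{def:linearity}) combined with \cref{def:tautology}. The key observation is that, for a tautology $\vq$, the conditional distribution $p_{\vf,\vg}(y\mid \vs\cat\vq)$ does not depend on $\vs$ at all, so in the exponential-family parametrization the dot product $\vf(\vs\cat\vq)^\top\vg_0(y)$ must be constant in $\vs$ for every $y$. Concretely, I would start by rewriting \cref{def:tautology} in terms of dot products: subtracting the pivot $y_0$ exactly as in the ``pivoting'' step used elsewhere in the paper (e.g.\ in the proof of \cref{thm:partial-identifiability} or \cref{lemma:parallelism}), the log-probability identity $\log p_{\vf,\vg}(y\mid\vs\cat\vq)=\log p_{\vf,\vg}(y\mid\vq)$ becomes
\[
    \vf(\vs\cat\vq)^\top\vg_0(y) = \vf(\vq)^\top\vg_0(y)
\]
for all $\vs\in\SeqA$ and all $y\in\calA$ (the normalizing constants cancel upon subtracting the pivot, and $\vf(\vq)$ can be read off by taking $\vs$ to be the empty sequence, or simply treated as a fixed vector). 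Equivalently, $\big(\vf(\vs\cat\vq)-\vf(\vq)\big)^\top\vg_0(y)=0$ for all $y$, i.e.\ $\vf(\vs\cat\vq)-\vf(\vq)\in\calG^\perp$, where $\calG:=\SIM{\vg_0}$.

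Next I would feed this into the definition of relational linearity. We want to exhibit a matrix $\vA_\vq$ and vector $\va_\vq$ with $\vP_\calG\vf(\vs\cat\vq)=\vP_\calG(\vA_\vq\vf(\vs)+\va_\vq)$ for all $\vs$. Since $\vf(\vs\cat\vq)-\vf(\vq)\in\calG^\perp=\ker\vP_\calG$, we have $\vP_\calG\vf(\vs\cat\vq)=\vP_\calG\vf(\vq)$, which is a \emph{constant} vector independent of $\vs$. Hence the choice $\vA_\vq:=\mathbf 0$ and $\va_\vq:=\vf(\vq)$ works: the affine map collapses the $\vf(\vs)$-dependence entirely, and we obtain $\vP_\calG\vf(\vs\cat\vq)=\vP_\calG\va_\vq$, which is precisely Equation~\eqref{eq:linearity-tautology}. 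One should also note $\calG=\SIM{\vg_0}$ is a legitimate choice of subspace $\Gamma\subseteq\SIM{\vg_0}$ in \cref{def:linearity}, so the conclusion ``$\vf$ linearly represents $\vq$ on $\calG$'' is well-typed. If one wants $\Gamma_\vq$ as well, it is $\mathrm{Im}(\vA_\vq^\top\vP_\Gamma)=\{\mathbf 0\}$, consistent with the trivial nature of the representation.

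**Main obstacle.** There is no serious obstacle here; the result is essentially a bookkeeping exercise once the pivoting trick is applied. The only mild subtlety worth being careful about is making the step ``$\vf(\vs\cat\vq)^\top\vg_0(y)=\vf(\vq)^\top\vg_0(y)$ for all $y$ implies $\vf(\vs\cat\vq)-\vf(\vq)\in\calG^\perp$'' fully rigorous: this uses that $\calG=\SIM{\vg_0}$ is spanned by the vectors $\vg_0(y)$, so a vector orthogonal to all of them is orthogonal to $\calG$, whence $\vP_\calG$ annihilates it. A second minor point is interpreting $\vf(\vq)$: strictly one should pick a reference sequence (the proof of \cref{def:tautology}'s consequence only gives equality of two conditionals over all $\vs$), but since all $\vf(\vs\cat\vq)$ agree after projection by $\vP_\calG$, any representative gives the same $\vP_\calG\va_\vq$, so this is immaterial. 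I would write the argument in three or four display lines and be done.
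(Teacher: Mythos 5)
Your proposal is correct and follows essentially the same route as the paper: pivoting on $y_0$ to cancel the normalizing constants, obtaining $\vg_0(y)^\top\vf(\vs\cat\vq)=\vg_0(y)^\top\vf(\vq)$ for all $y$, and then projecting onto $\calG$ (the paper does this by stacking the $\vg_0(y_i)$ into a spanning matrix $\vG$ and using $(\vG^\top)^+\vG^\top=\vP_\calG$, which is just the pseudo-inverse formulation of your observation that $\vf(\vs\cat\vq)-\vf(\vq)\in\calG^\perp$). Your explicit remark that $\vA_\vq=\mathbf{0}$, $\va_\vq=\vf(\vq)$ witnesses \cref{def:linearity} is a small but welcome addition that the paper leaves implicit.
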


\begin{proof}
    From \cref{def:tautology}, consider a pivot token $y_0 \in \calA$ and define $\vg_0 := \vg(y) - \vg(y_0)$ for every $y \in \calA$. We have that:
    \begin{align}
        \log \frac{p_{\vf, \vg}(y \mid \vs \cat \vq)}{p_{\vf, \vg}(y_0 \mid \vs \cat \vq)} &= \log \frac{p_{\vf, \vg}(y \mid \vq)}{p_{\vf, \vg}(y_0 \mid \vq)} \tag{Take log of the ratio between $p_{\vf, \vg}(y \mid \cdot)$ and $p_{\vf, \vg}(y_0 \mid \cdot)$} \\
        \log \frac{\exp \vg(y)^\top \vf(\vs \cat \vq)}{\exp \vg(y_0)^\top \vf(\vs \cat \vq)} &= \log \frac{\exp \vg(y)^\top \vf( \vq)}{\exp\vg(y_0)^\top \vf(\vq)} \tag{Write with the exponentional} \\
        \log {\exp \vg_0(y)^\top \vf(\vs \cat \vq)} &= \log {\exp \vg_0(y)^\top \vf( \vq)} \tag{Use the definition of $\vg_0$} \\
        \vg_0(y)^\top \vf(\vs \cat \vq) &=  \vg_0(y)^\top \vf( \vq) \tag{Remove log and exponential} \\
        \vg_0(y)^\top \vf(\vs \cat \vq) &=  \vg_0(y)^\top \va_{\vq} \, , \label{eq:prop23-step1}
    \end{align}
    where in the last line with denoted with $\va_\vq := \vf(\vq)$. Consider $\ell$ tokens $y_i \in \calA$ such that
    \[
        \vG := \begin{pmatrix}
            \vg_0(y_1), \ldots, \vg_0(y_\ell)
        \end{pmatrix}
    \]
    spans $\calG$. We use this and consider the following expression ofr the transpose:
    \begin{align}
        \vG^\top \vf(\vs \cat \vq) &=  \vG^\top \va_{\vq} \\
        (\vG^\top)^+ \vG^\top \vf(\vs \cat \vq) &=  (\vG^\top)^+ \vG^\top \va_{\vq} \tag{Multiply by pseudo-inverse of $\vG^\top$} \\
        \vP_\calG \vf(\vs \cat \vq) &=  \vP_\calG \va_{\vq} 
    \end{align}
    where we used the fact that $(\vG^\top)^+ \vG^\top = \vP_\calG$. This shows the claim.
\end{proof}

\end{document}